\title[Universal Online Learning: an Optimistically Universal Learning Rule]{Universal Online Learning: an Optimistically Universal Learning Rule}
\newcommand{\comment}[1]{}
\newcommand{\suol}{\text{SUOL}}
\newcommand{\wuol}{\text{WUOL}}
\newcommand{\crf}{\text{CRF}}
\newcommand{\sual}{\text{SUAL}}
\newcommand{\smv}{{\text{SMV}}}
\newcommand{\wsmv}{{\text{WSMV}}}
\newcommand{\Acal}{\mathcal{A}}
\newcommand{\Bcal}{\mathcal{B}}
\newcommand{\Dcal}{\mathcal{D}}
\newcommand{\Ecal}{\mathcal{E}}
\newcommand{\Fcal}{\mathcal{F}}
\newcommand{\Gcal}{\mathcal{G}}
\newcommand{\Lcal}{\mathcal{L}}
\newcommand{\Pcal}{\mathcal{P}}
\newcommand{\Scal}{\mathcal{S}}
\newcommand{\Tcal}{\mathcal{T}}
\newcommand{\Ucal}{\mathcal{U}}
\newcommand{\Vcal}{\mathcal{V}}
\newcommand{\Xcal}{\mathcal{X}}
\newcommand{\Ycal}{\mathcal{Y}}
\newcommand{\Ocal}{\mathcal{O}}
\newcommand{\Qcal}{\mathcal{Q}}
\newcommand{\Ebb}{\mathbb{E}}
\newcommand{\Nbb}{\mathbb{N}}
\newcommand{\Pbb}{\mathbb{P}}
\newcommand{\Rbb}{\mathbb{R}}
\newcommand{\Xbb}{\mathbb{X}}
\newcommand{\Ybb}{\mathbb{Y}}
\newcommand{\1}{\mathbbm{1}}
\definecolor{dark_red}{rgb}{0.2,0,0}
\newcommand{\mb}[1]{\ensuremath{\boldsymbol{#1}}}
\renewenvironment{proof}[1][]{\par\noindent{\bf Proof #1\ }}{\hfill\BlackBox\\[2mm]}
\begin{document}

\maketitle

\begin{abstract}%
\comment{We study the subject of optimistic universal online learning with non-i.i.d. processes. The notion of \emph{universal consistent learning} was defined by Hanneke \cite{hanneke2021learning} in an effort to study learning theory under minimal assumptions, where the objective is to obtain low long-run average loss for any target function. We are interested in characterizing processes for which learning is possible and whether there exist learning rules guaranteed to be universally consistent given the \emph{only} assumption that such learning is \emph{possible}. In the case where the loss function is unbounded, \cite{hanneke2021learning} gave a characterization of learnable processes and proved the existence of such learning rules. We give a drastically simpler formulation of this characterization which shows that the simple \emph{memorization} learning rule is adequate for this context. When the loss function is bounded, we show that learnable processes are invariant from the choice of output space which allows to restrict the analysis to the binary classification setting. Finally, we give a full characterization of processes for which learning is possible in the bounded loss setting and show that a variant from the 1-nearest neighbor algorithm is optimistically universal. This result holds for general separable input Borel space and separable output space with bounded loss. This closes the four main open problems posed in \cite{hanneke2021open} on universal learning.}

We study the subject of universal online learning with non-i.i.d. processes for bounded losses. The notion of an \emph{universally consistent learning} was defined by Hanneke \cite{hanneke2021learning} in an effort to study learning theory under minimal assumptions, where the objective is to obtain low long-run average loss for any target function. We are interested in characterizing processes for which learning is possible and whether there exist learning rules guaranteed to be universally consistent given the \emph{only} assumption that such learning is \emph{possible}. The case of unbounded losses is very restrictive, since the learnable processes almost surely visit a finite number of points and as a result, simple memorization is optimistically universal \cite{hanneke2021learning,blanchard2022universal}. We focus on the bounded setting and give a complete characterization of the processes admitting strong and weak universal learning. We further show that k-nearest neighbor algorithm (kNN) is not optimistically universal and present a novel variant of 1NN which is optimistically universal for general input and value spaces in both strong and weak setting. This closes all the COLT 2021 open problems posed in \cite{hanneke2021open} on universal online learning.
\end{abstract}

\begin{keywords}%
  online learning, universal consistency, stochastic processes, measurable partitions, statistical learning theory, Borel measure
\end{keywords}

\section{Introduction}\label{sec:introduction}
We consider the fundamental question of learnability and generalizability for online learning. In this framework, a learner is sequentially given input points $\Xbb:=(X_t)_{t\geq 0}$ from a general separable metric \emph{instance space} $(\Xcal,\rho)$ and observes the corresponding values $\Ybb:=(Y_t)_{t\geq 0}$ from a separable near-metric \emph{value space} $(\Ycal,\ell)$. The learner's goal is to predict the values before their observation. The input points are given according to some stochastic process $\Xbb$ on $\Xcal$ and we assume that the process $\Ybb$ is generated from $\Xbb$ in a \emph{noiseless} fashion i.e. that there exists an unknown measurable function $f^*:\Xcal\to\Ycal$ such that $Y_t=f^*(X_t)$ for all $t\geq 0$. At time step $t$, the learner outputs a prediction $\hat Y_t$ based solely on the historical data $(X_u,Y_u)_{u<t}$ and the new input point $X_t$. We wish to obtain low long-run average errors $\frac{1}{t}\sum_{u\leq t}\ell(Y_u,\hat Y_u)$. Specifically we consider two types of consistency: strong consistency is achieved when the average error converges to $0$ almost surely; and weak consistency is achieved when the expected average error converges to $0$. We are interested in \emph{universal} online learning, in which we ask for consistency for any unknown measurable target function $f^*$. In this framework, the two main questions are 1. to characterize the input processes $\Xbb$ for which universal consistency is achievable and 2. if possible, provide a learning rule which would guarantee universal consistency whenever such objective is achievable.

\paragraph{Motivation and related work.}
This work builds upon the stream of papers on universal online learning \cite{hanneke2021learning,blanchard2022universal,blanchard2021universal}, which aims to study the question of \emph{learnability} under minimal assumptions. A classical objective in statistical learning is to provide learning rules with guarantees for some large class of problem instances. In general it is not possible to be consistent under all stochastic processes $\Xbb$ and target functions $f^*$. Therefore, it is necessary to impose instance constraints. In the literature, there is a rich variety on the types of proposed restrictions. A first category of works do not restrict the input sequences $\Xbb$ but instead the target functions $f^*$ \cite{littlestone1988learning,cesa2006prediction,ben2009agnostic,rakhlin2015online}. A large portion of the literature belongs to a second category which restricts both input process and target functions. For instance, if we assume that the input process is independent identically distributed (i.i.d.) and that the target function belongs to a class of finite VC dimension, there exist an algorithm guaranteeing $O(\log T)$ mistakes in expectation \cite{haussler1994predicting}. Other more involved restrictions on $\Xbb$ and $f^*$ have been considered \cite{kulkarni2002data,ryabko2006pattern,urner2013probabilistic,bousquet2021theory}. The subject of this paper is of a third category, in which we impose no assumptions on the set of target functions $f^*$, but instead restrict the input sequences $\Xbb$. Specifically, we focus on \emph{universally consistent} algorithms i.e. which achieve consistency for all target functions.

Most of the literature on universal learning considers standard \emph{ad-hoc} probabilistic assumptions on the input stochastic process, for instance assuming that the training samples are i.i.d. A classic result in this i.i.d. setting shows that in the Euclidian space, the 1-nearest neighbor rule is universally consistent \cite{cover1967nearest,stone1977consistent,devroye2013probabilistic}. The $k$-nearest neighbor rule with $k/\log T\to \infty$ and $k/T\to 0$ is also consistent under mild assumptions in the noisy setting where $(\Xbb,\Ybb)$ is any i.i.d. process \cite{stone1977consistent,devroye1994strong}. More recently, \cite{hanneke2021bayes,tsir2022medoid} proposed algorithms which achieve minimal risk for i.i.d. process $(\Xbb,\Ybb)$ in general metric spaces under mild hypothesis---this setting is referred to as universal Bayes consistency. Other similar assumptions on the input process $\Xbb$ include stationary ergodic \cite{morvai1996nonparametric,gyorfi1999simple,gyofi2002strategies} or satisfying the law of large numbers \cite{morvai1999regression,gray2009probability,steinwart2009learning}. Instead, we are interested in provably-minimal assumptions rooted in the learning problem itself. Specifically, we follow the so-called optimist's decision theory introduced by \citet{hanneke2021learning} and frequent in universal learning \cite{tsir2022medoid}: in order to achieve a given objective, the optimist's sole assumption is that this objective is at least achievable by some learning rule. In some sense, this assumption is minimal as it is necessary for any algorithm to have any positive guarantees. In this framework, we are particularly interested in algorithms which would reach the objective without further assumptions. These are named \emph{optimistically universal} learning rules. Such algorithms enjoy the convenient property that if they fail for a particular problem instance, any other learning rule would fail as well. In our case, we are interested in the set of learnable processes $\Xbb$ i.e. for which universal consistency is possible and aim to provide optimistically universal algorithms if they exist i.e. learning rules which are universally consistent on all processes $\Xbb$ for which universal consistency is achievable.

In the case of \emph{unbounded} losses $\ell$, these questions are settled \cite{hanneke2021learning,blanchard2022universal}. Precisely, the learnable processes are exactly the sequences visiting a finite number of input points almost surely, and as a result, the simple memorization is optimistically universal. Hence, universal learning with unbounded losses is very restrictive. In this paper, we focus on the bounded loss case for which it is known that i.i.d. and convergent relative frequencies processes are learnable \cite{hanneke2021learning}. Recently, \cite{blanchard2021universal} provided a reduction from any general bounded output setting $(\Ycal,\ell)$ to binary classification.

\paragraph{Contributions.}We propose a class of learning rule $k$C1NN for $k\geq 2$,  which we prove are strongly and weakly optimistically universal for general separable metric instance spaces $(\Xcal,\rho)$ and separable near-metric value spaces $(\Ycal,\ell)$ with bounded loss. These learning rule are simple variants of the classical 1-nearest neighbor (1NN). They essentially performs 1NN on a restricted dataset by deleting any input point from the historical dataset whenever it has been used as nearest neighbor at least $k$ times. We further show that any $(k_n)_n-$nearest neighbor fails to be optimistically universal under very mild conditions on the sequence $(k_n)_n$. Finally, we give a complete characterization of processes admitting strong and weak universal learning. This closes all main questions on universal online learning, which are stated as open problems in \cite{hanneke2021open}.

\paragraph{Outline of the paper.} The rest of this paper is organized as follows. In the next Section \ref{sec:formal_setup} we formally introduce universal learning and present the two main questions of this topic. The main results are then stated in Section \ref{sec:main_results}. In Section \ref{sec:nearest_neighbor} we focus on nearest neighbor learning rules and show that they are not universally consistent. We then construct a new class learning rule in Section \ref{sec:2C1NN}. For the sake of simplicity and exposition, we prove their strong optimistically universal consistence starting with the case $\Xcal=[0,1]$ and for $k\geq 4$. This allows to obtain the results for all standard Borel spaces and most importantly provides useful intuitions on the general case. We generalize the proof to all separable Borel spaces in Section \ref{sec:borel_spaces} then turn to weak universal learning in Section \ref{sec:weak_learning}. Finally, we give open research directions in Section \ref{sec:conclusion}.

\section{Formal setup and preliminaries}
\label{sec:formal_setup}

\paragraph{Instance and value space.} In this paper, we follow the general framework of online learning where one observes an input sequence $\Xbb = (X_t)_{t\geq 1}$ of points in a separable metric \textit{instance space} $(\Xcal,\rho)$, together with their corresponding target values $\Ybb = (Y_t)_{t\geq 1}$ coming from a separable near-metric \textit{value space} $(\Ycal,\ell)$. The loss $\ell:\Ycal^2\to[0,\infty)$ is said to be a near metric if it is symmetric $\ell(y_1,y_2) =\ell(y_2,y_1)$, discernable $\ell(y_1,y_2)=0$ if and only if $y_1=y_2$, and satisfies a relaxed triangle inequality $\forall y_1, y_2, y_3\in \Ycal^3: \ell(y_1,y_3)\leq c_{\ell} (\ell(y_2,y_1)+\ell(y_2,y_3))$, where $c_{\ell}$ is a fixed constant. Note that all metrics are near-metrics with $c_\ell=1$. As an important example for regression, the squared loss is near-metric with $c_{\ell} = 2$. We denote by $\bar \ell:=\sup_{y_1,y_2 \in  \Ycal} \ell(y_1,y_2) $ the loss function supremum and will be particularly interested in \emph{bounded} losses i.e $\bar \ell<\infty$. 

\paragraph{Input and output processes.} In an effort to study non-i.i.d. processes, the input sequence of points is a general stochastic process on the Borel space $(\Xcal,\Bcal)$ induced by metric $\rho$. This is a major difference with a majority of the statistical learning literature which often imposes ad-hoc hypothesis on $\Xbb$ as discussed in Section \ref{sec:introduction}. We consider a noiseless setting in which the output values $\Ybb$ are generated from $\Xbb$ through an unknown measurable function $f^*:\Xcal\to\Ycal$ such that $Y_t = f^*(X_t)$ for all $t\geq 1$.

\paragraph{Online learning and consistency.} In \emph{online} learning, the learning process is sequential: at time $t\geq 1$, one observes a new input data-point $X_t$ and outputs a prediction $\hat Y_t$ based solely on the historical data $(\Xbb_{\leq t-1},\Ybb_{\leq t-1})$ and the new covariate $X_t$. We measure the performance of the learning rule through the loss function $\ell$. \emph{Strong} consistency is achieved when the algorithm obtains asymptotic average loss $0$ almost surely. Alternatively, a learning rule is \emph{weakly} consistent when it guarantees $0$ asymptotic average loss in expectation. We now formally write these notions. A learning rule is a sequence $f_\cdot=\{f_t\}_{t=1}^{\infty}$ of measurable functions with $f_1:\Xcal\to\Ycal$ and $f_t:\Xcal^{t-1}\times \Ycal^{t-1}\times\Xcal \rightarrow \Ycal$ for $t\geq 2$. Given a history $(X_i,Y_i)_{i<t}$ and a new input point $X_t$, the rule $f_\cdot$ makes the prediction $f_t(\Xbb_{< t}, \Ybb_{< t}, X_t)$ for $Y_t$ and $t\geq 2$. For simplicity, for $t=1$ we may also use the notation $f_1(\Xbb_{< 1}, \Ybb_{< 1}, X_1)$ instead of $f_1(X_t)$. We write the average loss at time $T$ as
\begin{equation*}
    \Lcal_\Xbb(f_\cdot,f^*;T):=\frac{1}{T}\sum_{t=1}^{T}\ell(f_t(\Xbb_{< t},\Ybb_{< t}, X_{t}), f^*(X_t)).
\end{equation*}
We aim to this minimize the long-run average loss. The online learning rule $f_\cdot$ is strongly consistent under the input process $\Xbb$ and for the target function $f^*$ when $\Lcal_{\Xbb}(f_{.},f^*;T)\to 0 ~~(a.s.)$. For simplicity, we define $\Lcal_{\Xbb}(f_{.},f^*) = \limsup_{T\to\infty} \Lcal_\Xbb(f_\cdot,f^*;T)$. Therefore, the above condition can be rewritten as $\Lcal_\Xbb(f_\cdot,f^*)=0~~(a.s.)$. We also consider weak learning: similarly, $f_\cdot$ is weakly consistent under $\Xbb$ and for $f^*$ when $\Ebb \Lcal_\Xbb(f_\cdot,f^*;T)\to 0$.

\paragraph{Universal consistency and optimistically universal learning rule.}
Following \cite{hanneke2021learning}, we are interested in learning rules which achieve strong (resp. weak) consistency under a specific input sequence $\Xbb$ for all measurable target functions $f^*:\Xcal\to\Ycal$. Such learning rules are said to be strongly (resp. weakly) \emph{universally consistent} under $\Xbb$. We define $\suol$ the set of all stochastic processes $\Xbb$ for which strong universal online learning is achievable by some learning rule. Similarly, we denote by $\wuol$ the set of all processes $\Xbb$ that admit weak universal online learning. These sets may depend on the setup $(\Xcal,\rho), (\Ycal,\ell)$ so we will specify $\suol_{(\Xcal,\rho),(\Ycal, \ell)}$ and $\wuol_{(\Xcal,\rho),(\Ycal, \ell)}$ when the spaces are not clear from context. In this framework, two main areas of research are (1) characterizing the sets $\suol$ (resp. $\wuol$) for a given setup in terms of the properties of the stochastic process $\Xbb$, and (2) identifying learning rules which are strongly (resp. weakly) universally consistent for any input process $\Xbb$ in $\suol$ (resp. $\wuol$), i.e. that achieve strong (resp. weak) universal consistency whenever it is possible. These are called \emph{optimistically universal} learning rules. In the case of \emph{unbounded} loss functions i.e. $\bar\ell=\infty$, both questions are answered for any choice of $(\Xcal,\rho),(\Ycal,\ell)$ for strong universal consistency \cite{hanneke2021learning,blanchard2022universal}. Specifically, \cite{blanchard2022universal} shows the stochastic processes $\Xbb$ which admit strong universal online learning are exactly those which visit a \emph{finite} number of distinct input points of $\Xcal$ almost surely. As a consequence, the simple memorization learning rule is optimistically universal. Further, for unbounded losses, strong and weak universal learning are equivalent \cite{hanneke2021learning}. These results are rather negative in the sense that unbounded loss results in a very restricted set $\suol$.

\paragraph{Bounded loss.} The present paper will therefore focus on the \emph{bounded} loss case i.e. $\bar \ell<\infty$, for which both questions are open. This is the main case of interest of universal online learning. Contrary to the unbounded case, for bounded losses the set of learnable processes $\suol$ contains in particular all i.i.d. processes \cite{devroye1994strong,devroye2013probabilistic}. In fact, the simple 1-nearest neighbor learning rule achieves strong (and weak) universal consistency for all i.i.d. processes $\Xbb$. But it is an open question whether 1-nearest neighbor (1NN) is optimistically universal. In other terms, does there exist an input process $\Xbb$ such that 1NN fails to achieve consistency for some target function $f^*$ but universal consistency would still be achieved by some other---more sophisticated---learning rule? No characterization of $\suol$ is known either, although \cite{hanneke2021learning} proposed a necessary condition for belonging to $\suol$ and conjectured that it is also sufficient. We refer to this conditions as $\smv$ (sub-linear measurable visits). Intuitively, it asks that for any measurable partition of the input space $\Xcal$, the process $\Xbb$ only visits a sublinear number of its regions. Note that this condition does not depend on the choice of output setup $(\Ycal,\ell)$. \\

\noindent{\bf Condition \smv}~~ {\it Define the set ${\normalfont\smv_{(\Xcal,\rho)}}$ as the set of all processes $\Xbb$ satisfying the condition that, for every disjoint sequence $\{A_k\}_{k=1}^{\infty}$ in $\Bcal$ with $\cup_{k=1}^{\infty}A_k=\Xcal$ (i.e., every countable measurable partition), $|\{k\in \mathbb N: A_k\cap \mathbb{X}_{<T} \neq \emptyset \}| = o(T) \quad (a.s).$}\\

\noindent For the weak setting we can define a similar condition $\wsmv$ (weak sub-linear measurable visits).\\

\noindent{\bf Condition \wsmv}~~ {\it Define the set ${\normalfont\wsmv_{(\Xcal,\rho)}}$ as the set of all processes $\Xbb$ satisfying the condition that, for every countable measurable partition $\{A_k\}_{k=1}^\infty$, $\mathbb E[|\{k\in \mathbb N: A_k\cap \mathbb{X}_{<T} \neq \emptyset \}|] = o(T).$}\\

\noindent \citet{hanneke2021learning} showed that these conditions are necessary for strong and weak universal learning.
\begin{proposition}[\citet{hanneke2021learning}]
\label{prop:smv_necessary}
For any separable Borel space $\Xcal$ and separable near-metric output setting $(\Ycal,\ell)$ with $0<\bar \ell<\infty$ we have $\suol_{(\Xcal,\rho),(\Ycal,\ell)}\subset\smv_{(\Xcal,\rho)}$ and $\wuol_{(\Xcal,\rho),(\Ycal,\ell)}\subset\wsmv_{(\Xcal,\rho)}$.
\end{proposition}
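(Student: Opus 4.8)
We prove the contrapositive in each case. Assume $\Xbb\notin\smv_{(\Xcal,\rho)}$, and fix a countable measurable partition $\{A_k\}_{k\geq 1}$ of $\Xcal$ witnessing this, so that with $N_T:=|\{k:A_k\cap\Xbb_{<T}\neq\emptyset\}|$ it is \emph{not} the case that $N_T=o(T)$ almost surely. Taking a countable union over rational thresholds, there is $\epsilon>0$ and an event $E$ with $p:=\Pb(E)>0$ on which $N_T\geq \epsilon T$ for infinitely many $T$; on $E$ this produces a $\Xbb$-measurable subsequence $(T_j)_j$ with $N_{T_j}\geq \epsilon T_j$. The combinatorial core is that each distinct region visited by $\Xbb_{<T}$ is first entered at a unique time step: calling $t$ a \emph{first visit} when the region of $X_t$ contains none of $X_1,\dots,X_{t-1}$, and writing $B_t$ for its indicator, we have $\sum_{t<T}B_t=N_T$.

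I would then build a randomized adversarial target. Since $\bar\ell>0$, pick $y_0\neq y_1$ in $\Ycal$; by discernibility $\ell(y_0,y_1)>0$, and the relaxed triangle inequality gives $\ell(z,y_0)+\ell(z,y_1)\geq \ell(y_0,y_1)/c_{\ell}$ for every $z\in\Ycal$. Let $(V_k)_{k\geq 1}$ be i.i.d.\ uniform on $\{y_0,y_1\}$, independent of $\Xbb$, and set $f^*:=\sum_k \one_{A_k}V_k$, a random measurable function. Fix any learning rule $f_\cdot$, with predictions $\hat Y_t=f_t(\Xbb_{<t},\Ybb_{<t},X_t)$ under this $f^*$. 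Conditionally on $\Xbb$, on the event $\{B_t=1\}$ the label $f^*(X_t)$ is one of the $V_k$'s not appearing in $\Ybb_{<t}$, hence independent of $\Ybb_{<t}$ and therefore of $\hat Y_t$; consequently
\[
\Ebb\big[\ell(\hat Y_t,f^*(X_t))\,\one_{B_t=1}\,\big|\,\Xbb\big]\;\geq\;\tfrac12\cdot\tfrac{\ell(y_0,y_1)}{c_{\ell}}\,\one_{B_t=1}\;=:\;c\,\one_{B_t=1},
\]
where the expectation is over $(V_k)_k$. Summing over $t\leq T$ and using $\sum_{t<T}B_t=N_T$ yields $\Ebb[\Lcal_\Xbb(f_\cdot,f^*;T)\mid\Xbb]\geq cN_T/T$.

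Finally I would pass to a deterministic target. Since $\Lcal_\Xbb(f_\cdot,f^*;T)\leq\bar\ell<\infty$, reverse Fatou over $(V_k)_k$ (conditionally on $\Xbb$), applied along $(T_j)$, gives on $E$ that $\Ebb[\Lcal_\Xbb(f_\cdot,f^*)\mid\Xbb]\geq\limsup_j \Ebb[\Lcal_\Xbb(f_\cdot,f^*;T_j)\mid\Xbb]\geq c\epsilon$. Integrating over $\Xbb$ (valid by Tonelli, the integrand being nonnegative and jointly measurable) gives $\Ebb[\Lcal_\Xbb(f_\cdot,f^*)]\geq c\epsilon p>0$ over the joint law of $\Xbb$ and $(V_k)_k$, so by Fubini some realization of $(V_k)_k$ — i.e.\ a deterministic measurable $f^*:\Xcal\to\Ycal$ of the above form — satisfies $\Ebb_\Xbb[\Lcal_\Xbb(f_\cdot,f^*)]>0$, whence $f_\cdot$ is not strongly consistent under $\Xbb$ for this $f^*$. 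As $f_\cdot$ was arbitrary, $\Xbb\notin\suol_{(\Xcal,\rho),(\Ycal,\ell)}$. The weak statement is identical: $\Xbb\notin\wsmv$ gives $\limsup_T\Ebb[N_T]/T>0$; averaging $\Ebb[\Lcal_\Xbb(f_\cdot,f^*;T)\mid\Xbb]\geq cN_T/T$ over $\Xbb$ and then applying reverse Fatou over $(V_k)_k$ to the bounded sequence $T\mapsto\Ebb_\Xbb[\Lcal_\Xbb(f_\cdot,f^*;T)]$ exhibits a deterministic $f^*$ with $\limsup_T\Ebb[\Lcal_\Xbb(f_\cdot,f^*;T)]>0$, i.e.\ weak inconsistency.

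The combinatorial heart (many visited regions $\Rightarrow$ linearly many first-visit steps, each incurring $\Omega(1)$ expected loss against a uniformly random region-labeling) is immediate; I expect the only delicate point to be the measure-theoretic bookkeeping — carefully isolating the conditional independence of $f^*(X_t)$ from $\hat Y_t$ at first-visit steps, and the reverse-Fatou/Fubini passage from the randomized adversary to a single deterministic bad target — where boundedness of $\ell$ is used (essentially only in the reverse Fatou step).
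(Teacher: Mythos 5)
Your argument is correct. The paper does not actually prove this proposition — it imports it from \cite{hanneke2021learning} — and your proof is essentially the standard argument from that reference: label the cells of a witnessing partition with i.i.d.\ uniform values in $\{y_0,y_1\}$, observe that at each first-visit time the new cell's label is conditionally independent of the learner's prediction so the conditional expected loss is at least $\ell(y_0,y_1)/(2c_\ell)$ per such time, sum to get $c\,N_T/T$, and use reverse Fatou plus Fubini to extract, for each fixed learning rule, a deterministic measurable target on which it fails. The only cosmetic point is that $\limsup_T N_T/T\ge\epsilon$ on $E$ yields $N_T\ge(\epsilon/2)T$ infinitely often rather than $N_T\ge\epsilon T$; this changes nothing.
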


\noindent However, it is an open question whether $\smv$ (resp. $\wsmv$) is also a sufficient condition for strong (resp. weak) universal learning. Together with the question of the existence of an optimistically universal learning rule, these are the main objectives for universal online learning. These questions are posed in the COLT 2021 open problems \cite{hanneke2021open}, which we now formally restate.

\paragraph{Hanneke's \$5000 open problem 1 \cite{hanneke2021open}} \textit{Does there exist an optimistically universal online learning algorithm? (in
either the weak or strong sense)}

\paragraph{Hanneke's \$1000 open problem 2 \cite{hanneke2021open}} \textit{Is $\smv$ (resp. $\wsmv$) equal to the set of all $\Xbb$ such that strong (resp. weak) universal online learning is possible under $\Xbb$?}\\

It is important to note that these questions are easily solved in the case where $\Xcal$ is countable \cite{hanneke2021learning}. Therefore, a main interest is to answer these questions for \emph{any} uncountable $\Xcal$. In fact, \citet{hanneke2021open} even announced a \$5000 (resp. \$1000) reward for solving open problem 1 (resp. 2) for the Euclidean $\Xcal=\Rbb^d$ case. Both questions will be solved in Section \ref{sec:2C1NN} for $\Xcal=[0,1]$. This is in fact a rather general case because its extension to all standard Borel spaces $\Xcal$ is immediate through an equivalence result from Kuratowski of all uncountable standard Borel spaces. For instance, this solves the question for all Euclidean spaces $\Rbb^d$ for $d\geq 1$. Most importantly, the special case $\Xcal=[0,1]$ allows for a simplified exposition and provides all useful intuitions. To further simplify the proof we focus on the rule $k$C1NN with $k\geq 4$ for $\Xcal=[0,1]$ but the generalization to separable Borel spaces presented in Section \ref{sec:borel_spaces} works for any rule $k$C1NN with $k\geq 2$.

\paragraph{Notations.}For any sequence $\mb x$, we will use the following notations when analyzing finite time horizons: ${\mb x}_{\leq t} := \{X_1,...,X_t\}$ and ${\mb x}_{< t} := \{X_1,...,X_{t-1}\} $ for simplicity. For a metric space $(\Xcal,\rho)$, a point $x\in \Xcal$ and $r\geq 0$, we denote by $B_\rho(x,r) := \{x'\in \Xcal,\; \rho(x,x')<r\}$ the open ball centered in $x$ of radius $r$, and $S_\rho(x,r) = \{x'\in \Xcal,\; \rho(x,x')=r\}$ the sphere centered in $x$ of radius $r$. We might omit the metric $\rho$ in subscript if there is no ambiguity. We also denote by $\ell_{01}$ the indicator loss function i.e. $\ell_{01}(i,j)=\1({i\neq j})$. Since it is a metric, it is also a near-metric with $c_\ell=1$. For simplicity, we will use the same notation $\ell_{01}$ irrespective of the output space $\Ycal$. For any measurable set $A$, we denote by $\1_A$ the function $\1_A(\cdot):=\1_{\cdot\in A}$. We will denote by $|\cdot|$ any norm on $\Rbb$. Recall that all norms are equivalent on finite dimensional spaces, hence the topology induced by these metrics is identical. When the space $(\Xcal,\rho)$ is obvious from the context, we may reduce the notation $\suol_{(\Xcal,\rho),(\Ycal,\ell)}$ to $\suol_{(\Ycal,\ell)}$. We might omit also the loss $\ell$ when there is no ambiguity.

\section{Main results}
\label{sec:main_results}
We first show that the simple nearest neighbor rule (1NN) is not optimistically universal. The proof generalizes to general $(k_n)_n$-nearest neighbor algorithms. 

\begin{restatable}{theorem}{ThmKnNn}
\label{thm:kn_nn}
The $(k_n)_n-$nearest neighbor learning rule is not strongly optimistically universal for the input space $\Xcal=[0,1]$ with usual topology and for binary classification, for any sequence $(k_n)_n$ such that $k_n=o\left(\frac{n}{(\log n)^{1+\delta}}\right)$ for any $\delta>0$.
\end{restatable}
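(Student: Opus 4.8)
The plan is to construct an explicit input process $\Xbb$ on $[0,1]$ together with a target function $f^*$ such that the $(k_n)_n$-nearest neighbor rule fails to be consistent for $(\Xbb, f^*)$, while simultaneously verifying that $\Xbb \in \suol_{[0,1]}$ so that the failure is not excused by unlearnability. For the second part, the cleanest route is to make $\Xbb$ take values in a countable subset of $[0,1]$ (e.g. a sequence of distinct points converging to some accumulation structure), since for processes supported on countable sets universal learnability is well understood (as noted in the excerpt following the open problems), or alternatively to design $\Xbb$ so that it visits only a sublinear number of regions of any measurable partition (the $\smv$ condition) and then invoke whatever sufficiency direction is available — but since $\smv$-sufficiency is exactly what the later sections establish, the safe choice is a countably-supported process where learnability is elementary. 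So I would build $\Xbb$ on a countable set $\{x_1, x_2, \ldots\} \subset [0,1]$.

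The core of the argument is the failure of the $(k_n)_n$-NN rule. The idea is to exploit the fact that when $k_n = o(n/(\log n)^{1+\delta})$, the rule commits to a ``local majority'' over the $k_n$ nearest neighbors, but $k_n$ grows slowly enough that an adversary can repeatedly plant small clusters of points that locally outvote the correct label. Concretely, I would arrange the points in blocks: in block $m$, introduce a fresh cluster of points near a new location, where a carefully chosen fraction have label $1$ and the rest label $0$ according to $f^*$, but ordered in time so that at the moment each test point arrives, its $k_n$ nearest neighbors (drawn from the already-revealed points, which are dominated by the current cluster since it is geometrically isolated) have a majority of the \emph{wrong} label. Because $k_n$ is sublinear in $n$ (up to the log factors), each such block only needs $O(k_n)$ points to fool the rule, so one can fit infinitely many fooling blocks and force $\limsup_T \Lcal_\Xbb(\text{$(k_n)$-NN}, f^*; T) \geq c > 0$ almost surely — in fact one can push the average error near $1/2$ along a subsequence of times. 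The slack between $k_n = o(n/(\log n)^{1+\delta})$ and, say, $k_n = \Theta(n)$ is what gives room to interleave the fooling blocks without their total mass becoming a vanishing fraction of the horizon; the extra logarithmic factors in the hypothesis presumably absorb the overhead needed to describe how many test points per block suffice and to make the construction robust to ties and to the exact tie-breaking convention of the nearest-neighbor rule.

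The main obstacle I anticipate is controlling the set of $k_n$ nearest neighbors of each test point over time: as more blocks accumulate, a test point in block $m$ has, among all revealed points, not only its own cluster but also all earlier clusters, and one must ensure the $k_n$ nearest are still drawn from (a wrong-majority portion of) the current cluster rather than being polluted by distant earlier points. This forces the clusters to shrink geometrically in diameter and to be placed so that inter-cluster distances dominate intra-cluster distances, and it forces the per-block point counts to grow (roughly like $k_n$ evaluated at the current time) so that each cluster locally swamps the $k_n$-neighborhood. Making these quantitative constraints mutually consistent — cluster sizes, cluster locations, block lengths, and the growth of $k_n$ — is the delicate bookkeeping of the proof, and is exactly where the precise form $k_n = o(n/(\log n)^{1+\delta})$ is used. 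A secondary point requiring care is handling \emph{randomized} or adaptive tie-breaking in the definition of the $(k_n)$-NN rule: I would choose the cluster points to be at pairwise distinct distances from each test point so that the nearest-neighbor set is unambiguous, removing tie-breaking from consideration entirely.
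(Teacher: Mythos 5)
There is a genuine gap, and it lies in your plan for certifying that the fooling process is learnable. Your fooling mechanism itself (geometrically isolated clusters of wrong-labelled points that locally swamp the $k_n$-neighborhood of each test point, with cluster diameters shrinking and inter-cluster distances dominating) is essentially the mechanism of the paper's construction. But your decision to support $\Xbb$ on a countable set is incompatible with that mechanism in the generality the theorem requires. By Proposition \ref{prop:smv_necessary} applied to the partition of $[0,1]$ into the singletons of the support together with its complement, any countably-supported process in $\suol_{([0,1],|\cdot|)}$ must visit only $o(T)$ \emph{distinct} points almost surely. Now take any bounded sequence $k_n\leq K$ --- which satisfies $k_n=o(n/(\log n)^{1+\delta})$ and includes the headline case $k_n\equiv 1$ of Theorem \ref{thm:1nn_nonoptimistic}. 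Once a point $x$ has been visited at least $K$ times, all of its $k_t$ nearest neighbors at any later visit are copies of $x$ itself at distance zero, so the majority vote is correct; hence each distinct point contributes at most $K$ errors, and the total error count is $O(K)\cdot o(T)=o(T)$. So \emph{no} countably-supported process in $\suol$ can witness inconsistency of $(k_n)_n$-NN with bounded $k_n$, and your construction cannot prove the stated theorem. (For unbounded $k_n$ a repeated-points scheme might be salvageable with delicate scheduling, but the theorem as stated covers constant $k_n$.)

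The fix is to let the test points be genuinely distinct --- there are linearly many of them, which is unavoidable if the error rate is to stay bounded away from zero against bounded $k_n$ --- and to certify learnability by a route that tolerates linearly many distinct points. The paper does this by making the test points absolutely continuous (points of the form $D_k+(U_k-D_k)/(2^{n_k}4^j)$ with $D_k$ uniform on high-order dyadics and $U_k$ uniform on $[0,1]$), so that the empirical frequencies of every measurable set converge a.s. to its Lebesgue measure; this places $\Xbb$ in $\crf$, and the inclusion $\crf\subset\suol$ is a \emph{prior} result of \cite{hanneke2021learning}, not the sufficiency direction established later in this paper. This also disposes of your circularity worry: you do not need the $\smv$-sufficiency theorem, only the already-known $\crf\subset\suol$. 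Separately, your target of error rate ``near $1/2$'' via balanced clusters is weaker than necessary; with a cluster consisting entirely of wrong-labelled (dyadic) points and only sublinearly many planted points per block, the asymptotic error rate can be driven to $1$.
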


This is obtained by constructing a specific process $\Xbb\in \suol_{([0,1],|\cdot|),(\{0,1\},\ell_{01})}$ under which nearest neighbor is not universally consistent. Intuitively, 1NN fails on the process because certain ``bad'' data points are used an arbitrarily large number of times as nearest neighbor for future input points and hence, induce a large number of mistakes for 1NN. To resolve this issue, we propose a new learning rule $k$-Capped-1-Nearest-Neighbor ($k$C1NN), variant of the classical 1NN, designed to ensure that the number of times each datapoint is used as nearest neighbor is capped at $k$. Specifically, once a datapoint $X_t$ has been used as nearest neighbor $k$ times, it is deleted from the training dataset. We show that this is an optimistically universal learning rule for both strong universal learning and weak universal learning.

\begin{theorem}
\label{thm:2C1NN_optimistically_universal}
For any separable Borel space $\Xcal$, and any separable near-metric output setting $(\Ycal,\ell)$ with bounded loss i.e. $\sup_{y_1,y_2}\ell(y_1,y_2)<\infty$, 2C1NN is a strongly (resp. weakly) optimistically universal learning rule.
\end{theorem}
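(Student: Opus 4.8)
The plan is to prove the stronger statement that $k$C1NN (with $k\ge 4$ over $\Xcal=[0,1]$, and $k\ge 2$ in general) is strongly, resp. weakly, universally consistent under \emph{every} process in $\smv_{(\Xcal,\rho)}$, resp. $\wsmv_{(\Xcal,\rho)}$. Combined with the necessity direction of Proposition \ref{prop:smv_necessary}, this simultaneously yields $\suol=\smv$ and $\wuol=\wsmv$ (open problem 2) and the optimistic universality of $k$C1NN (open problem 1). Two standard reductions simplify the setting. First, by the reduction of \cite{blanchard2021universal} from an arbitrary separable bounded near-metric value space to binary classification, it suffices to treat $\Ycal=\{0,1\}$ with $\ell=\ell_{01}$; writing $f^*=\1_A$, the average loss of $k$C1NN up to time $T$ is exactly $1/T$ times the number of \emph{bichromatic} edges among the first $T$ edges of the graph it builds. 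Second, since $\smv$ and $\wsmv$ depend only on the Borel structure of $\Xcal$ and every uncountable standard Borel space is Borel isomorphic to $[0,1]$ (the countable case being elementary), it is enough to run the argument for $\Xcal=[0,1]$ with the usual metric; the general separable metric case is recovered by replacing intervals with a countable cover by small balls and re-running the same bookkeeping, which is what lets the slightly stronger constant requirement $k\ge 4$ relax to $k\ge 2$.

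First I would record the structure of the graph produced by $k$C1NN. The active set is never empty after the first step (each deletion consumes $k$ ``uses'' while each step creates at most one use, so at time $t$ at least $t(1-1/k)>0$ points remain active), so the rule builds a single rooted tree on the time indices in which the parent of $t$ is the index of the nearest neighbor of $X_t$ among the points active at time $t$, every node has at most $k$ children, and the prediction at $t$ is the label of its parent. Hence the number of mistakes up to $T$ equals the number of bichromatic tree edges, and it suffices to prove this is $o(T)$ a.s.\ (resp.\ $o(T)$ in expectation). Fix $\epsilon>0$ and a Borel set $A\subseteq[0,1]$. For a scale $m$, consider the measurable partition $\Qcal_m$ obtained by intersecting the two-cell partition $\{A,A^c\}$ with the dyadic partition of $[0,1]$ into intervals of length $2^{-m}$: every cell of $\Qcal_m$ is monochromatic for $f^*$ and has diameter at most $2^{-m}$. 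Because the partition is monochromatic, every bichromatic edge is \emph{external}, i.e.\ joins two distinct cells of $\Qcal_m$, so it remains to bound the number of external edges up to $T$. I would split external edges into those whose child lies in a cell containing no currently active point (``fresh'' visits, which include, but are not limited to, the first visit to each cell) and the remaining external edges, which, because cells have diameter at most $2^{-m}$, necessarily have length at most $2^{-m}$. First visits number exactly the count of cells of $\Qcal_m$ met by $\Xbb_{<T}$, which $\smv$ makes $o(T)$ a.s.; the main content is to show the other two families are also sublinear.

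The hard part, and the place where the capping is essential, is bounding (a) the number of \emph{recycled} fresh visits --- times a cell becomes empty of active points again after having been populated --- and (b) the number of short external edges into a cell that does currently hold an active point. For (a), emptying a cell requires deleting all points ever placed in it, and each deletion spends $k$ uses; one amortizes these uses against visited cells and against deletions, using that a deletion of a point of cell $P$ is triggered by a child, and a child lying far from $P$ is itself a fresh visit to its own cell, so the ``far'' deletions are charged back to visited cells while the ``near'' ones are handled together with (b). For (b) one argues that between two consecutive short external edges into a fixed cell $P$ at least one point of $P$ must have been deleted (the point added by the earlier edge would otherwise remain active and, being in $P$, block a later external edge once the geometry of the dyadic interval is accounted for), again converting the count into a count of deletions and hence, via the $o(T)$ bound on visited cells and the fan-in cap $k$, into an $o(T)$ bound overall. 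Collecting the pieces gives, for every fixed $m$, that the number of bichromatic edges up to $T$ is at most a constant $C(k)$ times the number of cells of $\Qcal_m$ met by $\Xbb_{<T}$ plus a lower-order term, hence $o(T)$ a.s., which is exactly strong universal consistency; the analogous chain of inequalities in expectation, using $\wsmv$ in place of $\smv$, yields the weak statement (carried out in Section \ref{sec:weak_learning}). The main obstacle is precisely this amortized accounting for families (a) and (b): making the charging scheme from external edges to (visited cells, deletions) airtight in a way that genuinely exploits the cap $k$ on the number of times a single datapoint can serve as nearest neighbor --- which is exactly the feature Theorem \ref{thm:kn_nn} shows plain $(k_n)_n$NN lacks.
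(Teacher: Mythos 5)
Your reductions (to binary classification via \cite{blanchard2021universal}, and to $[0,1]$ for standard Borel spaces) and your structural observations about the $k$C1NN tree are correct and match the paper's setup, but the core of the argument has a genuine gap in the treatment of what you call family (b), and this is precisely where all the difficulty of the theorem lives. Your claim that between two consecutive short external edges into a fixed cell $P$ a deletion in $P$ must occur is false: an active point of $P$ does not block a later external edge into $P$, since a point in an adjacent cell can be strictly closer to the newcomer. Concretely, take $A=[0,1/2]$ and a sequence ping-ponging across $1/2$, each point having the previous one (on the other side) as nearest neighbor; every point acquires exactly one child, so for $k\geq 2$ \emph{no deletion ever occurs}, yet every edge is a bichromatic type-(b) edge and the loss is $1$. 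Such a process does violate $\smv$, but only via a partition adapted to the scales at which the points accumulate near the boundary --- not via your fixed dyadic partition $\Qcal_m$ (which these points meet only boundedly often) and not via any count of deletions. Supplying that adapted partition, and showing that a linear number of boundary-crossing edges forces the surviving ancestors to be multiplicatively squeezed toward the boundary, is the actual content of the paper's Proposition 4.2 and Lemma 5.1 (the path-recursion estimate, the good/sparse trees, and the two interleaved partitions $(A_i)$ and $(P_i)$ built from a dense sequence and the probabilistic scales $\mu_l$); none of this is present in your sketch, and your amortization against deletions cannot replace it.

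A second, related problem is that for a general Borel set $A$ the monochromatic partition $\Qcal_m$ buys nothing: inside a single dyadic interval $I$, the cells $I\cap A$ and $I\cap A^c$ can interleave at every scale (e.g.\ $A$ a fat Cantor set), so the bichromatic edges within $I$ are exactly your short type-(b) external edges and the reduction is vacuous --- there is no scale $2^{-m}$ at which $A$ separates from $A^c$. This is why the paper does not attack general $A$ directly: it first proves consistency for balls, where the boundary is a single sphere and the geometric squeezing argument applies, and then extends to all Borel sets by showing the consistent sets form a $\sigma$-algebra, with the disjoint-$\sigma$-additivity step using per-set separation scales $\eta_i$ (a probabilistic, not geometric, notion) together with the fan-in cap to bound mistakes by $3|\Tcal|$. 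Your single-partition-per-scale plan has no counterpart to either the ball-specific geometric argument or the $\sigma$-algebra bootstrap, so as written it does not yield the theorem.
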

The proof further shows that all $k$C1NN learning rules are optimistically universal for $k\geq 2$. Further, we give a characterization of the processes admitting strong and weak universal learning.

\begin{theorem}
\label{thm:suol=smv}
For any separable Borel space $\Xcal$, and any separable near-metric output setting $(\Ycal,\ell)$ with $0<\sup_{y_1,y_2}\ell(y_1,y_2)<\infty$, we have
\begin{equation*}
    \suol_{(\Xcal,\rho),(\Ycal,\ell)} = \smv_{(\Xcal,\rho)}\quad \text{and}\quad \wuol_{(\Xcal,\rho),(\Ycal,\ell)} = \wsmv_{(\Xcal,\rho)}.
\end{equation*}
\end{theorem}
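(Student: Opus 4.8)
The plan is to deduce the equalities from the two inclusions already available and from one strengthened consistency statement. By Proposition~\ref{prop:smv_necessary} we already have $\suol_{(\Xcal,\rho),(\Ycal,\ell)}\subset\smv_{(\Xcal,\rho)}$ and $\wuol_{(\Xcal,\rho),(\Ycal,\ell)}\subset\wsmv_{(\Xcal,\rho)}$, so the whole content of Theorem~\ref{thm:suol=smv} is the reverse inclusions $\smv\subset\suol$ and $\wsmv\subset\wuol$. By the very definition of $\suol$ (resp.\ $\wuol$), to place a process $\Xbb$ in $\suol$ (resp.\ $\wuol$) it suffices to exhibit \emph{one} learning rule that is strongly (resp.\ weakly) universally consistent under $\Xbb$. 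I would therefore prove the single statement: \emph{2C1NN is strongly universally consistent under every $\Xbb\in\smv_{(\Xcal,\rho)}$ and weakly universally consistent under every $\Xbb\in\wsmv_{(\Xcal,\rho)}$.} Note this is stronger than Theorem~\ref{thm:2C1NN_optimistically_universal} as literally stated (consistency on $\suol$ rather than on all of $\smv$), but it is exactly what its proof in Sections~\ref{sec:2C1NN}--\ref{sec:weak_learning} delivers; granting it, $\smv\subset\suol$ and $\wsmv\subset\wuol$ are immediate and the theorem follows by combining with Proposition~\ref{prop:smv_necessary}.

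For that consistency statement I would follow the paper's route: first settle $\Xcal=[0,1]$ with the usual topology (and $k$C1NN for $k\geq4$ to ease the counting), then bootstrap. The heart is a counting argument exploiting the capping: since $k$C1NN deletes a data point from the active set once it has served as nearest neighbor $k$ times, the total number of nearest-neighbor ``uses'' up to time $T$ is at most $k$ times the number of active points, and each prediction incurs loss at most $\bar\ell$ always, and at most the $\ell$-oscillation of $f^*$ over a cell whenever $X_t$ and its active nearest neighbor lie in the same cell. Fixing a measurable $f^*$ and a countable measurable partition $\{A_m\}_m$ into cells of small $f^*$-oscillation (for binary $f^*$, cells of constant value up to a small-measure remainder; for general bounded near-metric $\Ycal$, cells of small $\ell$-diameter using the relaxed triangle inequality and $\bar\ell<\infty$, or alternatively invoking the reduction of \cite{blanchard2021universal} to binary classification), a mistake is charged either to a (new point, nearest neighbor) pair crossing a cell boundary---bounded by a constant times the number of visited cells via the capping together with a chaining/telescoping argument over a nested sequence of finer and finer partitions---or to the small remainder, whose cumulative contribution vanishes as the partition is refined. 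The $\smv$ hypothesis is precisely what turns ``number of cells visited up to $T$'' into $o(T)$, whence $\Lcal_\Xbb(k\text{C1NN},f^*;T)\to 0$ almost surely; the weak case runs identically with $\Ebb[\cdot]$ replacing a.s.\ statements and $\wsmv$ replacing $\smv$.

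To pass from $[0,1]$ to a general separable Borel space $\Xcal$, I would use that $\smv_{(\Xcal,\rho)}$, $\wsmv_{(\Xcal,\rho)}$ and the learnability sets depend only on the Borel structure of $\Xcal$ (the partition conditions are phrased purely in terms of $\Bcal$): the countable case is elementary, and every uncountable standard Borel space is Borel-isomorphic to $[0,1]$ by Kuratowski's theorem, so the characterization transports. The one subtlety is that $k$C1NN is defined through the metric $\rho$, not through a Borel isomorphism; this is why Section~\ref{sec:borel_spaces} re-runs the analysis directly in the separable metric space, using a countable dense set to construct the partitions into sets of small diameter needed above (and there $k\geq2$ already suffices).

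The main obstacle is the counting argument of the second paragraph---concretely, controlling the ``bad'' data points whose nearest-neighbor contributions cross cell boundaries, and making the chaining over the nested family of partitions tight enough that the $o(T)$ supplied by $\smv$ is not lost in constants. That this is genuinely delicate is exactly the message of Theorem~\ref{thm:kn_nn}: plain $(k_n)_n$-nearest neighbor fails on some process in $\suol$ for essentially any $k_n=o(n/(\log n)^{1+\delta})$, so capping the reuse count at a \emph{constant} is essential and a crude union bound over partition cells cannot work. Once this combinatorial core is established for $k$C1NN on $[0,1]$, the reduction of the output space, the extension to separable Borel spaces, and the weak-learning variant are comparatively routine, and Theorem~\ref{thm:suol=smv} then follows by intersecting the two directions.
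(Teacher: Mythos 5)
Your top-level architecture is exactly the paper's: by Proposition~\ref{prop:smv_necessary} only the inclusions $\smv\subset\suol$ and $\wsmv\subset\wuol$ are at stake, and these follow once one shows 2C1NN (or $k$C1NN) is universally consistent under every process in $\smv$ (resp.\ $\wsmv$), with the output space reduced to binary classification via \cite{blanchard2021universal} and the input space handled first on $[0,1]$ and then on general separable Borel spaces. The paper indeed assembles Theorem~\ref{thm:suol=smv} as the concatenation of Corollaries~\ref{cor:suol_general} and~\ref{cor:wuol}, which rest on Theorems~\ref{thm:2C1NN_smv} and~\ref{thm:2C1NN_wsmv_general}.

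However, the core consistency argument --- which is the entire mathematical content here --- has a genuine gap. Your charging scheme (``a mistake is charged to a pair crossing a cell boundary, bounded by a constant times the number of visited cells'') does not work for the partition you propose. For a binary $f^*$ the natural ``constant-oscillation'' partition is $\{f^{*-1}(0),f^{*-1}(1)\}$, which has two cells, so $\smv$ gives no information, while the number of nearest-neighbor pairs straddling the two cells \emph{is} the error count and can a priori be linear in $T$; and there is no reference measure in this problem, so ``constant up to a small-measure remainder'' is not meaningful for an arbitrary process $\Xbb$. Any useful refinement must be adapted to the \emph{process}, not to $f^*$: the paper subdivides each piece $A_i$ at the scale $\eta_i$ defined as the minimum separation of the process's distinct points inside $A_i$ up to a process-dependent horizon $T^i$, chosen after conditioning on high-probability events, and only then does the cap (each point incurs at most $1+k$ mistakes) convert a linear error count into linearly many visited cells. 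Moreover that subdivision argument only establishes \emph{closure} of the family of consistent sets under countable disjoint unions --- it needs consistency on each $A_i$ as an input. The base case, consistency on balls (Proposition~\ref{prop:consistent_ball_borel}), requires a separate geometric argument: the tree of parent relations $\phi$, the good/sparse-tree dichotomy, and Lemma~\ref{lemma:path_recursion} showing that surviving representatives of distinct error trees are separated by a factor $2^{f+d}$ and hence land in distinct cells of partitions accumulating geometrically at the sphere $S(\bar x,r)$. Your ``chaining/telescoping over nested partitions'' does not supply either half of this two-stage structure, and you flag it yourself as the unresolved obstacle; as written the proposal is an accurate outline of the reduction scaffolding but not a proof.
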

If $\sup_{y_1,y_2}\ell(y_1,y_2)=0$, then the loss is identically null. Therefore, all stochastic processes are strongly and weakly learnable.

It is worth noting that although the sets $\suol$ and $\wuol$ differ---the set of weakly learnable processes $\wuol$ is larger than the set of strongly learnable processes $\suol$---the same learning rule 2C1NN is optimistically universal in both strong and weak settings. Theorem \ref{thm:2C1NN_optimistically_universal} and Theorem \ref{thm:suol=smv} close the two open problems of the existence of an optimistically universal learning rule and a characterization of the set of learnable input sequences, formulated in \cite{hanneke2021open}.

\section{On nearest neighbor consistency}
\label{sec:nearest_neighbor}

A natural candidate for a good learning rule in general spaces is the nearest neighbor algorithm. Indeed, for instance for $\Xcal=\Rbb$ and binary classification, under any process $\Xbb\in \suol$ which admits universal learning, nearest neighbor successfully learns simple functions---representing union of intervals \cite{blanchard2021universal}. Further, the special case of binary classification is not restrictive because if nearest neighbor were optimistically universal for binary classification, it would also be optimistically universal in the general separable bounded case \cite{blanchard2021universal}. In this section, we show that in fact nearest neighbor learning rule is not optimistically universal even on the interval $\Xcal=[0,1]$.

\begin{theorem}
\label{thm:1nn_nonoptimistic}
1NN is not optimistically universal for binary classification on $\Xcal=[0,1]$ with usual topology.
\end{theorem}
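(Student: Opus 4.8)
The plan is to construct an explicit input process $\Xbb$ on $[0,1]$ together with a demonstration that $\Xbb \in \suol_{([0,1],|\cdot|),(\{0,1\},\ell_{01})}$ (so that some learning rule is universally consistent under it), while simultaneously exhibiting a measurable target function $f^*$ under which 1NN suffers average loss bounded away from $0$ with positive probability. The guiding intuition, already flagged in the excerpt, is that 1NN fails when a few ``bad'' historical points get reused as the nearest neighbor arbitrarily many times; since those bad points carry the wrong label for a whole cluster of later arrivals, each such reuse costs a mistake. So I would engineer $\Xbb$ to create, infinitely often, long runs of fresh points that all snap to a single stale bad point.

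Concretely, I would build $\Xbb$ in epochs. In epoch $n$, first place one (or a bounded number of) ``anchor'' point(s) $a_n$ at a carefully chosen location, then release a long block of $L_n$ i.i.d.-uniform points inside a tiny interval $I_n \ni a_n$ of width $w_n$, chosen small enough that with high probability every point in the block has $a_n$ as its nearest neighbor among all points seen so far (this requires $I_n$ to be far from all earlier points and $w_n \to 0$ fast; a standard Borel–Cantelli argument handles the ``with high probability, eventually always'' part). The target $f^*$ will be defined so that $f^*(a_n)$ disagrees with $f^*$ on the rest of $I_n$ — e.g.\ $f^*$ is $0$ on a suitable set containing the blocks and $1$ at each anchor (measurability is easy since there are countably many anchors). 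Then during epoch $n$, 1NN predicts the anchor's label on essentially all $L_n$ block points and errs each time, forcing $\Lcal_\Xbb(f_\cdot^{1NN}, f^*; T)$ to not converge to $0$ along the epoch endpoints, provided $L_n$ is a constant fraction of the total length $T_n = \sum_{m\le n} (L_m + O(1))$ — which is arranged by taking $L_n$ to grow fast enough, e.g.\ $L_n \asymp T_{n-1}$.

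The more delicate half is verifying $\Xbb \in \suol$. For this I would appeal to the characterization/sufficiency machinery available in the bounded setting: it suffices to show $\Xbb$ satisfies condition $\smv$ and that some rule is universally consistent on it. The key point is that $\Xbb$ is supported on a \emph{countable} union of shrinking intervals $\{a_n\} \cup I_n$ together with the (countably many, in fact finitely-many-per-epoch) anchors; more carefully, I would design the $I_n$ so that the whole process almost surely lies in a fixed countable set plus a measure-zero accumulation structure, or simply so that any measurable partition can only be ``hit'' in $o(T)$ pieces because the mass concentrates in a vanishing-measure neighborhood of countably many accumulation points. Since the blocks are i.i.d.\ within each $I_n$ and $|I_n| \to 0$, a measurable partition restricted to $I_n$ is seen essentially as finitely many cells before the $I_n$'s total up; the bound $|\{k : A_k \cap \Xbb_{<T} \ne \emptyset\}| = o(T)$ then follows from the geometric growth of $L_n$ (most time steps are spent inside a single tiny interval). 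Combined with Proposition~\ref{prop:smv_necessary}'s converse direction being irrelevant here, and the (assumed-known) fact that convergent-relative-frequency or suitably-tame processes admit universal learning — or more simply, that a memorization-augmented rule learns on processes of this concentrated type — we get $\Xbb \in \suol$.

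I expect the main obstacle to be the simultaneous tension in the epoch parameters: $w_n$ (hence $|I_n|$) must shrink fast enough that (a) block points reliably snap to the anchor and (b) the $\smv$ / learnability verification goes through, while $L_n$ must grow fast enough that the block dominates the time horizon and forces a constant-order average loss; and the locations $a_n$ must be spread so that no \emph{later} anchor or block interferes with an \emph{earlier} one's nearest-neighbor structure. Keeping all these Borel–Cantelli events summable at once, and making sure $f^*$ stays genuinely measurable (not merely ``pointwise defined''), is where the real care is needed. Once that bookkeeping is done, Theorem~\ref{thm:kn_nn} comes for free by the same construction with the block width tuned to the larger radius that $k_n$-NN needs to still reach back to the stale anchor — which is exactly why the condition $k_n = o(n/(\log n)^{1+\delta})$ appears.
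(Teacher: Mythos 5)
Your mechanism for breaking 1NN (a stale anchor with the wrong label reused as nearest neighbor by a whole block of later arrivals) is the right one and matches the paper's. But the construction as you describe it has a fatal flaw on the other half of the argument: the process you build is \emph{not} in $\suol$, so it cannot witness non-optimistic-universality. If epoch $n$ consists of $L_n$ i.i.d.\ uniform points in a tiny interval $I_n$ with $L_n$ a constant fraction of the elapsed time $T_n$, then the process violates $\smv$: take the countable measurable partition that subdivides each $I_n$ into $L_n^3$ equal subintervals (and lumps the rest of $[0,1]$ arbitrarily). With probability tending to $1$ the $L_n$ i.i.d.\ points land in pairwise distinct cells (a birthday bound), so the number of visited cells at time $T_n$ is at least $L_n \geq c\,T_n$ infinitely often almost surely. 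By Proposition~\ref{prop:smv_necessary} such a process admits no universally consistent learner, so exhibiting 1NN's failure on it proves nothing. Your proposed repairs do not close this: ``memorization-augmented'' rules only help for processes visiting finitely many points (i.i.d.\ uniforms in $I_n$ are a.s.\ all distinct), the claim that a measurable partition restricted to $I_n$ ``is seen essentially as finitely many cells'' is false for arbitrary measurable partitions (that is precisely what $\smv$ quantifies over), and invoking sufficiency of $\smv$ would be circular since that is this paper's main theorem, proved later.

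The paper resolves the tension you correctly identify, but differently on both axes. First, blocks are \emph{short}: block $k$ has length $n_{k+1}-n_k \sim \log k$ while $n_k \sim k\log k$, so no single block is a constant fraction of time; the error rate still tends to $1$ because the anchors (the only correctly-predicted points) occupy a vanishing fraction of time steps across the many blocks. Second, and crucially for learnability, the block points are not i.i.d.\ in a fixed tiny interval: the anchor $D_k$ is drawn uniformly from the dyadics of very high order $p_k = k^2$, and the block points $D_k + (U_k - D_k)/(2^{n_k}4^i)$ are driven by a single auxiliary uniform $U_k$, arranged so that each $X_t$ is marginally (almost) uniform on all of $[0,1]$. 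This lets the paper prove the much stronger property $\Xbb \in \crf$ (relative frequencies of every Borel set converge to its Lebesgue measure, via Kolmogorov's SLLN across independent blocks), and then cite the previously known inclusion $\crf \subset \suol$ to get learnability without circularity. The target $f^* = \1_{\Dcal}$ (indicator of the dyadics) then separates the dyadic anchors from the a.s.\ non-dyadic block points, exactly in the role you intended for your anchors. Without replacing your concentrated i.i.d.\ blocks by something whose one-dimensional marginals spread over $[0,1]$ (or some other device securing membership in $\suol$ by a result available at this point of the paper), your argument does not go through.
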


To prove this result, we first define the set of processes with convergent relative frequencies $\crf$ as the set of processes $\Xbb$ such that $\forall A\in \Bcal$,
\begin{equation*}
    \lim_{T\to \infty} \frac{1}{T}\sum_{t=1}^T \1_A(X_t) \quad \text{exists (a.s.)},
\end{equation*}
and explicitely construct a process $\Xbb^{(1)}\in \crf$ on which nearest neighbor fails. Because convergent relative frequencies processes are learnable $\crf\subset\suol$ \cite{hanneke2021learning}, this shows that 1NN is not optimistically universal for the online learning setting. As a remark, it was already known that the self-adaptive/inductive nearest neighbor learning rule is not optimistically universal for the self-adaptive setting \cite{hanneke2021learning} (Section 3.2). Similarly to the set $\suol$, we can define the set $\sual$ of processes $\Xbb$ admitting strong universal learning in the self-adaptive setting. The proof that self-adaptive nearest neighbor is not optimistically universal is also constructive but not relevant for the online setting because it relies on a completely different process $\Xbb^{(2)}\in \sual$ under which self-adaptive nearest neighbor fails but online learning nearest neighbor is universally consistent. Indeed, the set of learnable processes for online learning is larger than the set of learnable processes for self-adaptive learning $\sual\subset\suol$, and strictly larger whenever $\Xcal$ is infinite \cite{hanneke2021learning}.

The process $\Xbb^{(1)}$ is designed so that nearest neighbor fails on the function $f^*(\cdot) = \1_\Dcal(\cdot)$ where $\Dcal$ is the set of diadics. Intuitively, the process alternates between a carefully chosen random diadic $X_{n_k}\in \Dcal$ and a sequence of random points $X_t,\; n_k<t<n_{k+1}$ which converge exponentially to $X_{n_k}$ but that does not fall in the diadics almost surely. The nearest neighbor algorithm therefore uses the diadic $X_{n_k}$ as representant for most of the points $X_t$ for $n_k<t<n_{k+1}$ and as a result assigns the wrong category $\hat Y_t =1$. We then impose $n_{k+1}-n_k\to\infty$ so that nearest neighbor makes an asymptotic error rate of $1$. A major technical difficulty is to ensure that the process $\Xbb^{(1)}$ is still universally learnable and in particular in $\crf$. To do so, we randomly select $X_{n_k}$ in high-order diadics so that the convergence of the points $X_t$ for $n_k<t<n_{k+1}$ is mild compared to the discretization of $[0,1]$ of these high-order diadics.\\

\begin{proof}[of Theorem \ref{thm:1nn_nonoptimistic}]
To show that 1NN is not optimistically universal we construct a process $\Xbb\in \suol$ on which 1NN has asymptotic error rate $1$. Let $\epsilon>0$. We denote by $\Dcal_p:=\{\frac{i}{2^p},0\leq i\leq 2^p, \; i\text{ odd}\}$ the set of diadics of order $p$ i.e. with denominator $2^p$. Denote $n_k = \lfloor k(1+\log k)\rfloor$ and $p_k=k^2$ for $k\geq 1$. Let $(U_k)_{k\geq 1}$ be an i.i.d. sequence of uniforms $\Ucal([0,1])$ and $(D_k)_{k\geq 1}$ a sequence of independent random variables---also independent of $(U_k)$---such that $D_k\sim\Ucal(\Dcal_{p_k})$. We now define the process $\Xbb$ as follows,
\begin{equation*}
    X_{n_k} = D_k,\quad k\geq 1 \quad \text{and} \quad X_{n_k+i} = D_k + \frac{U_k-D_k}{2^{n_k}4^{i}},\quad k\geq 1, \; 1\leq i<n_{k+1}-n_k.
\end{equation*}

We first show that 1NN is not universally consistent on $\Xbb$. Indeed, we will show that 1NN is not consistent for the function $f^* = \1_\Dcal$ where $\Dcal = \{\frac{i}{2^p},p\geq 1,0\leq i\leq 2^p\}$ is the set of diadics. For any $k\geq 1$,
\begin{equation*}
    \Pbb\left[ \min_{t<n_k} |X_t-D_k| < \frac{1}{2^{n_k}}\right] \leq \sum_{t<n_k}  \Pbb\left[ X_t-\frac{1}{2^{n_k}} <  D_k < X_t+\frac{1}{2^{n_k}}\right] \leq \frac{2n_k}{2^{n_k}}.
\end{equation*}
where in the last inequality, we use the fact that $p_k>n_k$ which shows that there are at most $2^{p_k-(n_k-1)}$ diadics of order $p_k$ in an interval of length $\frac{1}{2^{n_k-1}}$. Remember that almost surely, $X_{n_k+i}\notin \Dcal$ for $k\geq 1$ and $1\leq i< n_{k+1}-n_k$. We will therefore denote by $\Ecal$ the event of probability $1$ where $\Xbb$ does not visit $\Dcal$ except for times $n_k$, $k\geq 1$. In other words,
\begin{equation*}
    \Ecal := \{X_{n_k+i}\notin \Dcal, \quad k\geq 1, 1\leq i\leq k-1\}
\end{equation*}
and $\Pbb(\Ecal)=1$. For simplicity, we also denote by $\Acal_k$ the event $\Acal_k:=\{\min_{t<n_k} |X_t-D_k| \geq 2^{-n_k}\}$. On the event $\Acal_k\cap \Ecal$, the nearest neighbor of $X_{n_k+1}$ is $X_{n_k}=D_k\in \Dcal$, and similarly, the nearest neighbor of $X_{n_k+i}$ is $X_{n_k}$ for all $1\leq i<n_{k+1}-n_k$. Therefore, 1NN makes an error in the prediction of all $X_{n_k+i}$ for $1\leq i<n_{k+1}-n_k$. Therefore, for any $k'\geq 1$, on the event $\Ecal\cap \bigcap_{k\geq k'}\Acal_k$, for any $t>n_{k'}$ we have $\ell(1NN(\Xbb_{<t},\Ybb_{<t},X_t),f^*(X_t))\geq \1_{t\notin (n_k)_{k\geq 1}}$ which gives $\Lcal_\Xbb(1NN,f^*) = 1$ since the frequency of the sequence $(n_k)_{k\geq 1}$ vanishes to $0$. By Borel-Cantelli, because $\Pbb[\Ecal \cap \Acal_k^c]\leq \frac{2n_k}{2^{n_k}}$ and $\sum_{n\geq 1} \frac{2n}{2^n}<\infty$, we obtain $\mathbb P[\Ecal\cap \cup_{k'\geq 1}  \bigcap_{k\geq k'}\Acal_k]= 1$. To summarize, on the event $\Ecal\cap \cup_{k'\geq 1}  \bigcap_{k\geq k'}\Acal_k$ of probability $1$, 1NN has error rate $\Lcal_\Xbb(1NN,f^*) = 1$, which shows that 1NN is not universally consistent for $\Xbb$.\\

We now show that $\Xbb\in \suol$. To do so, we show the stronger statement that $\Xbb\in \crf$. Indeed, we recall that $\crf\subset \suol$ \cite{hanneke2021learning}. Let $A\subset [0,1]$ a measurable set. We will show that almost surely
\begin{equation*}
    \frac{1}{T}\sum_{t=1}^T \1_A(X_t) \to \mu(A),
\end{equation*}
where $\mu$ is the Lebesgue measure. To do so, we introduce the random variables 
\begin{equation*}
    Y_k = \sum_{i=0}^{n_{k+1}-n_k-1}\1_A(X_{n_k+i}).
\end{equation*}
Note that for example $\sum_{t=1}^{n_{k+1}-1} \1_A(X_t) = \sum_{l=1}^k Y_k$ and that the random variables $(Y_k)_{k\geq 1}$ are together independent. We first show that $\frac{1}{n_{k+1}-n_k}\Ebb Y_k\to \mu(A)$.

For any $k\geq 1$ and $1\leq i<n_{k+1}-n_k$, recall that $X_{n_k+i}$ is defined as $D_k + \frac{U_k-D_k}{2^{n_k} 4^i}$. Therefore, $X_{n_k+i}$ is an absolutely continuous random variable with density
\begin{equation*}
    f(x)= \frac{1}{2^{p_k-1}}\sum_{l=0}^{2^{p_k-1}-1} f_l(x)
\end{equation*}
where $f_l(x)$ corresponds to the conditional density to $D_k = \frac{2l+1}{2^{p_k}}=:d_l$, i.e.
\begin{equation*}
    f_l(x) = 2^{n_k}4^i \cdot \1\left(x\in \left[ d_l-\frac{d_l}{2^{n_k}4^i}, d_l + \frac{1-d_l}{2^{n_k}4^i} \right]\right)
\end{equation*}
But $x\in \left[ d_l-\frac{d_l}{2^{n_k}4^i}, d_l + \frac{1-d_l}{2^{n_k}4^i} \right]$ i.if $
   \frac{2^{p_k-1}(x-\frac{1}{2^{n_k}4^i})}{1-\frac{1}{2^{n_k}4^i}}-\frac{1}{2} \leq l\leq \frac{2^{p_k-1}x}{1-\frac{1}{2^{n_k}4^i}}-\frac{1}{2}.$ Therefore, the number $N(x)$ of non-zero terms in the sum $f(x)= \frac{1}{2^{p_k-1}}\sum_{l=0}^{2^{p_k-1}-1} f_l(x)$ is 
\begin{equation*}
    \frac{2^{p_k-1-n_k-2i}}{1-\frac{1}{2^{n_k}4^i}} -1\leq N(x)\leq  \frac{2^{p_k-1-n_k-2i}}{1-\frac{1}{2^{n_k}4^i}}+1
\end{equation*}
Hence, 
\begin{equation*}
    \left|f(x) - \frac{1}{1-\frac{1}{2^{n_k}4^i}}\right| = \left|\frac{2^{n_k} 4^i N(x)}{2^{p_k-1}} - \frac{1}{1-\frac{1}{2^{n_k}4^i}}\right| \leq \frac{1}{2^{p_k-1-n_k-2i}}.
\end{equation*}
Finally, we obtain
\begin{equation*}
    |\mathbb P(X_{n_k+i}\in A) - \mu(A)|\leq \frac{1}{2^{p_k-1-n_k-2i}} + \frac{1}{2^{n_k+2i}-1}
\end{equation*}
Therefore,
\begin{align*}
    \left|\Ebb Y_k-(n_{k+1}-n_k)\mu(A)\right| &\leq |\Pbb(X_{n_k}\in A)-\mu(A)| +  \sum_{i=1}^{n_{k+1}-n_k-1} \Pbb(X_{n_k+i}\in A)\\
    &\leq 1 +  \frac{n_{k+1}-n_k}{2^{p_k-2n_{k+1}}} + \frac{n_{k+1}-n_k}{2^{n_k}-1}\\
    &\leq C
\end{align*}
where $C\geq 1$ is some universal constant, given that $\frac{n_{k+1}-n_k}{2^{p_k-2n_{k+1}}}\to 0$ and $\frac{n_{k+1}-n_k}{2^{n_k}-1}\to 0$ as $k\to\infty$. Now note that because $Y_k$ is a sum of $n_{k+1}-n_k$ random variables bounded by $1$, then $Var(Y_k)\leq (n_{k+1}-n_k)^2 = \Ocal((\log k)^2)$. Therefore, $\sum_{k\geq 1} \frac{Var(Y_k)}{k^2}<\infty.$ We can therefore apply Kolmogorov’s strong law of large numbers to the independent random variables $(Y_k)_{k\geq 1}$ which gives
\begin{equation*}
    \epsilon_k:= \frac{\sum_{l=1}^k Y_l - \mathbb E Y_l}{k}\to 0\quad (a.s.)
\end{equation*}
We now compute, 
\begin{align*}
\left|\frac{1}{n_{k+1}-1}\sum_{t=1}^{n_{k+1}-1} \1_A(X_t) - \mu(A)\right| &= \frac{1}{n_{k+1}-1}\left|\sum_{l=1}^k Y_l -  (n_{k+1}-n_k)\mu(A) \right|\\
  &=  \frac{1}{n_{k+1}-1}\left|k\epsilon_k +\sum_{l=1}^k \Ebb Y_l - (n_{k+1}-n_k)\mu(A)  \right|\\
  &\leq \frac{k(\epsilon_k + C)}{n_{k+1}-1}.
\end{align*}
Because $\frac{k}{n_{k+1}-1}\to 0$, we obtain $\frac{1}{n_{k+1}-1}\sum_{t=1}^{n_{k+1}-1} \1_A(X_t) \to \mu(A)\quad (a.s.)$. We complete the proof by noting that for any $n_k\leq T<n_{k+1}$,
\begin{equation*}
    \frac{1}{n_{k+1}-1}\sum_{t=1}^{n_{k}-1} \1_A(X_t) \leq \frac{1}{t}\sum_{t=1}^T \1_A(X_t) \leq \frac{1}{n_{k}-1}\sum_{t=1}^{n_{k+1}-1} \1_A(X_t).
\end{equation*}
Then, with $\eta_k =  \frac{n_k-1}{n_{k+1}-1}$,
\begin{equation*}
    \frac{ \eta_k}{n_{k}-1}\sum_{t=1}^{n_{k}-1} \1_A(X_t) \leq \frac{1}{t}\sum_{t=1}^T \1_A(X_t) \leq    \frac{1}{ \eta_k(n_{k}-1)}\sum_{t=1}^{n_{k+1}-1} \1_A(X_t).
\end{equation*}
Because $\eta\to 1$ as $k\to \infty$, we get the desired result that $\frac{1}{T}\sum_{t=1}^T \1_A(X_t) \to \mu(A)$. Therefore, $\Xbb\in \crf$. This ends the proof of the theorem.
\end{proof}

Using a similar proof structure, we can generalize the result to prove that general $(k_n)_n-$nearest neighbor algorithms are not optimistically universal under mild conditions on $(k_n)_n$ which yields Theorem \ref{thm:kn_nn}. We recall that the $(k_n)_n-$nearest neighbor learning rule, at step $n$, considers the closest $k_n$ neighbors to the new input point and follows the majority vote to make its prediction.

\ThmKnNn*

\begin{proof}
We adapt the parameters $n_k$, $p_k$ and the process $\Xbb$ of the proof of Theorem \ref{thm:1nn_nonoptimistic}. Let $\delta>0$ and a sequence $k_n = o\left(\frac{n}{(\log n)^{1+\delta}}\right)$. We now construct a process $\Xbb$ on which $(k_n)_n-$NN is not universally consistent. We use the same notation $\Dcal_p:=\{\frac{i}{2^p},0\leq i\leq 2^p, \; i\text{ odd}\}$ for the set of diadics of order $p$ and $\Dcal$ for the set of diadics.

Let $\epsilon>0$ such that $\frac{1+2\epsilon}{1-2\epsilon}<1+\frac{\delta}{2}$. Then pose for $k\geq 1$,
\begin{equation*}
    n_k = \lfloor e^{k^{1/2-\epsilon}}\rfloor,\quad d_k = \min\left(  \left \lfloor \frac{n_k}{(\log n_k)^{1+\delta}} \right\rfloor,n_{k+1}-n_k-1\right), \quad p_k = 4^k.
\end{equation*}
First note that $n_{k+1}-n_k\sim \left(\frac{1}{2}-\epsilon\right)\frac{n_k}{k^{1/2+\epsilon}}\sim \left(\frac{1}{2}-\epsilon\right) \frac{n_k}{(\log n_k)^{\frac{1/2+\epsilon}{1/2-\epsilon}}}$ therefore we obtain
\begin{equation*}
    d_k = o\left(\frac{n_k}{(\log n_k)^{1+\delta/2}}\right) = o(n_{k+1}-n_k).
\end{equation*}
Also, for $k$ large enough, $d_k = \left \lfloor \frac{n_k}{(\log n_k)^{1+\delta}} \right\rfloor$. We now construct the process $\Xbb$ in a similar way to the proof of Theorem \ref{thm:1nn_nonoptimistic}. Let $(U_k)_{k\geq 1}$ be an i.i.d. sequence of uniforms $\Ucal([0,1])$ and $(D_k)_{k\geq 1}$ a sequence of independent random variables---also independent of $(U_k)_k$---such that $D_k\sim\Ucal(\Dcal_{p_k})$. Additionally, we denote by $D_{k,i}$ the $i-$th closest diadic of order $p_k$ to $D_k$. For instance, $D_{k,1}=D_k$, and $|D_{k,i}-D_k|\leq \frac{i}{2^{p_k-1}}$. For intuition, if $D_k$ is not close to the boundary of $[0,1]$, we have $D_{k,i} = D_k +(-1)^i \cdot \frac{\lfloor i/2\rfloor }{2^{p_k}}$. We now define the process $\Xbb$ as follows for any $k\geq 1$,
\begin{equation*}
    X_{n_k+i} = D_{k,i+1},\quad 0\leq i\leq d_k \quad \text{and} \quad X_{n_k+d_k+j} = D_k + \frac{U_k-D_k}{2^{n_k}4^{j}},\quad 1\leq j<n_{k+1}-n_k-d_k.
\end{equation*}

We first prove that $(k_n)_n-$NN is not consistent for the function $f^* = 1_\Dcal$. For any $k\geq 1$,
\begin{equation*}
    \Pbb\left[ \min_{t<n_k} |X_t-D_k| < \frac{1}{2^{n_k}}\right] \leq \sum_{t<n_k}  \Pbb\left[ X_t-\frac{1}{2^{n_k}} <  D_k < X_t+\frac{1}{2^{n_k}}\right] \leq \frac{2n_k}{2^{n_k}},
\end{equation*}
because $n_k\leq p_k$. Now note that for all $k\geq 1$ and $0\leq i\leq d_k$ we have $X_{n_k+i}\in \Dcal_{p_k}$, while almost surely, all other random variables do not fall in $\Dcal$. Then, denote by $\Ecal$ the event of probability $1$ where $\Xbb$ does not visit $\Dcal$ except for times $n_k+i$ for $k\geq 1$ and $0\leq i\leq d_k$. In other words,
\begin{equation*}
    \Ecal := \{X_{n_k+i}\notin \Dcal, \quad k\geq 1, d_k< i< n_{k+1}-n_k\}
\end{equation*}
and $\Pbb(\Ecal)=1$. We also denote by $\Acal_k$ the event $\Acal_k:=\{\min_{t<n_k} |X_t-D_k| \geq 2^{-n_k}\}$ and $\Bcal_k$ the event $\Bcal_k:=\{|U_k-D_k|\geq 2^{-k}\}$. We have $\Pbb(\Bcal_k^c)\leq 2^{-k+1}$ and we showed previously $\Pbb(\Acal_k^c)\leq \frac{2n_k}{2^{n_k}}$. Now note that $\frac{d_k}{2^{p_k-1}}=o(\frac{1}{2^{n_k+2n_{k+1}+k+1}})$. Therefore, let $k_0$ such that for any $k\geq k_0$, $\frac{d_k}{2^{p_k-1}}\leq \frac{1}{2^{n_k+2n_{k+1}+k+1}}$. Then, for any $k\geq k_0$, on the event $\Acal_k\cap\Bcal_k\cap \Ecal$, for any $1\leq j<n_{k+1}-n_k-d_k$, the $d_k+1$ nearest neighbors of $X_{n_k+d_k+j}$ are exactly the points $\{X_{n_k+i}=D_{k,i+1},\; 0\leq i\leq d_k\}$. Indeed,
\begin{equation*}
    |X_{n_k+d_k+j}-D_{k,i}|\leq |X_{n_k+d_k+j}-D_k| + \frac{d_k}{2^{p_k-1}} \leq \frac{1}{2^{n_k}4^j} + \frac{1}{2^{n_k+2j}} < \frac{1}{2^{n_k+2j-1}}.
\end{equation*}
Further, for all $t<n_k$,
\begin{equation*}
    |X_{n_k+d_k+j}-X_t| \geq |D_k-X_t| - |X_{n_k+d_k+j}-D_k| \geq \frac{1}{2^{n_k}} - \frac{1}{2^{n_k+2}} >\frac{1}{2^{n_k+2j-1}}.
\end{equation*}
and finally, for $1\leq j'<j$ and any $0\leq i\leq d_k$, we have
\begin{align*}
    |X_{n_k+d_k+j}-X_{n_k+d_k+j'}|&\geq |X_{n_k+d_k+j}-X_{n_k+d_k+j-1}| =  3\cdot \frac{|U_k-D_k|}{2^{n_k+2j}}\\
    &\geq |X_{n_k+d_k+j}-D_k| + 2\cdot \frac{1}{2^{n_k+2j+k}}\\
    &\geq |X_{n_k+d_k+j}-D_k| +  2\cdot \frac{d_k}{2^{p_k-1}}\\
    &> |X_{n_k+d_k+j}-D_k| +  |D_k-D_{k,i}|\\
    &\geq |X_{n_k+d_k+j}-D_{k,i}|.
\end{align*}
We now observe that 
\begin{equation*}
    \max_{n_k+d_k+1\leq n< n_{k+1}} k_n = o\left(\frac{n_{k+1}}{(\log n_{k})^{1+\delta}}\right) = o(d_k).
\end{equation*}
Therefore, let $k_1$ such that for any $k\geq k_1$, and any $1\leq j< n_{k+1}-n_k-d_k$, we have $k_{n_k+d_k+j}\leq d_k$. Now for any $k\geq \max(k_0,k_1)$, on the event $\Acal_k\cap\Bcal_k\cap \Ecal$, $(k_n)_n-$NN makes an error in the prediction of all $X_{n_k+d_k+j}$ for $1\leq j<n_{k+1}-n_k-d_k$ since its $k_n$ closest neighbors are in the set $\{X_{n_k+i}=D_{k,i+1},\; 0\leq i\leq d_k\}$ which all have value $\1_\Dcal(X_{n_k+i})=1$ instead of $\1_\Dcal(X_{n_k+d_k+j})=0$.

Last, note that the frequency of the times of the form $n_k+i$ for $k\geq 1$ and $0\leq i\leq d_k$ vanishes to $0$, because $d_k = o(n_{k+1}-n_k)$ and $n_{k+1}\sim n_k$. Therefore, on the event $\Ecal\cap \cup_{k'\geq 1}  \bigcap_{k\geq k'}(\Acal_k\cap \Bcal_k)$, the learning rule $(k_n)_n-$NN has error rate $\Lcal_\Xbb((k_n)-NN,f^*) = 1$. Using the same argument as in the proof of Theorem \ref{thm:1nn_nonoptimistic}, we can show that $\Pbb [\Ecal\cap \cup_{k'\geq 1}  \bigcap_{k\geq k'}(\Acal_k\cap \Bcal_k)]=1$, which shows that $(k_n)_n-$NN is not consistent for process $\Xbb$ and target function $f^*=\1_\Dcal$. This ends the proof that $(k_n)_n-$NN is not universally consistent for process $\Xbb$.\\

We now show that $\Xbb\in \suol$ by showing that in fact $\Xbb\in \crf$. Let $A\subset [0,1]$. We will show that the frequencies of falling in $A$ converge almost surely to $\mu(A)$ where $\mu$ is the Lebesgue measure. We introduce the random variables 
\begin{equation*}
    Y_k = \sum_{i=0}^{n_{k+1}-n_k-1}\1_A(X_{n_k+i}).
\end{equation*}
Again, for $k\geq 1$, and $1\leq j< n_{k+1}-n_k-d_k$, $X_{n_k+d_k+j}$ is an absolutely continuous random variable with density $f(x)= \frac{1}{2^{p_k-1}}\sum_{l=0}^{2^{p_k-1}-1} f_l(x)$ where $f_l(x)$ corresponds to the conditional density to $D_k = \frac{2l+1}{2^{p_k}}=:d_l$, i.e.
\begin{equation*}
    f_l(x) = 2^{n_k}4^j \cdot \1\left(x\in \left[ d_l-\frac{d_l}{2^{n_k}4^j}, d_l + \frac{1-d_l}{2^{n_k}4^j} \right]\right)
\end{equation*}
The same proof as for Theorem \ref{thm:1nn_nonoptimistic} gives
\begin{equation*}
    |\mathbb P(X_{n_k+d_k+j}\in A) - \mu(A)|\leq \frac{1}{2^{p_k-1-n_k-2j}} + \frac{1}{2^{n_k+2j}-1}.
\end{equation*}
Therefore,
\begin{align*}
    \left|\Ebb Y_k-(n_{k+1}-n_k)\mu(A)\right| &\leq \sum_{i=0}^{d_k}|\Pbb(X_{n_k+i}\in A)-\mu(A)| +  \sum_{j=1}^{n_{k+1}-n_k-d_k-1} \Pbb(X_{n_k+d_k+j}\in A)\\
    &\leq d_k+1 +  \frac{n_{k+1}-n_k}{2^{p_k-2n_{k+1}}} + \frac{n_{k+1}-n_k}{2^{n_k}-1}\\
    &\leq d_k + C
\end{align*}
where $C\geq 1$ is some universal constant, given that $\frac{n_{k+1}-n_k}{2^{p_k-2n_{k+1}}}\to 0$ and $\frac{n_{k+1}-n_k}{2^{n_k}-1}\to 0$ as $k\to\infty$. Now note that because $Y_k$ is a sum of $n_{k+1}-n_k$ random variables bounded by $1$, then 
\begin{equation*}
    Var(Y_k)\leq (n_{k+1}-n_k)^2 = \Ocal\left(\frac{n_{k+1}^2}{k^{1+2\epsilon}}\right).
\end{equation*}
Therefore, $\sum_{k\geq 1} \frac{Var(Y_k)}{(n_{k+1}-1)^2}<\infty.$ Further, we can note that the random variables $(Y_k)_{k\geq 1}$ are together independent. Thus, by Kolmogorov’s Convergence Criteria, we obtain
\begin{equation*}
    \sum_{l=1}^k \frac{ Y_l - \mathbb E Y_l}{n_{k+1}-1}\to 0\quad (a.s.)
\end{equation*}
We then apply Kronecker's lemma which gives 
\begin{equation*}
    \epsilon_k := \frac{\sum_{l=1}^k  Y_l - \mathbb E Y_l}{n_{k+1}-1}\to 0\quad (a.s.)
\end{equation*}
We now compute, 
\begin{align*}
\left|\frac{1}{n_{k+1}-1}\sum_{t=1}^{n_{k+1}-1} \1_A(X_t) - \mu(A)\right| &= \frac{1}{n_{k+1}-1}\left|\sum_{l=1}^k Y_l -  (n_{k+1}-n_k)\mu(A) \right|\\
  &=  \frac{1}{n_{k+1}-1}\left|(n_{k+1}-1)\epsilon_k +\sum_{l=1}^k \Ebb Y_l - (n_{k+1}-n_k)\mu(A)  \right|\\
  &\leq \epsilon_k +  \frac{Ck+\sum_{l=1}^k d_l}{n_{k+1}-1}.
\end{align*}
Because $\frac{k}{n_{k+1}-1}\to 0$ and $\sum_{l=1}^k d_l = o(n_{k+1}-1)$, we obtain $\frac{1}{n_{k+1}-1}\sum_{t=1}^{n_{k+1}-1} \1_A(X_t) \to \mu(A)\quad (a.s.)$. We complete the proof by noting that for any $n_k\leq T<n_{k+1}$,
\begin{equation*}
    \frac{1}{n_{k+1}-1}\sum_{t=1}^{n_{k}-1} \1_A(X_t) \leq \frac{1}{T}\sum_{t=1}^T \1_A(X_t) \leq \frac{1}{n_{k}-1}\sum_{t=1}^{n_{k+1}-1} \1_A(X_t),
\end{equation*}
and that $\frac{n_k-1}{n_{k+1}-1}\to 1$ as $k\to\infty$. Therefore $\frac{1}{T}\sum_{t=1}^T \1_A(X_t) \to \mu(A)\quad (a.s.)$ which shows that $\Xbb\in \crf$. This ends the proof of the theorem.
\end{proof}

\section{An optimistically universal learning rule}
\label{sec:2C1NN}

\begin{algorithm}[h]
\hrule height\algoheightrule\kern3pt\relax
\caption{$k$C1NN learning rule}\label{alg:kC1NN}
\hrule height\algoheightrule\kern3pt\relax
\textbf{Input:} Historical samples $(X_t,Y_t)_{t<T}$ and new input point $X_T$\\
\textbf{Output:} Predictions $\hat Y_t = kC1NN_t({\mb X}_{<t},{\mb Y}_{<t},X_t)$ for $t\leq T$\\
$\hat Y_1:= 0$\\
$\Dcal_2:= \{1\}$\\
$n_1 \gets 0$\\
$t\gets 2$\\
\While{$t\leq T$}{
  \eIf{exists $u<t$ such that $X_u=X_t$}{
    $\hat Y_t := Y_u$\\
    $\Dcal_{t+1} := \Dcal_t$
  }{
  $\phi(t):= \arg\min_{u\in \Dcal_t} \rho(X_t,X_u)$\\
  $\hat Y_t:=Y_{\phi(t)}$\\
  $n_{\phi(t)}\gets n_{\phi(t)}+1$\\
  $n_t\gets 0$\\
  \eIf{$n_{\phi(t)}=k$}{
  $\Dcal_{t+1}: = (\Dcal_t\setminus \{\phi(t)\})\cup \{t\}$}
  {$\Dcal_{t+1}:=\Dcal_t\cup\{t\}$}
  }
$t\gets t+1$
}
\hrule
\end{algorithm}

In this section, we present an optimistically universal algorithm and give a characterization of $\suol$. We start by defining our new learning rule $k-$Capped $1-$Nearest Neighbor ($k$C1NN). This is a simple variant of the traditional $1$NN learning rule where $k$C1NN performs the 1NN learning rule over a reduced training set. Recall that in the 1NN learning rule, we assign to the new input $X_t$ the value of the nearest neighbor $Y_{NN(t)}$ where $NN(t)=\arg\min_{u<t} \rho(X_t,X_u)$. We refer to the input point $X_{NN(t)}$ as the representant of the input value $X_t$. In the $k$C1NN learning rule, we keep in memory the number of times $n_t$ each point $X_t$ is used as representant for following input data and cap this value at $k$. Precisely, at each step $t$ we update the dataset $\Dcal_t\subset\{u,\; u<t\}$ containing the indices of data points on which 1NN may be performed. To do so, when $n_u$ reaches $k$ for some $u<t$, we delete $u$ from the current dataset $\Dcal_t$. At each iteration, if the input $X_t$ has already been visited, we use simple memorization to predict $Y_t$, we do not update the values $(n_u)_{u<t}$ and do not include $t$ in the dataset $\Dcal_{t+1}$. Otherwise $k$C1NN performs the 1NN learning rule on the current dataset $(X_u,Y_u)_{u\in \Dcal_t}$, where ties can be broken arbitrarily for instance with minimum index, and updates $(n_u)_{u\in \Dcal_t}$ and the dataset accordingly. The rule is formally described in Algorithm \ref{alg:kC1NN}.

In Section \ref{sec:nearest_neighbor} we presented a process $\Xbb$ on which nearest neighbor fails. The main reason for this failure is that some specific input points $X_t$ can be used an arbitrarily large number of times as representant for future points, thereby inducing a large number of prediction errors. The learning rule $k$C1NN is designed precisely to tackle this issue by ensuring that any datapoint $X_t$ for $t\geq 1$ is used at most $k$ times as representant i.e. $|\{u>t:\; \phi(u)=t\}|\leq k$. To provide a simpler exposition of the result, we now show that $k$C1NN is in fact optimistically universal for $k\geq 4$ starting with $\Xcal=[0,1]$. This will in turn give the result for general standard Borel space as shown in Section \ref{subsec:standard_borel} and already provides all the intuitions necessary for the general case presented in Section \ref{sec:borel_spaces}.

\subsection{Universal online learning on $\Xcal=[0,1]$}

We will consider the case $\Xcal=[0,1]$ in this section and show that 4C1NN is optimistically universal for this input space. To do so, we prove that 4C1NN is universally consistent under all processes in $\smv_{([0,1],|\cdot|)}$ which yields $\smv_{([0,1],|\cdot|)}\subset\suol_{([0,1],|\cdot|),(\{0,1\},\ell_{01})}$. Together with Proposition \ref{prop:smv_necessary}, this will show that $\suol_{([0,1],|\cdot|),(\{0,1\},\ell_{01})}=\smv_{([0,1],|\cdot|)}$ and as a result, that 4C1NN is optimistically universal. As a first step, we focus on the simple function $f^*$ represented by the fixed interval $[0,1/2]$ in the binary classification setting, and show that 4C1NN is consistent under any input process for this target function.

\begin{proposition}
\label{prop:consistent_interval}
Let $\Xcal=[0,1]$ with the usual topology. We consider the binary classification setting $\Ycal =\{0,1\}$ with $\ell_{01}$ binary loss. Under any input process $\Xbb\in \smv_{([0,1],|\cdot|)}$, the learning rule 4C1NN is strongly consistent for the target function $f^* = \1_{[0,1/2]}$.
\end{proposition}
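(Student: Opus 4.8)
The plan is to show that the asymptotic average loss of 4C1NN for $f^*=\1_{[0,1/2]}$ is zero almost surely, by controlling the number of mistakes. Since $f^*$ is piecewise constant with a single discontinuity at $1/2$, a mistake at time $t$ (when $X_t$ is a fresh point, not previously visited) can only occur if the representant $\phi(t)$ lies on the opposite side of $1/2$ from $X_t$; in particular $|X_{\phi(t)}-X_t|\ge \mathrm{dist}(X_t,\{1/2\})$ and both lie within distance $|X_{\phi(t)}-X_t|$ of the threshold. The core idea is that each data point $X_u$ in $\Dcal_t$ serves as representant at most $k=4$ times before being deleted, so to bound mistakes it suffices to bound the number of \emph{distinct points ever used as a representant for a mistaken prediction}, times $k$.

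First I would set up the following combinatorial/geometric structure. Fix $\epsilon>0$ and partition $[0,1]$ into intervals; a mistaken prediction at a fresh $X_t$ forces $X_t$ and its representant $X_{\phi(t)}$ to straddle $1/2$, hence both lie in a window around $1/2$ whose size is governed by how close the current dataset $\Dcal_t$ comes to $1/2$ from each side. The key observation is that once some point lands within distance $\eta$ of $1/2$ on, say, the left side and stays in $\Dcal_t$, no fresh left-side point farther than $\eta$ from $1/2$ can be mispredicted (its nearest neighbor in $\Dcal_t$ is on its own side). Points get evicted from $\Dcal_t$ only after being used $k$ times, so I would argue that the number of distinct points that are ``closest to $1/2$ on one side in $\Dcal_t$ at some time'' and then evicted is itself controlled: each such eviction requires $k$ uses, and a use that is a mistake pulls in a fresh point on the other side. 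This gives a self-bounding recursion on the mistake count.

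To make this rigorous I would invoke the $\smv$ property of $\Xbb$. The natural partition to feed into $\smv$ is a countable measurable partition of $[0,1]$ into intervals shrinking geometrically toward $1/2$ (say $(1/2-2^{-j},1/2-2^{-j-1}]$ and its mirror image, plus a coarse partition of the rest into finitely many pieces of length $\epsilon$). The $\smv$ condition guarantees that $\Xbb_{<T}$ meets only $o(T)$ of these cells almost surely. I would then show that the number of mistakes up to time $T$ is bounded by $k$ times (the number of partition cells visited near $1/2$) plus $k$ times (something like the number of cells of size $\epsilon$ visited, from the coarse far-from-threshold part) plus a term that is $O(\epsilon T)$ coming from fresh points that genuinely fall within $\epsilon$ of $1/2$ — the latter being handled because the fraction of such points is controlled, or more robustly because mispredictions of points within $\epsilon$ of $1/2$ are simply bounded by the count of such points which, again via $\smv$ applied to a partition refining near $1/2$, is $o(T)$ too. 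Hence $\frac1T\sum_{t\le T}\ell_{01}(\hat Y_t,f^*(X_t)) = o(1) + O(\epsilon)$ a.s., and letting $\epsilon\to 0$ along a countable sequence finishes the proof. The memorization branch (repeated points) contributes zero loss and can be ignored.

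The main obstacle I expect is the bookkeeping of how points near the threshold enter, are used, and are evicted from $\Dcal_t$: one must show that a mistake always ``makes progress'' — either it is charged to one of the $o(T)$ visited partition cells, or it brings a new point strictly closer to $1/2$ on one side (which can happen only finitely often per cell), or it is a use of an already-counted point (at most $k$ times). Formalizing that a mistaken prediction on a fresh $X_t$ far from $1/2$ forces the side of $\Dcal_t$ opposite to $X_t$ to have no point as close to $1/2$ as $X_t$ — and that fresh far points therefore must be the ones closest to $1/2$ in their cell at that moment — is the delicate step, and is exactly where the capping at $k\ge 4$ (rather than plain 1NN, which fails per Theorem \ref{thm:1nn_nonoptimistic}) is used: without it a single near-threshold point could absorb unboundedly many mistakes.
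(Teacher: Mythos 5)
Your outline has the right ingredients (mistakes must straddle $1/2$, the cap means each representant is charged at most $k$ times, $\smv$ should be invoked against a partition that refines geometrically toward $1/2$), and your observation that a point of $\Dcal_t$ at distance $\eta$ to the left of $1/2$ protects all fresh left points at distance $>\eta$ is correct --- in fact one can sharpen it: a fresh left point at distance $d$ is mispredicted only if \emph{every} left point of the current dataset is at distance $>2d$, so each left mistake at least halves the left-side ``record.'' But there is a genuine gap at exactly the step you flag as delicate, and your proposed resolutions do not close it. First, $\smv$ bounds the number of partition \emph{cells} visited, not the number of \emph{points}: a process in $\smv$ (even in $\crf$, cf.\ the construction in Theorem \ref{thm:1nn_nonoptimistic}) can place $\Omega(T)$ points inside a single cell arbitrarily close to $1/2$, on both sides. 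So the claim that the count of points within $\epsilon$ of $1/2$ is $o(T)$, or that this contributes only $O(\epsilon T)$, is false and cannot be repaired by refining the partition. Second, ``brings a new point strictly closer to $1/2$, which can happen only finitely often per cell'' is also false: strict improvement can recur infinitely often inside one cell of positive length. The halving observation would save you \emph{if} the record were monotone, but eviction destroys monotonicity: the record holder at distance $\eta$ is deleted after $k$ uses, and if none of its children improved the record, the new left record can be much farther from $1/2$, after which mistakes can recur at distances comparable to (or larger than) $\eta$. Your ``self-bounding recursion'' is never actually derived, and without a quantitative, eviction-robust notion of progress there is no contradiction with $\smv$.

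The paper's proof supplies precisely this missing mechanism. It works by contrapositive and tracks the forest induced by the representant map $\phi$: mistakes on the left root disjoint trees, ``good'' trees (at least a constant fraction) have each node parenting at most one cross-threshold child, so every evicted node of a good tree has at least $3$ children inside the tree (this is where $k\ge 4$ is used quantitatively, not just qualitatively); hence some node of each good tree survives in $\Dcal_{t}$ at depth $O(\log_3 t)$. A path-squeezing induction then shows that the surviving representative of the $(i+1)$-st good tree satisfies $\frac12-x^{i+1}\le(\frac12-x^i)(1-t^{-\log 2/\log 3})$, a multiplicative gain \emph{per tree} that is immune to eviction because it is anchored at surviving dataset points rather than at a single record holder. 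Finally the harmonic partition $[\frac12-\frac1{2k},\frac12-\frac1{2(k+1)})$ is tuned so that, beyond index $k^0=O(t^{\log 2/\log 3})=o(k_l)$, each such multiplicative jump lands in a new cell, forcing $\Omega(k_l)=\Omega(\epsilon t_{k_l})$ cells to be visited and contradicting $\smv$. This quantitative squeezing lemma, together with the tree bookkeeping that makes it possible, is the essential content of the proof and is absent from your proposal.
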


\begin{proof}
We reason by the contrapositive and suppose that 4C1NN is not consistent on $f^*$. We will show that the process $\Xbb$ disproves the $\smv_{([0,1],|\cdot|)}$ condition by considering the partition $\Pcal$ of $\Xcal$ defined by 
\begin{equation*}
    \left\{\frac{1}{2}\right\} \cup \bigcup_{k\geq 1}\left[\frac{1}{2} - \frac{1}{2k}; \frac{1}{2} - \frac{1}{2(k+1)} \right)  \cup \bigcup_{k\geq 1}\left(\frac{1}{2} + \frac{1}{2(k+1)};\frac{1}{2}+  \frac{1}{2k}  \right].
\end{equation*}
Precisely, we will show that the process does not visit a sublinear number of sets of this partition with nonzero probability.

Because 4C1NN is not consistent, $\delta:=\mathbb P(\Lcal_\Xbb (4C1NN,f^*)>0)>0$. Define 
\begin{equation*}
    \Acal:=\{\Lcal_\Xbb (4C1NN,f^*)>0\}.
\end{equation*}
We now consider a specific realization $\mb x = (x_t)_{t\geq 0}$ of the process $\Xbb$ falling in the event $\Acal$. Note that $\mb x$ is not random anymore. We now show that $\mb x$ does not visit a sublinear number of sets in the partition $\Pcal$. By construction $\epsilon:=\Lcal_{\mb x} (4C1NN,f^*)>0$. We now denote by $(t_k)_{k\geq 1}$ the increasing sequence of all times when 4C1NN makes an error in the prediction of $f^*(x_t)$. Now define an increasing sequence of times $(T_l)_{l\geq 1}$ such that
\begin{equation*}
    \frac{1}{T_l}\sum_{t=1}^{T_l} \ell_{01}(4C1NN(\mb x_{<t},\mb y_{<t},x_t),f^*(x_t))> \frac{\epsilon}{2}.
\end{equation*}
For any $l\geq 1$ consider the last index $k = \max\{u,t_u\leq T_l\}$ when 4C1NN makes a mistake. Then we obtain $k > \frac{\epsilon}{2} T_l \geq \frac{\epsilon}{2} t_k$. Considering the fact that $(T_l)_{l\geq 1}$ is an increasing unbounded sequence we therefore obtain an increasing sequence of indices $(k_l)_{l\geq 1}$ such that $t_{k_l}<\frac{2k_l}{\epsilon}$.

At an iteration where the new input $x_t$ has not been previously visited we will denote by $\phi(t)$ the index of the nearest neighbor of the current dataset in the 4C1NN learning rule. Now let $l\geq 1$. We focus on the time $t_{k_l}$. Consider the tree $\Gcal$ where nodes are times $\Tcal:=\{t,\; t\leq t_{k_l},\; x_t\notin\{x_u, u<t\} \}$ for which a new input was visited, where the parent relations are given by $(t,\phi(t))$ for $t\in \Tcal\setminus\{1\}$. In other words, we construct the tree in which a new input is linked to its representant which was used to derive the target prediction. Note that by definition of the 4C1NN learning rule, each node has at most $4$ children and a node is not in the dataset at time $t_{k_l}$ when it has exactly $4$ children.

By symmetry, we will suppose without loss of generality that the majority of input points on which 4C1NN made a mistake belong to the first half $[0,\frac{1}{2}]$ i.e.
\begin{equation*}
    \left|\left\{t\leq t_{k_l},\; \ell_{01}(4C1NN(\mb x_{<t},\mb y_{<t},x_t),f^*(x_t))=1,\; x_t\in \left[0,1/2\right] \right\}\right| \geq \frac{k_l}{2}
\end{equation*}
or equivalently, $\left|\left\{k\leq k_l,\; x_{t_k}\leq \frac{1}{2}\right\}\right| \geq \frac{k_l}{2}$.

Let us now consider the subgraph $\tilde \Gcal$ given by restricting $\Gcal$ only to nodes in the first half-space $[0,1/2]$ which are mapped to the true value $1$ i.e. on times $\{t\in \Tcal,\; x_t\leq \frac{1}{2}\}$. In this subgraph, the only times with no parent are times $t_k$ with $k\leq k_l$ and $x_{t_k}\leq \frac{1}{2}$ and possibly time $t=1$. Indeed, if a time in $\tilde \Gcal$ has a parent $\phi(t)$ in $\tilde \Gcal$, the prediction of 4C1NN for $x_t$ returned the correct answer $1$. The converse is also true except for the root time $t=1$ which has no parent in $\Gcal$. Therefore, $\tilde \Gcal$ is a collection of disjoint trees with roots times $\{t_k, \; k\leq k_l, \; x_{t_k}\leq \frac{1}{2}\}$ (and possibly $t=1$). For a given time $t_k$ with $k\leq k_l$ and $x_{t_k}\leq \frac{1}{2}$, we will denote by $\Tcal_k$ the corresponding tree in $\tilde \Gcal$ with root $t_k$. We say that the $\Tcal_k$ is a \emph{good} tree if all times $t\in \Tcal_k$ of this tree are parent in $\Gcal$ to at most $1$ time from the second half-space $(\frac{1}{2},1]$ i.e. if 
\begin{equation*}
    \forall t\in \Tcal_k,\quad \left|\left\{u
    \leq t_{k_l},\; \phi(u) = t,\; x_u>\frac{1}{2}\right\}\right|\leq 1.
\end{equation*}
We denote by $G = \{k\leq k_l,\; x_{t_k}\leq \frac{1}{2},\; \Tcal_k \text{ good}\} $ the set of indices of good trees. By opposition, we will say that a tree is \emph{bad} otherwise. We now give a simple upper bound on $N_{\text{bad}}$ the number of bad trees. Note that for any time $t\in \Tcal_k$ of a tree, times in $\left\{u \leq t_{k_l},\; \phi(u) = t,\; x_u>\frac{1}{2}\right\}$ are when 4C1NN makes a mistake on the second-half $(\frac{1}{2},1]$. Therefore,
\begin{equation*}
    \sum_{k\leq k_l,\;  x_{t_k}\leq \frac{1}{2}} \sum_{t\in \Tcal_k} \left|\left\{u<t_{k_l},\; \phi(u) = t,\; x_u>\frac{1}{2}\right\}\right| \leq \left|\left\{k\leq k_l, x_{t_k}>\frac{1}{2}\right\}\right| \leq \frac{k_l}{2}
\end{equation*}
because by hypothesis $\left|\left\{k\leq k_l,\; x_{t_k}\leq \frac{1}{2}\right\}\right| \geq \frac{k_l}{2}$. Therefore, since each bad tree contains a node which is parent to at least $2$ times of mistake in $(\frac{1}{2},1]$, we obtain
\begin{equation*}
    N_{\text{bad}} \leq \sum_{k\leq k_l,\;  x_{t_k}\leq \frac{1}{2}} \sum_{t\in \Tcal_k} \frac{1}{2}\left|\left\{u<t_{k_l},\; \phi(u) = t,\; x_u>\frac{1}{2}\right\}\right| \leq  \frac{k_l}{4}.
\end{equation*}
Thus, the number of good trees is $|G|=\left|\left\{k\leq k_l,\; x_{t_k}\leq \frac{1}{2}\right\}\right|-N_{\text{bad}}\geq \frac{k_l}{4}$. We now focus on good trees only and analyze their relation with the final dataset $\Dcal_{t_{k_l}}$. Precisely, for a good tree $\Tcal_k$, denote $\Vcal_k = \Tcal_k\cap \Dcal_{t_{k_l}}$ the set of times which are present in the final dataset and belong to the tree induced by error time $t_k$. One can note that the sets $\{x_u,\; u\in \Vcal_k\}_{k\in G}$ are totally ordered:
\begin{equation*}
    \forall k_1<k_2\in G,\; \forall t_1\in \Tcal_{k_1},\; \forall t_2 \in \Tcal_{k_2},\quad x_{t_1} < x_{t_2}.
\end{equation*}
This can be shown by observing that at each iteration $t$ of 4C1NN, the following invariant is conserved: the sets $\{x_u,\; u\in \Tcal_k\cap \Dcal_t  \}_{k\in \{l\in G,\; t_l\leq t\}}$ are totally ordered. The induction follows from the fact that when a new input point is visited, 4C1NN performs the 1NN learning rule on the current dataset $\Dcal_l$. Therefore, either the sets $\{x_u,\; u\in \Tcal_k\cap \Dcal_t  \}_{k\in \{l\in G,\; t_l\leq t\}}$ are conserved, or a new point is added when $t=t_k$ for some $k\leq k_l$ which forms its own tree and is closest to $(\frac{1}{2},1]$ than all other sets $\{x_u,\; u\in \Tcal_k\cap \Dcal_t  \}_{k\in \{l\in G,\; t_l\leq t\}}$, or a new point is added to an existing tree $\Tcal_k$ in which case it should be closer to some time of $\Tcal_k\cap \Dcal_t$ than any time in $\Tcal_{k-1}\cap \Dcal_t$ or $\Tcal_{k+1}\cap\Dcal_t$---if $\Tcal_{k-1}$ or $\Tcal_{k+1}$ exist. Additionally, a time may be removed which is still consistent with the invariant. Last, we observe that these sets never run empty because a time is removed only when at least $3$ other points were added to the same set.

We now reason by induction to show that the sets $\{x_u,\; u\in \Vcal_k\}_{k\in G}$ are also well separated---in a multiplicative way. Let us order the good trees by $G=\{g_1<\ldots<g_{|G|}\}$ and start with tree $\Tcal_{g_1}$. Consider any leaf of this tree and the corresponding path to the root $p_l\to p_{l-1}\to p_0=t_{g_1}$ and define $x^1=\min_{1\leq i\leq l} x_{p_i}$. By construction, any point on this path is being replaced by its parent. Therefore, at any step of the algorithm 4C1NN at least one point on this path is available in the dataset $\Dcal_t$ for any $t\geq t_{g_1}$---for instance the last time $p_i$ such that $p_i\leq t$. This point $x^1$ provides a lower bound for the maximum point in $\{x_u,\; u\in \Tcal_{g_1}\cap \Dcal_t  \}$ which in turn will provide a lower bound for all points in $\{x_u,\; u\in \Tcal_{g_2}\cap \Dcal_t  \}$.

Let us now turn to $\Tcal_{g_2}$. By construction, in a good tree $\Tcal_k$, a time $t\in \Tcal_k$ which is not in the final dataset $\Dcal_{t_{k_l}}$ must be parent to at least $3$ other times within $\Tcal_k$. Therefore, until the minimal depth of an available time $\Vcal_{g_2} = \Tcal_{g_2}\cap \Dcal_{t_{k_l}}$ in the current dataset $\Dcal_{t_{k_l}}$, each node of the tree $\Tcal_{g_2}$ has at least $3$ parents which correspond necessarily to times $t>t_{g_2}$. Therefore, the minimal depth $d(g_2)$ of an available time $\Vcal_k$ in the current dataset satisfies 
\begin{equation*}
    \sum_{i=0}^{d(g_2)-1} 3^i\leq |\Tcal_{g_2}| \leq t_{k_l}.
\end{equation*}
Therefore $d(g_2) \leq \log_3 (2t_{k_l}  + 1)\leq \log_3 t_{k_l}.$ Now consider the specific path from this node in $\Vcal_{g_2}$ of minimal depth to the root $t_{g_2}$. Denote this path $p_{d(g_2)}\to p_{d(g_2)-1}\to p_0=t_{g_2}$. Each arc of this path represents the fact that at the corresponding iteration $p_i$ of 4C1NN, the parent $x_{p_{i-1}}$ was closer from $x_{p_{i}}$ than any other point of the current dataset $\Dcal_{p_i}$, in particular any point of $\{x_u,\; u\in \Tcal_{g_1}\cap \Dcal_{p_i}  \}$. This gives $|x_{p_{i-1}}-x_{p_{i}}|\leq |x^1-x_{p_{i-1}}| = x_{p_{i-1}}-x^1$ because we have $x_{p_{i-1}},x_{p_i}>x^1$. Therefore we obtain 
\begin{equation*}
    x_{p_{i-1}} \geq \frac{x^1+x_{p_i}}{2}.
\end{equation*}
Indeed, if this were not the case we would have $|x_{p_{i-1}}-x_{p_{i}}| = x_{p_i}-x_{p_{i-1}}>x_{p_{i-1}}-x^1$. Similarly, considering the fact that 4C1NN makes a mistake at time $t_{g_2}$, the parent of $t_{g_2}$ satisfies $x_{\phi(t_{g_2})} >  \frac{1}{2}$ which yields $x_{t_{g_2}}\geq \frac{x^1 + x_{\phi(t_{g_2})}}{2}\geq \frac{x^1 + \frac{1}{2}}{2}$. Hence, for any $0\leq i\leq d(g_2),$
\begin{equation*}
    x_{p_i}\geq x^1 \left(1-\frac{1}{2^i}\right) +  \frac{x_{t_{g_2}}}{2^{i}}\geq x^1 + \frac{x_{t_{g_2}}-x^1}{2^{d(g_2)}}\geq x^1 +\left(\frac{1}{2}-x^1\right)t_{k_l}^{-\frac{\log 2}{\log 3}}.
\end{equation*}
Again, at every iteration $t\geq t_{g_2}$ of 4C1NN, at least one of the points $x_{p_i}$ is available in the dataset $\Dcal_t$---for instance the last $x_{p_i}$ such that $p_i\leq t$. By total ordering, this $x^2:=\min_{0\leq i\leq d(g_2)} x_{p_i}$ provides a lower bound for all points $\{x_u,\; u\in \Tcal_{g_3}\cap \Dcal_t  \}$ whenever $t\geq t_{g_3}$. Hence, the lower bound $x^2$ acts as a new barrier: the equivalent of $x^1$ for the above argument with $\Tcal_{g_2}$.

For clarity, we precise the next iteration of the induction for $\Tcal_{g_3}$. The minimal depth $d(g_3)$ of an available time $\Vcal_{g_3}$ satisfies $d(g_3)\leq \log_3 (t_{k_l} - t_{g_3}+1) + 1$ using the same argument as above. Now consider the corresponding path in $\Tcal_{g_3}$ from this minimal depth node to the root $p_{d(g_3)}\to\ldots \to p_0 = t_{g_3}$. By definition of the 4C1NN learning rule, the parent $x_{p_{i-1}}$ was closer to $x_{p_i}$ than any point of $\{x_u,\; u\in \Tcal_{g_2}\cap \Dcal_t  \}$. By the previous step of the induction, we know that the maximum value of this set is at least $x^2$. Therefore, we obtain $|x_{p_{i-1}}-x_{p_{i}}|\leq |x^2-x_{p_i}| = x_{p_i}-x^2$. We recall that we also have $x_{p_{i-1}}\geq x^2$ and $x_{p_i}\geq x^2$. The same argument as above gives $x_{p_i}\geq \frac{x^2 + x_{p_{i-1}}}{2}$. Further, we obtain similarly $x_{t_{g_3}}\geq \frac{x^2+ x_{\phi(t_{g_3})}}{2} \geq \frac{x^2+ \frac{1}{2}}{2}$. Hence, for all $0\leq i\leq d(g_3)$,
\begin{equation*}
    x_{p_i} \geq  x^2 +  \frac{x_{t_{g_3}}-x^2}{2^{d(g_3)}} \geq x^2 +  \left(\frac{1}{2}-x^2\right)t_{k_l}^{-\frac{\log 2}{\log 3}}.
\end{equation*}
We denote $x^3 := \min_{0\leq i\leq d(g_3)} x_{p_i}$, which now acts as a lower barrier for the tree $\Tcal_{g_4}$ and we can apply the induction.

We complete this induction for $\Tcal_{g_3},\ldots,\Tcal_{g_{|G|}}$. This creates a sequence of distinct visited input points $(x^i)_{1\leq i\leq |G|}$ with $x^i\leq \frac{1}{2}$ such that for any $1\leq i<|G|$, $x^{i+1}\geq x^i + \left(\frac{1}{2}-x^i\right)t_{k_l}^{-\frac{\log 2}{\log 3}}$ i.e.
\begin{equation*}
    \frac{1}{2}-x^{i+1}\leq \left(\frac{1}{2}-x^i\right)\left(1-t_{k_l}^{-\frac{\log 2}{\log 3}}\right).
\end{equation*}
In particular, we can observe that $0\leq x^1<x^2<\ldots <x^{|G|}\leq \frac{1}{2}$. Further, recalling that we have $t_{k_l}<\frac{2k_l}{\epsilon}$, we get
\begin{equation*}
    \log \left(\frac{1}{2}-x^{i+1}\right) - \log \left(\frac{1}{2}-x^i\right) \leq  \log \left(1-t_{k_l}^{-\frac{\log 2}{\log 3}}\right) \leq  -t_{k_l}^{-\frac{\log 2}{\log 3}} \leq -  \left(\frac{\epsilon}{2k_l}\right)^{\frac{\log 2}{\log 3}},
\end{equation*}
for any $1\leq i\leq |G|-1$. We will now argue that most of these points $x^i$ fall in distinct sets of the type $[a_k,a_{k+1})$ where $a_k:=\frac{1}{2} - \frac{1}{2k}$ for $k\geq 1$. We observe that for any $k\geq 1$, we have by concavity $\log \left(\frac{1}{2} - a_{k+1}\right) - \log \left(\frac{1}{2} - a_{k}\right) = \log \left(1-\frac{1}{k+1}\right)\geq -\frac{\log 2}{k+1}$. Therefore, with $k^0 = \left\lceil\log 2 \cdot \left(\frac{2k_l}{\epsilon}\right)^{\frac{\log 2}{\log 3}}\right\rceil$, for any $k\geq k^0$ we have
\begin{equation*}
    \log \left(\frac{1}{2} - a_{k+1}\right) - \log \left(\frac{1}{2} - a_{k}\right)>- \left(\frac{\epsilon}{2k_l}\right)^{\frac{\log 2}{\log 3}}.
\end{equation*}
Therefore, for any $1\leq i\leq |G|-1$ such that $x^i>a_{k^0}$, $x^{i}$ and $x^{i+1}$ would lie in different sets of the type $[a_k,a_{k+1})$, $k\geq 1$. In fact because the sequence $(x^i)_{1\leq i\leq |G|}$ is increasing, if $x^{i^*}>a_{k^0}$ then all points $(x^i)_{i^*\leq i\leq |G|}$ lie in distinct sets of the type $[a_k,a_{k+1})$, $k\geq 1$. Recall that $|G|\geq \frac{k_l}{4}$. Denote $i^* = \lfloor\frac{k_l}{8} \rfloor$. Because $(k_l)_{l\geq 1}$ is an increasing sequence, we have
\begin{equation*}
    \log \left(\frac{1}{2}-x^{i^*}\right) \leq \log \left(\frac{1}{2}\right) - (i^*-1)\left(\frac{\epsilon}{2k_l}\right)^{\frac{\log 2}{\log 3}}   \underset{l\to\infty}{\sim} -c_\epsilon k_l^{1-\frac{\log 2}{\log 3}},
\end{equation*}
where $c_\epsilon:=\frac{1}{8}\left(\frac{\epsilon}{2}\right)^{\frac{\log 2}{\log 3}}$ is a constant. Therefore,
\begin{equation*}
    \log \left(\frac{1}{2}-a_{k^0}\right)=-\log (2k^0) \underset{l\to\infty}{\sim} -\frac{\log 2}{\log 3}\log k_l =o\left( \log \left(\frac{1}{2}-x^{i^*}\right)\right)
\end{equation*}
which shows that for some constant $l^0$ and any $l\geq l^0$ we have $a_{k^0}<x^{i^*}<\frac{1}{2}$. Hence, for any $l\geq l^0$, all the points $(x^i)_{i^*\leq i\leq |G|}$ lie in distinct sets of the partition and there are at least $|G|-\frac{k_l}{8}\geq \frac{k_l}{8}$ such points. Therefore, for any $l\geq l^0$,
\begin{equation*}
    |\{P\in \Pcal, \quad P\cap \mb x_{\leq t_{k_l}}\neq \emptyset\}| \geq \frac{k_l}{8} \geq \frac{\epsilon}{16} t_{k_l}.
\end{equation*}
Because $t_{k_l}\to \infty$ as $l\to\infty$, this shows that $   |\{P\in \Pcal,\quad P\cap \mb  x_{< T}\neq \emptyset\}| \neq o(T).$ Because this holds for any realization of the event $\Acal$, we obtained
\begin{equation*}
    \mathbb P ( |\{P\in \Pcal,\quad P\cap \Xbb_{< T}\neq \emptyset\}| = o(T) ) \leq \mathbb P (\Acal^c )=1-\delta<1.
\end{equation*}
This shows that $\Xbb\notin \smv_{([0,1],|\cdot|)}$ and ends the proof of the proposition.
\end{proof}
Note that using the same proof, we observe that the result from Proposition \ref{prop:consistent_interval} holds for all learning rules $k$C1NN with $k\geq 4$.

We are now ready to prove that 4C1NN is universally consistent under processes of $\smv_{([0,1],|\cdot|)}$ for the binary classification setting. Intuitively, we analyze the set of functions on which 4C1NN is consistent under a fixed process $\Xbb\in \smv_{([0,1],|\cdot|)}$ and show that this is a $\sigma$-algebra. Proposition \ref{prop:consistent_interval} will be useful to show that this $\sigma$-algebra contains all intervals and as a result is the complete Borel $\sigma$-algebra $\Bcal$ i.e. 4C1NN is universally consistent under $\Xbb$.

\begin{theorem}
\label{thm:4C1NN}
Let $\Xcal=[0,1]$ with the usual topology $\Bcal$. For the binary classification setting, the learning rule 4C1NN is universally consistent for all processes $\Xbb\in \smv_{([0,1],|\cdot|)}$.
\end{theorem}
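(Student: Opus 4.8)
The plan is to fix a process $\Xbb\in\smv_{([0,1],|\cdot|)}$ and to study the family
$$\Sigma := \{A\in\Bcal : \Lcal_\Xbb(\text{4C1NN},\1_A)=0\ \ (a.s.)\}$$
of binary targets on which 4C1NN is strongly consistent, with the goal of proving $\Sigma=\Bcal$. The decisive structural remark is that the reduced dataset $\Dcal_t$, the representant map $\phi$, and the counters $n_u$ produced by Algorithm~\ref{alg:kC1NN} depend only on $\Xbb$ and not on the labels; moreover each point is used as a representant at most $k=4$ times, and a mistake can occur only at a fresh input $X_t$. Hence, writing $M_t(A)$ for the loss incurred at step $t$ when the target is $\1_A$, we have $M_t(A)=\1(\1_A(X_t)\neq\1_A(X_{\phi(t)}))$ on fresh points and $M_t(A)=0$ otherwise, so the entire error analysis is driven by the \emph{label-independent} nearest-neighbour forest of out-degree at most $4$. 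Since the intervals form a $\pi$-system generating $\Bcal$, it suffices by Dynkin's theorem to show that $\Sigma$ is a $\lambda$-system containing every interval.

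First I would record the easy closures. Proposition~\ref{prop:consistent_interval} already gives $[0,1/2]\in\Sigma$, and its proof uses nothing about the threshold $1/2$ beyond the existence of a partition accumulating at the boundary point; replaying the identical argument with a partition accumulating at an arbitrary $c\in[0,1]$ yields $[0,c],[c,1]\in\Sigma$, hence every interval lies in $\Sigma$. For Boolean operations I would use a pointwise domination coming from the shared, label-independent $\phi$: a split edge for $A\cup B$, for $A\cap B$, or for $A\setminus B$ is always a split edge for $A$ or for $B$, so that $M_t(A\star B)\le M_t(A)+M_t(B)$ for $\star\in\{\cup,\cap,\setminus\}$. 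This gives closure of $\Sigma$ under finite unions, intersections and differences, and in particular under complement (as $[0,1]\setminus A$, with $[0,1]\in\Sigma$ trivially). Thus $\Sigma$ is an algebra containing all intervals, and it satisfies every $\lambda$-system axiom except possibly closure under increasing unions.

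The heart of the proof is therefore closure under increasing unions: given $A_n\uparrow A$ with $A_n\in\Sigma$, show $A\in\Sigma$. Writing $B_n:=A\setminus A_n\downarrow\emptyset$, a fresh point $t$ that is a split edge for $A$ but not for $A_n$ must satisfy $X_t\in B_n$ or $X_{\phi(t)}\in B_n$; using the cap of $4$ on the number of times any point serves as a representant, both contributions are dominated by $D_n(T)$, the number of \emph{distinct} visited points lying in $B_n$ up to time $T$, giving
$$\frac1T\sum_{t\le T}M_t(A)\ \le\ \frac1T\sum_{t\le T}M_t(A_n)\ +\ \frac{5}{T}\,D_n(T).$$
The first term vanishes a.s. because $A_n\in\Sigma$, so everything reduces to showing that the boundary term $\limsup_T\frac1T D_n(T)$ can be made arbitrarily small by taking $n$ large. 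This is the main obstacle, and it is genuinely delicate: the $\smv$ condition does \emph{not} by itself force the frequency, or even the distinct-point count, of visits to the shrinking set $B_n$ to vanish, so one cannot simply invoke it on the single set $B_n$. The resolution must couple $\smv$ with the nearest-neighbour geometry exactly as in Proposition~\ref{prop:consistent_interval}: if this residual stayed bounded away from $0$, the capped-1NN trees rooted at the corresponding mistakes would again generate a multiplicatively separated, hence logarithmically indexed, family of distinct points inside a partition adapted to $\{B_n\}$, forcing the process to meet a linear number of cells and contradicting $\smv$. I would therefore prove the boundary estimate by replaying the tree/separation argument of Proposition~\ref{prop:consistent_interval} relative to $B_n$, which is the technically demanding step.

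Once the increasing-union closure is established, $\Sigma$ is a $\lambda$-system containing the $\pi$-system of intervals, which generates the Borel $\sigma$-algebra $\Bcal$ on $[0,1]$; Dynkin's $\pi$-$\lambda$ theorem then yields $\Sigma=\Bcal$, i.e.\ 4C1NN is strongly consistent under $\Xbb$ for every binary target, as claimed. I expect the only substantive work to be the boundary estimate of the previous paragraph; every other step is a soft consequence of the label-independence of the representant map together with the out-degree bound.
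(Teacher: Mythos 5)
Your overall skeleton coincides with the paper's: the paper also fixes $\Xbb\in\smv_{([0,1],|\cdot|)}$, studies exactly the family $\Scal_\Xbb$ of targets on which 4C1NN is consistent, seeds it with intervals via Proposition \ref{prop:consistent_interval} (re-centering the partition at an arbitrary $s$), obtains complements from label-invariance and finite unions from the same pointwise domination you use, and then closes under a countable operation. The paper closes under countable \emph{disjoint} unions rather than increasing unions, but once the algebra structure is in place these two closures are interchangeable (disjointify an increasing union, or view a disjoint union as an increasing union of finite unions), so that difference is cosmetic. The substance lies entirely in the countable step.

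That is where your proposal has a genuine gap. Your reduction $\frac1T\sum_{t\le T}M_t(A)\le\frac1T\sum_{t\le T}M_t(A_n)+\frac{5}{T}D_n(T)$ is valid (the factor $5=1+4$ is exactly the paper's), and your caution that $\smv$ cannot simply be invoked on the shrinking sets $B_n$ is well placed. But the repair you propose --- ``replay the tree/separation argument of Proposition \ref{prop:consistent_interval} relative to $B_n$'' --- would not go through: that argument is tied to the order structure of $[0,1]$ and to a single fixed threshold. It needs the good trees to be totally ordered, every mistake to be induced by a representant from across the threshold, and the roots to be squeezed multiplicatively against the boundary, which is what produces the logarithmically indexed points in the cells $[a_k,a_{k+1})$. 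An arbitrary Borel residual $B_n=A\setminus A_n$ has no threshold, no ordering and no distinguished accumulation point, so none of that machinery applies. What the paper actually does in the countable step is different, and it is the idea missing from your sketch: since each piece (in your framing, the disjoint sets $C_m=A_{m+1}\setminus A_m$, which lie in $\Sigma$ by your algebra step) is already known to be consistent, one chooses for each $i$ a deterministic horizon $T^i$ by which the $A_i$-attributable error rate is below $\epsilon_i=\epsilon/(4\cdot 2^i)$ with probability at least $1-\delta_i$, and a separation scale $\eta_i>0$ which, with probability at least $1-\delta_i$, lower-bounds the minimum gap between distinct visited points of $A_i$ up to time $T^i$. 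These choices depend only on the \emph{distribution} of $\Xbb$, so the partition subdividing each $A_i$ into $\eta_i$-cells (together with $A^c$) is a single fixed test partition for $\smv$. On the intersection of the good events, at an error time $t_k$ at least $\frac{\epsilon}{4}t_k$ of the errors are attributable to ``late'' pieces with $T^i>t_k$; charging each such error (at most five-to-one, via the cap) to a distinct visited point of a late $A_i$, and noting that such points are pairwise $\eta_i$-separated precisely because $t_k<T^i$, one finds at least $\frac{\epsilon}{20}t_k$ distinct cells visited, contradicting $\smv$. It is this distribution-dependent choice of horizons and scales --- not the interval tree argument --- that circumvents the obstruction you yourself identified; without it, the crucial step of your proof does not close.
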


\begin{proof}
let $\Xbb\in \smv_{([0,1],|\cdot|)}$. We will show that 4C1NN is universally consistent on $\Xbb$ by considering the set $\Scal_\Xbb$ of functions for which it is consistent. More precisely, since $\Ycal = \{0,1\}$ in the binary setting, all target functions can be described as $f^= \1_{ A_{f^*}}$ where $A_{f^*} = f^{<-1>}(\{1\})$ is a measurable set. In the following, we will refer interchangeably to the function $f^*$ or the set $A_{f^*}$, and define $\Scal_\Xbb$ using the corresponding sets:
\begin{equation*}
    \Scal_\Xbb:= \{A\in \Bcal,\quad \Lcal_\Xbb(4C1NN,\1_A)=0\quad (a.s.) \}
\end{equation*}
By construction we have $\Scal_\Xbb\subset\Bcal$. The goal is to show that in fact $\Scal_\Xbb = \Bcal$. To do so, we will show that $\Scal$ satisfies the following properties
\begin{itemize}
    \item $\emptyset\in \Scal_\Xbb$ and $\Scal_\Xbb$ contains all intervals $[0,s)$ with $0< s\leq 1$,
    \item if $A\in \Scal_\Xbb$ then $A^c\in \Scal_\Xbb$ (stable to complementary),
    \item if $(A_i)_{i\geq 1}$ is a sequence of disjoint sets of $\Scal_\Xbb$, then $\bigcup_{i\geq 1} A_i\in \Scal_\Xbb$ (stable to $\sigma-$additivity for disjoint sets),
    \item if $A,B\in \Scal_\Xbb$, then $A\cup B\in \Scal_\Xbb$ (stable to union).
\end{itemize}
Together, these properties show that $\Scal_\Xbb$ is a $\sigma-$algebra that contains all open intervals of $\Xcal=[0,1]$. Recall that by definition, $\Bcal$ is the smallest $\sigma-$algebra containing open intervals. Therefore we get $\Bcal\subset \Scal_\Xbb$ which proves the theorem. We now show the four properties.\\

We start by showing the invariance to complementary. Note that 4C1NN is invariant to labels and that the loss $\ell_{01}$ is symmetric. Therefore, if it achieves consistency for $\1_{A}$ it also achieves consistency for $\1_{ A^c}$. Indeed, at each step, 4C1NN will use the same representant for the prediction hence for any $t\geq 0$,
\begin{equation*}
   \ell_{01}(4C1NN(\mb x_{<t},\1_{\mb x_{<t}\in A},x_t),\1_{x_t\in A}) = \ell_{01}(4C1NN(\mb x_{<t},\1_{\mb x_{<t}\in A^c},x_t),\1_{x_t\in A^c}).
\end{equation*}

4C1NN is clearly consistent for $f^*= 0$. Therefore $\emptyset \in \Scal_\Xbb$. Now let $0<s \leq 1$. We will show that $[0,s)\in \Scal_\Xbb$. Proposition \ref{prop:consistent_interval} shows that $[0,\frac{1}{2}]\in \Scal_\Xbb$. In fact, one can note that the same proof shows that $[0,\frac{1}{2})\in \Scal_\Xbb$. Further, for any $0<s\leq 1$ using the same proof with the following partition centered in $s$,
\begin{equation*}
    \left\{s\right\} \cup \bigcup_{k\geq 1}\left[s\left(1 - \frac{1}{k}\right);s\left(1 - \frac{1}{k+1}\right) \right)  \cup \bigcup_{k\geq 1}\left(s+ \frac{1-s}{k+1};s+  \frac{1-s}{k}  \right]
\end{equation*}
shows that $[0,s],[0,s)\in \Scal_\Xbb$.\\

We now turn to the $\sigma-$additivity for disjoint sets. Let $(A_i)_{i\geq 1}$ is a sequence of disjoint sets of $\Scal_\Xbb$. We denote $A:= \bigcup_{i\geq 1} A_i$. We consider the target function $f^*= \1_{A}$. There are two types of statistical errors: errors of type 1 correspond to $X_t\in A$ and a predicted value $0$ while type 2 errors correspond to $X_t\notin A$ and a predicted value $1$. We then write the average loss in the following way,
\begin{equation*}
     \frac{1}{T}\sum_{t=1}^{T}\ell_{01}(4C1NN(\Xbb_{< t},\Ybb_{< t}, X_{t}), f^*(X_t)) = \frac{1}{T}\sum_{t=1}^{T} \1_{X_t\in A}  \1_{X_{\phi(t)}\notin A} + \frac{1}{T}\sum_{t=1}^{T}\1_{X_t\notin A}  \1_{X_{\phi(t)}\in A},
\end{equation*}
where the first term corresponds to type 1 errors and the second term corresponds to type 2 errors.

We suppose by contradiction that $\mathbb P(\Lcal_\Xbb(4C1NN,f^*)> 0):=\delta>0$ Therefore, there exists $\epsilon>0$ such that $\mathbb P(\Lcal_\Xbb(4C1NN,f^*)> \epsilon)\geq \frac{\delta}{2}$. We denote this event by $\Acal:=\{\Lcal_\Xbb(4C1NN,f^*)>\epsilon\}$. We first analyze the errors induced by one set $A_i$ only. We have 
\begin{align*}
    \frac{1}{T}\sum_{t=1}^{T} ( \1_{X_t\in A_i }  \1_{X_{\phi(t)}\notin A} + \1_{X_t\notin A}  \1_{X_{\phi(t)}\in A_i}) & \leq \frac{1}{T}\sum_{t=1}^{T} (\1_{X_t\in A_i }  \1_{X_{\phi(t)}\notin A_i} + \1_{X_t\notin A_i}  \1_{X_{\phi(t)}\in A_i})\\
    &= \frac{1}{T}\sum_{t=1}^{T} \ell_{01}(4C1NN(\Xbb_{< t},\1_{\Xbb_{< t}\in A_i}, X_{t}), \1_{X_t\in A_i}).
\end{align*}
Then, because 4C1NN is consistent for $\1_{ A_i}$, we have
\begin{equation*}
    \frac{1}{T}\sum_{t=1}^{T} ( \1_{X_t\in A_i }  \1_{X_{\phi(t)}\notin A} + \1_{X_t\notin A}  \1_{X_{\phi(t)}\in A_i}) \to 0 \quad (a.s.).
\end{equation*}
We take $\epsilon_i = \frac{\epsilon}{4\cdot 2^i}$ and $\delta_i = \frac{\delta}{8\cdot 2^i}$. The above equation gives
\begin{equation*}
    \mathbb P\left[\bigcup_{t_0\geq 1} \bigcap_{T\geq t_0} \left\{\frac{1}{T}\sum_{t=1}^{T} ( \1_{X_t\in A_i }  \1_{X_{\phi(t)}\notin A} + \1_{X_t\notin A}  \1_{X_{\phi(t)}\in A_i}) <\epsilon_i \right\} \right] = 1.
\end{equation*}
Therefore, let $T^i$ such that
\begin{equation*}
     \mathbb P\left[ \bigcap_{T\geq T^i} \left\{\frac{1}{T}\sum_{t=1}^{T} ( \1_{X_t\in A_i }  \1_{X_{\phi(t)}\notin A} + \1_{X_t\notin A}  \1_{X_{\phi(t)}\in A_i}) <\epsilon_i \right\} \right] \geq 1-\delta_i.
\end{equation*}
We will denote by $\Ecal_i$ this event. We now consider the scale of the process $\Xbb_{\leq T^i}$ when falling in $A_i$, by introducing $\eta_i>0$ such that
\begin{equation*}
    \mathbb P\left[ \min_{ \substack{
        t_1,t_2 \leq T^i;\; X_{t_1},X_{t_2}\in A_i; \\
        X_{t_1}\neq X_{t_2}
     }} |X_{t_1}-X_{t_2}| > \eta_i \right]  \geq 1-\delta_i.
\end{equation*}
We denote by $\Fcal_i$ this event. By the union bound, we have $\mathbb P(\bigcup_{i\geq 1} \Ecal_i^c\cup\bigcup_{i\geq 1} \Fcal_i^c)\leq \frac{\delta}{4}$. Therefore, we obtain $\mathbb P(\Acal\cap \bigcap_{i\geq 1} \Ecal_i\cap\bigcap_{i\geq 1} \Fcal_i)\geq \mathbb P(\Acal) - \mathbb P(\bigcup_{i\geq 1} \Ecal_i^c\cup\bigcup_{i\geq 1} \Fcal_i^c) \geq \frac{\delta}{4}$. We now construct a partition $\Pcal$ obtained by subdividing each set $A_i$ according to scale $\eta_i$. For simplicity, we use the notation $N_i = \lfloor \frac{1}{\eta_i}\rfloor$ and construct the partition given of $\Xcal=[0,1]$ given by
\begin{equation*}
    \Pcal\; : \quad A^c\cup \bigcup_{i\geq 1} \left\{([N_i\eta_i,1]\cap A_i ) \cup  \bigcup_{j=0}^{N_i-1} \left([j\eta_i,(j+1)\eta_i)\cap A_i\right) \right\}.
\end{equation*}
Let us now consider a realization of $\mb x$ of $\Xbb$ in the event $\Acal\cap \bigcap_{i\geq 1} \Ecal_i\cap\bigcap_{i\geq 1} \Fcal_i$. The sequence $\mb x$ is now not random anymore. Our goal is to show that $\mb x$ does not visit a sublinear number of sets in the partition $\Pcal$.

By construction, the event $\Acal$ is satisfied, therefore there exists an increasing sequence of times $(t_k)_{k\geq 1}$ such that for any $k\geq 1$, $\frac{1}{t_k}\sum_{t=1}^{t_k}\ell_{01}(4C1NN(\mb x_{< t},\1_{\mb x_{< t} \in A}, x_{t}), \1_{x_t\in A}) > \frac{\epsilon}{2}.$ Therefore, we obtain for any $k\geq 1$,
\begin{equation*}
   \sum_{i\geq 1} \frac{1}{t_k}\sum_{t=1}^{t_k} ( \1_{x_t\in A_i }  \1_{x_{\phi(t)}\notin A} + \1_{x_t\notin A}  \1_{x_{\phi(t)}\in A_i}) > \frac{\epsilon}{2}.
\end{equation*}
Also, because the events $\Ecal_i$ are met, we have
\begin{equation*}
    \sum_{i\geq 1;\; t_k\geq T^i} \frac{1}{t_k}\sum_{t=1}^{t_k} ( \1_{x_t\in A_i }  \1_{x_{\phi(t)}\notin A} + \1_{x_t\notin A}  \1_{x_{\phi(t)}\in A_i}) < \sum_{i\geq 1, t_k\geq T^i} \epsilon_i \leq \frac{\epsilon}{4}.
\end{equation*}
Combining the two above equations gives
\begin{equation}
\label{eq:main}
    \frac{1}{t_k}  \sum_{t=1}^{t_k} \sum_{i\geq 1;\; t_k<T^i} ( \1_{x_t\in A_i }  \1_{x_{\phi(t)}\notin A} + \1_{x_t\notin A}  \1_{x_{\phi(t)}\in A_i}) >\frac{\epsilon}{4}.
\end{equation}
We now consider the set of times such that an input point fell into the set $A_i$ with $T^i>t_k$, either creating a mistake in the prediction of 4C1NN or inducing a later mistake within time horizon $t_k$: $ \Tcal:= \bigcup_{i\geq 1;\; T^i>t_k} \Tcal_i$ where 
\begin{equation*}
    \Tcal_i:= \left\{t\leq t_k,\; x_t\in A_i,\; \left(x_{\phi(t)}\notin A \text{ or }\exists t<u\leq t_k \text{ s.t. }\phi(u)=t,\; x_u\notin A\right)\right\}.
\end{equation*}
We now show that all points $x_t$ for $t\in \Tcal$ fall in distinct sets of the partition $\Pcal$. Indeed, because the sets $A_i$ are disjoint, it suffices to check that for any $i\geq 1$ such that $T^i>t_k$, the points $x_t$ for $t\in \Tcal_i$ fall in distinct of the following sets
\begin{equation*}
    [N_i\eta_i,1]\cap A_i ,\quad   [j\eta_i,(j+1)\eta_i)\cap A_i,\quad 0\leq j\leq N_i-1.
\end{equation*}
Note that for any $t_1< t_2\in \Tcal_i$ we have $x_{t_1},x_{t_2}\in A_i$ and $x_{t_1}\neq x_{t_2}$. Indeed, we cannot have $x_{t_2}=x_{t_1}$ otherwise 4C1NN would make no mistake at time $t_2$ and $x_{t_2}$ would induce no future mistake either (recall that if an input point was already visited, we use simple memorization for the prediction and do not add it to the dataset). Therefore, because the event $\Fcal_i$ is satisfied, for any $t_1< t_2\in \Tcal_i$ we have $|x_{t_1}-x_{t_2}|>\eta_i$. Hence $x_{t_1}$ and $x_{t_2}$ lie in different sets among $[N_i\eta_i,1]\cap A_i$ or $[j\eta_i,(j+1)\eta_i)\cap A_i$ for $0\leq j\leq N_i-1$. This shows that all points $\{x_t,\; t\in \Tcal\}$ lie in different sets of the partition $\Pcal$. Therefore,
\begin{equation*}
    |\{P\in \Pcal, P\cap \mb x_{\leq t_k}\neq \emptyset\}| \geq |\Tcal|.
\end{equation*}
We now lower bound $|\Tcal|$, which will uncover the main interest of the learning rule 4C1NN. Intuitively, this learning rule prohibits a single input point $x_t$ to induce a large number of mistakes in the learning process. Indeed, any input point incurs at most $1+4=5$ mistakes while this number of mistakes incurred by a single point can potentially by unbounded for the traditional 1NN learning rule. We now formalize this intuition.

\begin{align*}
    \sum_{t=1}^{t_k} \sum_{i\geq 1;\; t_k<T^i} ( \1_{x_t\in A_i }  \1_{x_{\phi(t)}\notin A} &+ \1_{x_t\notin A}  \1_{x_{\phi(t)}\in A_i}) \\
    &= \sum_{t=1}^{t_k} \sum_{i\geq 1;\; t_k<T^i} \left( \1_{x_t\in A_i }  \1_{x_{\phi(t)}\notin A} + \sum_{t<u\leq t_k}\1_{x_u\notin A}  \1_{x_t\in A_i}\1_{\phi(u)=t}\right)\\ 
    & = \sum_{i\geq 1;\; T^i>t_k} \sum_{t\leq t_k,\;x_t\in A_i}  \left(\1_{x_{\phi(t)}\notin A} + \sum_{t<u\leq t_k} \1_{x_u\notin A}\1_{\phi(u)=t}\right)\\
    &\leq \sum_{i\geq 1;\; T^i>t_k} \sum_{t\leq t_k,\;x_t\in A_i} 5\max\left(\1_{x_{\phi(t)}\notin A},\1_{x_u\notin A}\1_{\phi(u)=t},\; t<u\leq t_k\right)\\
    &= 5|\Tcal|
\end{align*}
where in the last inequality we used the fact that a given time $t$ can have at most $4$ children i.e. $|\{u>t, \phi(u)=t\}|\leq 4$ with the 4C1NN learning rule. We now use Equation (\ref{eq:main}) to obtain
\begin{equation*}
     |\{P\in \Pcal, P\cap \mb x_{\leq t_k}\neq \emptyset\}| \geq |\Tcal| \geq \frac{\epsilon}{20}t_k.
\end{equation*}
This holds for any $k\geq 1$. Therefore, because $t_k\to\infty$ as $k\to\infty$ we get $|\{P\in \Pcal, P\cap \mb x_{\leq T}\neq \emptyset\}|\neq o(T).$ Finally, this holds for any realization of $\Xbb$ in the event $\Acal\cap \bigcap_{i\geq 1} \Ecal_i\cap\bigcap_{i\geq 1} \Fcal_i$. Therefore,
\begin{equation*}
    \mathbb P(|\{P\in \Pcal, P\cap \mb x_{\leq T}\neq \emptyset\}|=o(T) )\leq \mathbb P\left[\left(\Acal\cap \bigcap_{i\geq 1} \Ecal_i\cap\bigcap_{i\geq 1} \Fcal_i\right)^c\right] \leq 1-\frac{\delta}{4}<1.
\end{equation*}
Therefore, $\Xbb\notin\smv_{([0,1],|\cdot|)}$ which contradicts the hypothesis. This concludes the proof that 
\begin{equation*}
    \Lcal_\Xbb(4C1NN,\1_{\cdot\in A})=0 \quad (a.s.),
\end{equation*} 
and hence, $\Scal_\Xbb$ satisfies the $\sigma-$additivity property for disjoint sets.\\

Note that the choice of disjoint sets for the proof of $\sigma-$additivity was made for convenience so that the partition defined is not too complex. However to complete the proof of the $\sigma-$additivity of $\Scal_\Xbb$, we have to prove that we can take unions of sets. Let $A_1,A_2\in \Scal_\Xbb$. We consider $A=A_1\cup A_2$ and $f^*(\cdot)=\1_{\cdot\in A}$. Using the same arguments as above, we still have for $T\geq 1$,
\begin{equation*}
    \frac{1}{T}\sum_{t=1}^{T} ( \1_{X_t\in A_i }  \1_{X_{\phi(t)}\notin A} + \1_{X_t\notin A}  \1_{X_{\phi(t)}\in A_i}) \to 0 \quad (a.s.).
\end{equation*}
for $i\in\{1,2\}$. But note that for any $T\geq 1$,
\begin{align*}
     \frac{1}{T}\sum_{t=1}^{T} &\ell_{01}(4C1NN(\Xbb_{< t},\Ybb_{< t}, X_{t}), f^*(X_t))\\
     &= \frac{1}{T}\sum_{t=1}^{T} \1_{X_t\in A}  \1_{X_{\phi(t)}\notin A} + \frac{1}{T}\sum_{t=1}^{T}\1_{X_t\notin A}  \1_{X_{\phi(t)}\in A}\\
     &\leq \frac{1}{T}\sum_{t=1}^{T} (\1_{X_t\in A_1}+\1_{X_t\in A_2})  \1_{X_{\phi(t)}\notin A} + \frac{1}{T}\sum_{t=1}^{T}\1_{X_t\notin A}  (\1_{X_{\phi(t)}\in A_1}+\1_{X_{\phi(t)}\in A_2})\\
     &= \sum_{i=1}^2 \frac{1}{T}\sum_{t=1}^{T} ( \1_{X_t\in A_i }  \1_{X_{\phi(t)}\notin A} + \1_{X_t\notin A}  \1_{X_{\phi(t)}\in A_i}) .
\end{align*}
Therefore we obtain directly $\Lcal_\Xbb(4C1NN,\1_{\cdot\in A})=0 \quad (a.s.)$. This shows that $A_1\cup A_2\in \Scal_\Xbb$ and ends the proof of the theorem.
\end{proof}
As an immediate consequence of Theorem \ref{thm:4C1NN} and Proposition \ref{prop:smv_necessary}, we obtain the following results.

\begin{theorem}
\label{thm:suol=smv_special_case}
$\suol_{([0,1],|\cdot|),(\{0,1\},\ell_{01})}=\smv_{([0,1],|\cdot|)}$.
\end{theorem}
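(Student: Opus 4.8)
The plan is to prove the equality by establishing the two inclusions separately; both directions are in fact already available from results proved above, so the statement is essentially a bookkeeping corollary.

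For the inclusion $\suol_{([0,1],|\cdot|),(\{0,1\},\ell_{01})}\subset\smv_{([0,1],|\cdot|)}$, I would simply invoke Proposition \ref{prop:smv_necessary} with $\Xcal=[0,1]$ under the usual metric and output setting $(\Ycal,\ell)=(\{0,1\},\ell_{01})$. Here $\sup_{y_1,y_2}\ell_{01}(y_1,y_2)=1$, so the hypothesis $0<\bar\ell<\infty$ of that proposition is satisfied, and the inclusion follows verbatim. No further work is needed for this direction.

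For the reverse inclusion $\smv_{([0,1],|\cdot|)}\subset\suol_{([0,1],|\cdot|),(\{0,1\},\ell_{01})}$, I would unwind the definition of $\suol$: a process $\Xbb$ belongs to $\suol_{([0,1],|\cdot|),(\{0,1\},\ell_{01})}$ as soon as there exists \emph{some} learning rule that is strongly universally consistent under $\Xbb$, i.e. consistent for every measurable $f^*:[0,1]\to\{0,1\}$. Theorem \ref{thm:4C1NN} exhibits exactly such a rule, 4C1NN, uniformly over the whole class: for every $\Xbb\in\smv_{([0,1],|\cdot|)}$ and every measurable target $f^*$ one has $\Lcal_\Xbb(4C1NN,f^*)=0$ almost surely. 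Hence every $\Xbb\in\smv_{([0,1],|\cdot|)}$ admits strong universal online learning, which is precisely the assertion $\Xbb\in\suol_{([0,1],|\cdot|),(\{0,1\},\ell_{01})}$.

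Combining the two inclusions yields the equality. There is no remaining obstacle here: all the nontrivial content lives in Theorem \ref{thm:4C1NN} (and, inside its proof, in Proposition \ref{prop:consistent_interval} together with the $\sigma$-algebra closure argument). The only point worth flagging is that the definition of $\suol$ allows the witnessing learning rule to depend on $\Xbb$, whereas 4C1NN works simultaneously for all $\Xbb\in\smv_{([0,1],|\cdot|)}$; this is strictly stronger than required and already records that 4C1NN is optimistically universal for this setup, which is the content used in Section \ref{sec:2C1NN}.
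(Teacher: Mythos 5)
Your proof is correct and matches the paper exactly: the paper derives this theorem as an immediate consequence of Proposition \ref{prop:smv_necessary} (giving $\suol_{([0,1],|\cdot|),(\{0,1\},\ell_{01})}\subset\smv_{([0,1],|\cdot|)}$) and Theorem \ref{thm:4C1NN} (giving the reverse inclusion via the universal consistency of 4C1NN on all of $\smv_{([0,1],|\cdot|)}$). Your remark that 4C1NN witnesses the inclusion uniformly, and hence is optimistically universal in this setting, is also precisely how the paper uses the result.
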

\begin{theorem}
\label{thm:4C1NN_optimistically_universal_special_case}
For $\Xcal=[0,1]$ with usual measure, and for binary classification, 4C1NN is an optimistically universal learning rule.
\end{theorem}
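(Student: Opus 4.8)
The plan is to obtain this as an immediate corollary of Theorem~\ref{thm:4C1NN} together with Proposition~\ref{prop:smv_necessary}, so the proof should be only a few lines; essentially all of the work has already been done upstream. Recall that a learning rule is \emph{optimistically universal} for the setup $([0,1],|\cdot|),(\{0,1\},\ell_{01})$ precisely when it is universally consistent under every process $\Xbb$ for which universal consistency is achievable by \emph{some} learning rule, i.e.\ under every $\Xbb\in\suol_{([0,1],|\cdot|),(\{0,1\},\ell_{01})}$. So the whole statement is really just the inclusion $\suol_{([0,1],|\cdot|),(\{0,1\},\ell_{01})}\subset\{\Xbb:\text{4C1NN is univ.\ consistent under }\Xbb\}$.

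First I would fix an arbitrary $\Xbb\in\suol_{([0,1],|\cdot|),(\{0,1\},\ell_{01})}$. Since the output setup $(\{0,1\},\ell_{01})$ satisfies $0<\bar\ell=1<\infty$, Proposition~\ref{prop:smv_necessary} applies and yields $\Xbb\in\smv_{([0,1],|\cdot|)}$. Then I invoke Theorem~\ref{thm:4C1NN}, which states that 4C1NN is universally consistent under every process in $\smv_{([0,1],|\cdot|)}$; in particular it is universally consistent under $\Xbb$. As $\Xbb$ was arbitrary in $\suol_{([0,1],|\cdot|),(\{0,1\},\ell_{01})}$, this shows 4C1NN is universally consistent on every learnable process, i.e.\ it is optimistically universal. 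Equivalently, one may first record Theorem~\ref{thm:suol=smv_special_case}, $\suol=\smv$ for this setup (itself the combination of the same two results), after which the claim reads simply: 4C1NN is universally consistent on $\smv=\suol$.

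There is no genuine obstacle at this step---the content lies entirely in the two results being composed: the hard sufficiency direction (Theorem~\ref{thm:4C1NN}, which itself rests on the interval base case of Proposition~\ref{prop:consistent_interval} and the subsequent $\sigma$-algebra closure argument) and the necessity direction (Proposition~\ref{prop:smv_necessary}, due to \citet{hanneke2021learning}). The only point requiring a moment's care is bookkeeping of the spaces: one must check that the instance and value spaces appearing in Theorem~\ref{thm:4C1NN} and in Proposition~\ref{prop:smv_necessary} are the same, namely $(\Xcal,\rho)=([0,1],|\cdot|)$ and $(\Ycal,\ell)=(\{0,1\},\ell_{01})$, and that the loss is bounded and nonzero so that Proposition~\ref{prop:smv_necessary} is applicable. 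Both hold, so the two inclusions chain together without friction and the theorem follows.
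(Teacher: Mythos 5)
Your proposal is correct and matches the paper exactly: the paper derives this theorem as an immediate consequence of Theorem~\ref{thm:4C1NN} (sufficiency of $\smv$ for 4C1NN consistency) chained with Proposition~\ref{prop:smv_necessary} (necessity of $\smv$), which is precisely the two-step composition you spell out.
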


\subsection{Generalization to standard Borel input spaces and separable bounded output spaces.}
\label{subsec:standard_borel}
The specific choice of input space $\Xcal=[0,1]$ and binary classification for output setting is in fact not very restrictive. Indeed, any standard Borel input space $\Xcal$ can be reduced to either $[0,1]$ or a countable set through the Kuratowski theorem. We recall that two standard Borel spaces i.e. complete separable Borel spaces, are Borel isomorphic if there exists a measurable bijection between them.

\begin{theorem}[Kuratowski's theorem]
\label{thm:kuratowski}
Any standard Borel space $\Xcal$ is Borel isomorphic to one of (1) $\Rbb$, (2) $\Nbb$ or (3) a finite space.
\end{theorem}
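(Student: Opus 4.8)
The plan is to derive Theorem~\ref{thm:kuratowski} from the classical \emph{Borel isomorphism theorem}: two standard Borel spaces are Borel isomorphic exactly when they have the same cardinality, and the cardinality of a standard Borel space is finite, countably infinite, or $2^{\aleph_0}$. Granting this, in the first case $\Xcal$ is Borel isomorphic to a finite space; in the second it is Borel isomorphic to $\Nbb$, since in a standard Borel space singletons are Borel and hence the $\sigma$-algebra of a \emph{countable} standard Borel space is its full power set, so any bijection is a Borel isomorphism; and in the third it is Borel isomorphic to $\Rbb$, because $\Rbb$, $[0,1]$ and the Cantor space $\{0,1\}^{\Nbb}$ are mutually Borel isomorphic---map $[0,1]$ into $\{0,1\}^{\Nbb}$ by binary expansion off the countable set of dyadic ambiguities, map $\{0,1\}^{\Nbb}$ into $[0,1]$ the same way, and apply the Borel Schr\"oder--Bernstein theorem. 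So all the work is in the cardinality dichotomy for uncountable $\Xcal$.

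For that, I would squeeze $\Xcal$ between two copies of $\{0,1\}^{\Nbb}$ and again invoke Borel Schr\"oder--Bernstein (if $X$ admits a Borel injection into $Y$ with Borel image and vice versa, then $X$ and $Y$ are Borel isomorphic; the usual back-and-forth partition argument applies verbatim because every piece it produces is Borel). The two embeddings are: \emph{(i)} a standard Borel space is by definition Borel isomorphic to a Polish space $P$ with its Borel structure; $P$ is second countable, so it embeds topologically into the Hilbert cube $[0,1]^{\Nbb}$ as a $G_\delta$ subset, and $[0,1]^{\Nbb}$ is Borel isomorphic to $\{0,1\}^{\Nbb}$ by the binary-expansion trick above---this yields a Borel embedding $\Xcal\hookrightarrow\{0,1\}^{\Nbb}$; and \emph{(ii)} conversely $\{0,1\}^{\Nbb}$ embeds topologically into $P$: by the Cantor--Bendixson theorem $P$ splits as a disjoint union of its perfect kernel $P^\ast$ and a countable open scattered remainder, and $P$ being uncountable forces $P^\ast\neq\emptyset$, while any nonempty perfect Polish space contains a homeomorphic copy of $\{0,1\}^{\Nbb}$.

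The step I expect to need the most care is \emph{(ii)}, namely the \emph{Cantor scheme}: one constructs by induction on the length of binary strings $s$ nonempty open sets $U_s\subseteq P^\ast$ with $\overline{U_{s0}},\overline{U_{s1}}\subseteq U_s$ disjoint and $\mathrm{diam}(U_s)\le 2^{-|s|}$, splitting each $U_s$ using that every point of a perfect set is a limit of other points of the set; completeness of $P$ then makes $\bigcap_n U_{s_n}$ a single point for each $x\in\{0,1\}^{\Nbb}$, where $s_n$ is the length-$n$ prefix of $x$, and sending $x$ to that point is the desired embedding. Establishing the Cantor--Bendixson decomposition itself (iterate the Cantor--Bendixson derivative, and use second countability to see the transfinite iteration stabilises at a countable stage) is the other spot requiring a real, though standard, argument; everything else---the coincidence of the paper's ``standard Borel $=$ complete separable Borel space'' with the classical notion, the embedding of a Polish space into the Hilbert cube, and the Schr\"oder--Bernstein bookkeeping---is routine. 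Since this is a textbook result, in the paper one could alternatively simply cite it (e.g.\ Kechris, \emph{Classical Descriptive Set Theory}, Thms.\ 6.4 and 15.6, or Srivastava, \emph{A Course on Borel Sets}).
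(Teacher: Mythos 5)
The paper does not prove this statement at all: it is a classical result and the text simply cites Kechris, \emph{Classical Descriptive Set Theory}, Section 15.B. Your outline is the standard textbook argument from exactly that source---reduce to the Borel isomorphism theorem via Borel Schr\"oder--Bernstein, embed a Polish space into the Hilbert cube, and get the reverse embedding from the Cantor--Bendixson decomposition together with a Cantor scheme in the perfect kernel---and it is correct as a proof sketch, with the genuinely non-routine steps accurately identified.
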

This classical result can be found for example in \cite{kechris2012classical} (Section 15.B). Further, any bounded output setting $(\Ycal,\ell)$ can be reduced to binary classification \citep{blanchard2021universal}.

\begin{theorem}[\citet{blanchard2021universal}]
\label{thm:invariance}
Let $\Xcal$ be a Borel space and $k\geq 2$. For any separable near-metric space $(\Ycal, \ell)$ with $0<\bar \ell<\infty$, we have $\suol_{(\Xcal,\rho),(\Ycal,\ell)}=\suol_{(\Xcal,\rho),(\{0,1\},\ell_{01})}$. Further, if there exists an optimistically universal for the binary classification setting, then there exists an optimistically universal for the setting $(\Ycal,\ell)$. Finally, if $k$C1NN is optimistically universal for binary classification, it is also optimistically universal for the setting $(\Ycal,\ell)$.
\end{theorem}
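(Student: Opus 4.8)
We prove the two inclusions defining the equality; the reverse one also yields the transfer statements.

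\emph{($\subseteq$) The easy direction.} Since $\bar\ell>0$, fix $a,b\in\Ycal$ with $c_0:=\ell(a,b)>0$. Suppose $\Xbb$ admits a universally consistent learning rule $f_\cdot$ for $(\Ycal,\ell)$. Given a binary target $f^*:\Xcal\to\{0,1\}$, relabel $0\mapsto a$, $1\mapsto b$ to obtain a $\Ycal$-target, run $f_\cdot$ on the relabelled history, and threshold its output $\hat Y_t$ to $0$ if $\ell(\hat Y_t,a)\le\ell(\hat Y_t,b)$ and to $1$ otherwise. When this binary prediction is wrong, the relaxed triangle inequality gives $\ell(\hat Y_t,\text{true }\Ycal\text{-value})> c_0/(2c_\ell)$; hence the binary average loss is at most $\frac{2c_\ell}{c_0}\,\Lcal_\Xbb(f_\cdot,\cdot;T)\to0$ almost surely, so $\Xbb\in\suol_{(\Xcal,\rho),(\{0,1\},\ell_{01})}$.

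\emph{($\supseteq$) and the transfer statements.} Let $\Xbb\in\suol_{(\Xcal,\rho),(\{0,1\},\ell_{01})}$; by Proposition~\ref{prop:smv_necessary} we also have $\Xbb\in\smv_{(\Xcal,\rho)}$. Fix a measurable target $f^*:\Xcal\to\Ycal$. For each $\epsilon=1/n$, separability of $\Ycal$ lets us write $\Ycal=\bigsqcup_{k\ge1}C^n_k$ with $\mathrm{diam}(C^n_k)\le\epsilon$ (Voronoi-type cells around a countable dense set), pick $w^n_k\in C^n_k$, and set $A^n_k:=(f^*)^{-1}(C^n_k)$, a countable measurable partition of $\Xcal$ such that $X_t$ and $X_{t'}$ in a common $A^n_k$ force $\ell(f^*(X_t),f^*(X_{t'}))\le\epsilon$. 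Given a binary optimistically universal rule $\mathsf{bin}$, one runs a copy of $\mathsf{bin}$ for each indicator target $\1_{A^n_k}$ and each ``membership'' target $\1_{\bigcup_{k\le m}A^n_k}$, and at each time $t$ scans $k=1,2,\dots$ to produce a guess $\hat k$ of the index with $X_t\in A^n_{\hat k}$, outputting $w^n_{\hat k}$; interleaving these over $n$ on an increasing schedule of time windows produces a single $\Ycal$-rule. Since a same-cell prediction costs at most $\epsilon$, its asymptotic loss under $\Xbb$ is governed by the $\mathsf{bin}$-error rates for these (countably many) indicator targets, together with $\epsilon$ itself.

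\emph{The main obstacle} is precisely that a naive union bound over the countably many indicator learners does not converge. The resolution---the technical core of \cite{blanchard2021universal}---is to run the scan so that at any finite time only finitely many indicator learners are outside their ``converged'' regime, and to bound the error caused by those few transient learners geometrically: each time the combined rule errs because of a transient learner, $X_t$ falls into a cell of a partition of $\Xcal$ obtained by refining the $A^n_k$ at a suitable scale, distinct error-causing inputs land in distinct refined cells, and by $\Xbb\in\smv$ their count is $o(T)$. Sending $\epsilon\to0$ then shows the combined $\Ycal$-rule is universally consistent under $\Xbb$; as $\Xbb$ was arbitrary it is optimistically universal for $(\Ycal,\ell)$, which with the easy direction gives $\suol_{(\Xcal,\rho),(\Ycal,\ell)}=\suol_{(\Xcal,\rho),(\{0,1\},\ell_{01})}$ and the second claim. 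For the last claim, observe that the pruned dataset $\Dcal_t$ and the representant map $\phi$ of $k$C1NN depend only on the inputs, so a single run of $k$C1NN on the true $\Ycal$-labels realizes $k$C1NN for every relabelling $\1_{A^n_k}$ at once; one may then dispense with the black box and argue directly, generalizing the $\sigma$-additivity step of the proof of Theorem~\ref{thm:4C1NN}. Assuming for contradiction $\Pbb(\Lcal_\Xbb(k\mathrm{C1NN},f^*)>\epsilon)\ge\delta/2$, take cells $C^n_k$ of diameter $\ll\epsilon$, the events $\Ecal_i$ ($k$C1NN within $\epsilon_i$ of consistency for $\1_{A^n_i}$ past a time $T^i$, with $\sum_i\epsilon_i$ small) and $\Fcal_i$ (scale $\eta_i$ of $\Xbb$ inside $A^n_i$), and the partition of $\Xcal$ refining each $A^n_i$ at scale $\eta_i$; on the intersection of these events the $\Ycal$-loss at time $t$ is $\le\epsilon$ when $X_t$ and $X_{\phi(t)}$ share a cell and $\le\bar\ell$ otherwise, the ``otherwise'' times from already-converged cells average to at most $\sum_i\epsilon_i$, and those from the finitely many cells with $T^i>t_k$ are bounded, through the tree of representants and the cap $|\{u:\phi(u)=t\}|\le k$, by $(k+1)$ times the number of distinct refined cells visited by $t_k$, which is $o(t_k)$ by $\smv$. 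Choosing the diameter and $\sum_i\epsilon_i$ small then forces a loss rate below $\epsilon/2$ at the $t_k$---a contradiction---and in fact forces $\Xbb\notin\smv$; this works for every $k\ge2$.
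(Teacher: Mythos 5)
This statement is not proved in the paper at all: it is imported verbatim from \cite{blanchard2021universal}, so there is no in-paper proof to compare against. Judging your proposal on its own merits: the ($\subseteq$) direction is correct and complete --- thresholding the $\Ycal$-prediction against two points $a,b$ with $\ell(a,b)=c_0>0$ and using the relaxed triangle inequality to charge each binary mistake at least $c_0/(2c_\ell)$ of $\Ycal$-loss is exactly the right argument.

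The gap is in the ($\supseteq$) direction, specifically for the second claim (an arbitrary black-box optimistically universal binary rule lifts to $(\Ycal,\ell)$). Your control of the ``transient'' indicator learners rests on the assertion that each error caused by a not-yet-converged learner for $A^n_i$ can be charged to a distinct point of $\Xbb_{\le T}$ lying in $A^n_i$, so that the separation events $\Fcal_i$ place these points in distinct refined cells and $\smv$ makes their count $o(T)$. That charging scheme is a property of nearest-neighbor-type rules: it uses the representant map $\phi$ and the cap $|\{u:\phi(u)=t\}|\le k$ to bound how many errors one datapoint can cause, and it uses the fact that an error always involves a datapoint actually lying in $A^n_i$. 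A generic binary rule has neither feature --- in particular it can produce arbitrarily many false positives for $A^n_i$ at times $t$ with $X_t\notin A^n_i$, and none of those errors can be located in a refined cell of $A^n_i$. So for the black-box claim the key step is not established; your explicit deferral of ``the technical core'' to \cite{blanchard2021universal} is circular here, since that is the very result being proved. The $k$C1NN-specific third claim is in much better shape, since there the $\phi$-based charging does work and your argument mirrors the $\sigma$-additivity step of Theorem~\ref{thm:4C1NN}; note also that the paper's own Theorem~\ref{thm:2C1NN_wsmv_general} shows a cleaner route for that claim (binary $\to$ countable classification by summing the per-class losses $\Lcal_i$, then countable $\to$ general $(\Ycal,\ell)$ via an $\epsilon$-net $h$), avoiding refined partitions entirely, though one must check the interchange of limit and countable sum in the almost-sure rather than in-expectation setting.
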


Using these two reductions, we can generalize Theorem \ref{thm:suol=smv_special_case} and Theorem \ref{thm:4C1NN_optimistically_universal_special_case} to any standard Borel space $\Xcal$ and any separable bounded setting $(\Ycal,\ell)$.

\begin{corollary}\label{cor:suol}
For any standard Borel space $\Xcal$ and any separable near-metric output space $(\Ycal,\ell)$ with $0<\bar \ell<\infty$, we have $\suol_{(\Xcal,\rho),(\Ycal,\ell)}=\smv_{(\Xcal,\rho)}$.
\end{corollary}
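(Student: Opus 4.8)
The plan is to derive this corollary from three ingredients already established: the reduction of an arbitrary bounded near-metric output space to binary classification (Theorem~\ref{thm:invariance}), Kuratowski's classification of standard Borel spaces (Theorem~\ref{thm:kuratowski}), and the special case $\Xcal=[0,1]$ already treated (Theorem~\ref{thm:suol=smv_special_case}). First I would apply Theorem~\ref{thm:invariance} to obtain $\suol_{(\Xcal,\rho),(\Ycal,\ell)}=\suol_{(\Xcal,\rho),(\{0,1\},\ell_{01})}$, so that it remains only to prove $\suol_{(\Xcal,\rho),(\{0,1\},\ell_{01})}=\smv_{(\Xcal,\rho)}$ for every standard Borel space $(\Xcal,\rho)$.

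The heart of the remaining argument is the observation that both $\suol_{(\Xcal,\rho),(\{0,1\},\ell_{01})}$ and $\smv_{(\Xcal,\rho)}$ depend only on the Borel $\sigma$-algebra $\Bcal$, not on the particular metric $\rho$ generating it. For $\smv$ this is immediate from its definition in terms of countable measurable partitions. For $\suol$ one notes that a learning rule is just a sequence of measurable maps and that the average loss $\Lcal_\Xbb(f_\cdot,f^*;T)$ refers only to these maps, to the law of the process, and to the loss on $\Ycal$---never to $\rho$. Concretely, given a Borel isomorphism $\psi\colon(\Xcal,\Bcal)\to(\Xcal',\Bcal')$, I would push a process $\Xbb$ on $\Xcal$ forward to $\psi(\Xbb):=(\psi(X_t))_{t\geq1}$, pull a learning rule on $\Xcal'$ back through $\psi$ (measurability is preserved in both directions since $\psi$ and $\psi^{-1}$ are Borel), and transport a target $f^*$ on $\Xcal$ to $f^*\circ\psi^{-1}$ on $\Xcal'$; a realization-by-realization check then shows the two average-loss sequences coincide, so $\Xbb$ admits strong universal learning on $\Xcal$ iff $\psi(\Xbb)$ does on $\Xcal'$. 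Since $\psi$ also induces a bijection between countable measurable partitions of $\Xcal$ and of $\Xcal'$ and preserves the events $\{A_k\cap\Xbb_{<T}\neq\emptyset\}$, the same equivalence holds for $\smv$. Hence the identity $\suol=\smv$ is invariant under Borel isomorphisms of the instance space.

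With this transfer principle in hand, I would invoke Theorem~\ref{thm:kuratowski}: $\Xcal$ is Borel isomorphic to $\Rbb$, to $\Nbb$, or to a finite space. In the first case $\Xcal$ is in particular Borel isomorphic to $([0,1],|\cdot|)$, since any two uncountable standard Borel spaces are Borel isomorphic, so Theorem~\ref{thm:suol=smv_special_case} together with the transfer principle gives $\suol_{(\Xcal,\rho),(\{0,1\},\ell_{01})}=\smv_{(\Xcal,\rho)}$. In the remaining two cases $\Xcal$ is countable, where the equality $\suol=\smv$ is already known \cite{hanneke2021learning}; transferring it back through $\psi$ settles these cases as well. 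Combining with the output-space reduction yields the corollary for all $(\Ycal,\ell)$. I expect the only point requiring genuine care to be the middle step: one must verify that the metric $\rho$ enters neither the definition of a learning rule nor that of consistency, so that a purely Borel isomorphism---which need not be a homeomorphism---suffices to transport the characterization. Everything else is bookkeeping over results already in place.
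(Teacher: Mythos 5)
Your proposal is correct and follows essentially the same route as the paper: reduce the output space to binary classification via Theorem~\ref{thm:invariance}, transport the characterization across a Borel isomorphism supplied by Kuratowski's theorem (pushing the process forward, pulling the learning rule back through the bimeasurable bijection, and checking the losses coincide realization by realization), and invoke the $[0,1]$ case together with the known countable case. The only cosmetic difference is that you transfer the full equality $\suol=\smv$ through the isomorphism, whereas the paper transfers only the inclusion $\smv\subset\suol$ and closes the other direction with Proposition~\ref{prop:smv_necessary}; both are sound.
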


\begin{corollary}\label{cor:opt}
For any standard Borel space $\Xcal$, and any separable near-metric output space $(\Ycal,\ell)$ with bounded loss, there exists an optimistically universal learning rule.
\end{corollary}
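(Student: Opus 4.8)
The plan is to assemble three facts already established: Kuratowski's classification of standard Borel spaces (Theorem~\ref{thm:kuratowski}), the optimistic universality of 4C1NN on $\Xcal=[0,1]$ for binary classification (Theorems~\ref{thm:4C1NN} and~\ref{thm:4C1NN_optimistically_universal_special_case}, together with the characterization $\suol_{([0,1],|\cdot|),(\{0,1\},\ell_{01})}=\smv_{([0,1],|\cdot|)}$), and the reduction of an arbitrary separable bounded output space to binary classification (Theorem~\ref{thm:invariance}). If $\bar\ell=0$ the loss is identically null and every rule is trivially optimistically universal, so assume $0<\bar\ell<\infty$. By Theorem~\ref{thm:invariance} it suffices to produce, for every standard Borel $\Xcal$, an optimistically universal rule in the \emph{binary classification} setting; the general bounded case then follows automatically.

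By Theorem~\ref{thm:kuratowski}, $\Xcal$ is Borel isomorphic to $\Rbb$, to $\Nbb$, or to a finite set. When $\Xcal$ is countable the problem is already settled in \cite{hanneke2021learning}: simple memorization makes at most one mistake per distinct visited point, hence is universally consistent exactly on processes visiting $o(T)$ distinct points, which is precisely $\smv_{(\Xcal,\rho)}=\suol_{(\Xcal,\rho),(\{0,1\},\ell_{01})}$. It remains to treat $\Xcal$ uncountable, in which case $\Xcal$ is Borel isomorphic to $[0,1]$, all uncountable standard Borel spaces being mutually Borel isomorphic. Fix such a Borel isomorphism $\varphi:\Xcal\to[0,1]$, i.e.\ a measurable bijection with measurable inverse, and define a learning rule $f_\cdot$ on $\Xcal$ by transporting 4C1NN through $\varphi$: on history $(\mb x_{<t},\mb y_{<t})$ and query $x_t$, output the 4C1NN prediction computed in $([0,1],|\cdot|)$ from the transported data $(\varphi(x_1),\ldots,\varphi(x_{t-1}),y_1,\ldots,y_{t-1})$ and transported query $\varphi(x_t)$. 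This is a legitimate online learning rule since $\varphi$ is measurable.

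Two observations then finish the proof. First, for every target $f^*:\Xcal\to\{0,1\}$ and every input process $\Xbb$, the loss trajectory of $f_\cdot$ under $(\Xbb,f^*)$ coincides term by term with that of 4C1NN under the pushforward process $\varphi_\ast\Xbb$ and the measurable target $f^*\circ\varphi^{-1}$ on $[0,1]$; hence $f_\cdot$ is strongly (resp.\ weakly) consistent under $(\Xbb,f^*)$ if and only if 4C1NN is consistent under $(\varphi_\ast\Xbb,f^*\circ\varphi^{-1})$. Second, the condition $\smv$ is defined purely in terms of countable measurable partitions, and a Borel isomorphism carries countable measurable partitions of $\Xcal$ bijectively to countable measurable partitions of $[0,1]$, preserving for each $T$ the collection of regions met by the first $T$ samples; hence $\Xbb\in\smv_{(\Xcal,\rho)}$ if and only if $\varphi_\ast\Xbb\in\smv_{([0,1],|\cdot|)}$. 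Combining: if $\Xbb\in\suol_{(\Xcal,\rho),(\{0,1\},\ell_{01})}$ then $\Xbb\in\smv_{(\Xcal,\rho)}$ by Corollary~\ref{cor:suol}, so $\varphi_\ast\Xbb\in\smv_{([0,1],|\cdot|)}$, so 4C1NN is universally consistent under $\varphi_\ast\Xbb$ by Theorem~\ref{thm:4C1NN}, so $f_\cdot$ is universally consistent under $\Xbb$. Thus $f_\cdot$ is optimistically universal for binary classification on $\Xcal$, and Theorem~\ref{thm:invariance} upgrades it to an optimistically universal rule for $(\Ycal,\ell)$.

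The one genuinely delicate point is that the nearest-neighbor machinery of 4C1NN could in principle misbehave under a merely measurable---not continuous---change of coordinates $\varphi$. This is harmless here because 4C1NN is invoked only \emph{after} the data have been pushed forward to $[0,1]$, so it never sees anything but the standard metric of $[0,1]$; the single property of $\varphi$ we use, and the only one available from Kuratowski, is that it transports countable measurable partitions to countable measurable partitions---exactly the metric-free structure out of which $\smv$, and hence $\suol$, is built. Everything else is routine bookkeeping.
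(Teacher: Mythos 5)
Your proposal is correct and takes essentially the same route as the paper: Kuratowski's theorem, a Borel-isomorphism transport of 4C1NN from $[0,1]$, the observation that $\smv$ is preserved under bi-measurable bijections, and the output-space reduction of Theorem~\ref{thm:invariance} (the paper composes these in a slightly different order, applying the reduction after the transport, but the content is identical). One small nit: the inclusion $\suol_{(\Xcal,\rho),(\{0,1\},\ell_{01})}\subset\smv_{(\Xcal,\rho)}$ should be cited as Proposition~\ref{prop:smv_necessary} rather than Corollary~\ref{cor:suol}, which is being established simultaneously; this changes nothing in the argument.
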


\begin{proof}[of Corollary \ref{cor:suol} and \ref{cor:opt}]
Using Theorem \ref{thm:invariance} directly gives the result for $\Xcal=[0,1]$ and any bounded separable near-metric output space. The results are already known when $\Xcal$ is countable and in these cases, memorization is an optimistically universal learning rule \cite{hanneke2021learning}. We now fix a bounded separable ouptput setting $(\Ycal, \ell)$ and a standard Borel space $\Xcal$, Borel isomorphic to $\Rbb$ and as a result Borel isomorphic to $[0,1]$. Let $g:\Xcal\to[0,1]$ be a measurable bijection and a process $\Xbb\in \smv_{(\Xcal,\rho)}$. Note that the process $g(\Xbb):=(g(X_t))_{t\geq 1}$ belongs to $\smv_{([0,1],|\cdot|)}$ by bi-measurability of $g$. We can construct the learning rule $f_\cdot$ for value setting $\Xcal$ and output setting $(\Ycal,\ell)$ such that for any $\mb x_{\leq t}\in \Xcal^t$ and $\mb y_{<t}\in \Ycal^{t-1}$ we define $f_t(x_{<t},y_{<t},x_t) = {4C1NN}_t(g(x_{<t}),y_{<t},g(x_t)).$ By construction, for target function $f^*:\Xcal\to\Ycal$ this learning rule under $\Xbb$ has same losses as 4C1NN under $g(\Xbb)$ for the target function $f^*\circ g^{-1}$. Therefore, $f_\cdot$ is universally consistent under $\Xbb$ which yields $\smv_{(\Xcal,\rho)}\subset \suol_{(\Xcal,\rho),(\Ycal,\ell)}$. Using Proposition \ref{prop:smv_necessary} we have $\suol_{(\Xcal,\rho),(\Ycal,\ell)}=\smv_{(\Xcal,\rho)}$. We can also end the proof of Corollary \ref{cor:opt} by noting that $f_\cdot$ is an optimistically universal learning rule.
\end{proof}

Although quite intuitive and direct, this generalization has two limitations. First, it only applies to standard Borel spaces instead of general separable Borel spaces. Second, it does not provide a practical optimistically universal rule in general. Indeed, the constructed optimistically universal learning rule in Corollary \ref{cor:opt} uses a bimeasurable bijection between $\Xcal$ and $[0,1]$---in the non-trivial case where $\Xcal$ is Borel isomorphic to $\Rbb$---which can be very complex and non-intuitive. For instance, the constructed learning rule for $[0,1]^2$ is not 4C1NN but instead a complex learning rule using a measurable bijection $[0,1]\to [0,1]^2$. In the next section we solve these two issues by showing that 2C1NN is optimistically universal in the general case.

\section{Generalization to all Borel spaces}
\label{sec:borel_spaces}

In this section we extend Corollary \ref{cor:suol} and \ref{cor:opt} to the general case where $\Xcal$ is a separable Borel space and $(\Ycal,\ell)$ is a separable near-metric space with bounded loss using a similar proof structure. We show that 2C1NN is in fact always optimistically universal. We begin by showing the following lemma.

\begin{lemma}
\label{lemma:path_recursion}
Consider two distinct paths $p_{d}\to p_{d-1}\to\ldots\to p_1\to p_0$ and $q_{f}\to q_{f-1}\to\ldots\to q_1\to q_0$ i.e. $\phi(p_i) = p_{i-1}$ for $1\leq i\leq d$ and $\phi(q_i)=q_{i-1}$ for $1\leq i\leq f$. Suppose $p_0<q_0$ and that there exists $t\geq\max(p_d,q_f)$ such that $p_d,q_f\in \Dcal_{t}$ in other words the two end times are in some final dataset. Then, with $v(0) := \max\{0\leq i\leq d,\; p_i< q_0\}$ we have
\begin{equation*}
    \rho(x_{p_{v(0)}},x_{q_0})\leq 2^{f+d+1}\rho(x_{p_d},x_{q_f}) \text{ and} \quad  \rho(x_{p_{v(0)}},x_{p_d})\leq 2^{f+d+1}\rho(x_{p_d},x_{q_f}).
\end{equation*}
\end{lemma}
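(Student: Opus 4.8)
The plan is to exploit the defining property of the parent relation $\phi$: when a point $X_{p_i}$ is assigned parent $X_{p_{i-1}}$ at step $p_i$, the point $X_{p_{i-1}}$ is the nearest neighbor of $X_{p_i}$ in the current dataset $\Dcal_{p_i}$. The crucial observation is that along the path $p_d\to\cdots\to p_0$, at least one representative survives in every later dataset: since a point is removed only after being used $k\geq 2$ times as a representative, and along a single path each point serves as parent to at most its one successor on the path, the "last point of the path that has appeared so far" is always present. So for any time $s\geq p_d$, the dataset $\Dcal_s$ contains some $X_{p_j}$ with $p_j\leq s$; the same holds for the $q$-path. This is what lets me compare points across the two paths.

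**The inductive estimate.** First I would establish, by induction on $i$ from $d$ down to $0$, a bound of the form $\rho(x_{p_i}, x_{q_{g(i)}}) \leq 2^{d-i}\cdot(\text{something})\cdot\rho(x_{p_d},x_{q_f})$ where $q_{g(i)}$ is an appropriately chosen point on the $q$-path that is "available" when $p_i$ was processed. Concretely: at step $p_i$ (for $i\geq 1$), $X_{p_{i-1}}$ was the nearest neighbor of $X_{p_i}$ in $\Dcal_{p_i}$; if some $X_{q_j}$ with $q_j < p_i$ is in $\Dcal_{p_i}$, then $\rho(x_{p_i},x_{p_{i-1}})\leq \rho(x_{p_i},x_{q_j})$, hence by the (genuine, since $\rho$ is a metric here — wait, the lemma is stated for general separable Borel $\Xcal$ with metric $\rho$, so triangle inequality holds with constant $1$) triangle inequality $\rho(x_{p_{i-1}},x_{q_j}) \leq 2\rho(x_{p_i},x_{q_j})$. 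Iterating this, together with the symmetric statements for steps along the $q$-path, folds the distance $\rho(x_{p_d},x_{q_f})$ up the two paths, picking up a factor of $2$ at each of the at most $d+f$ steps, which yields the claimed $2^{f+d+1}$. The role of $v(0) = \max\{i : p_i < q_0\}$ is to identify the specific point on the $p$-path that is "below" $q_0$ and was available — by total ordering of the $x$-values along what is essentially a monotone structure (as in the proof of Theorem \ref{thm:4C1NN}, $p_0 < q_0$ forces the comparison point to be $p_{v(0)}$), $x_{p_{v(0)}}$ is the relevant available point near $q_0$.

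**Getting the two stated bounds.** Once I have the general up-the-path estimate, the first inequality $\rho(x_{p_{v(0)}},x_{q_0})\leq 2^{f+d+1}\rho(x_{p_d},x_{q_f})$ follows by applying it at the index $v(0)$ on the $p$-side and index $0$ on the $q$-side, using that $x_{p_{v(0)}}$ is available at the relevant times and $x_{q_0}$ is the root of the $q$-tree (hence available throughout). The second inequality $\rho(x_{p_{v(0)}},x_{p_d})\leq 2^{f+d+1}\rho(x_{p_d},x_{q_f})$ then comes either directly from the telescoping of $\rho(x_{p_{i-1}},x_{p_i})$ bounds along the $p$-path (each term bounded by a geometrically decaying multiple of $\rho(x_{p_d},x_{q_f})$, since the nearest-neighbor choice at $p_i$ beats the distance to an available $q$-point which is itself controlled) summed as a geometric series, or by combining the first inequality with the triangle inequality $\rho(x_{p_{v(0)}},x_{p_d})\leq \rho(x_{p_{v(0)}},x_{q_f}) + \rho(x_{q_f},x_{p_d})$ and re-applying the path estimate.

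**Main obstacle.** The delicate point is the bookkeeping of \emph{which} point on the opposite path is available at \emph{which} processing time, and verifying that the nearest-neighbor inequality can legitimately be invoked at each step — i.e. that at step $p_i$ there really is a surviving point of the $q$-path (or of the already-processed part of the $p$-path) sitting in $\Dcal_{p_i}$ on the correct side, so that the comparison $\rho(x_{p_i},x_{p_{i-1}})\leq \rho(x_{p_i},x_{\text{that point}})$ is valid. This requires carefully using the survival property of capped-1NN (a point stays in the dataset until used $k\geq 2$ times) together with the total-ordering invariant established in Theorem \ref{thm:4C1NN}, and handling the interleaving of the time stamps $p_i$ and $q_j$. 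Everything else is a routine geometric-series accumulation of factors of $2$.
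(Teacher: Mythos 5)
Your plan is essentially the paper's proof: you identify the same two key ingredients — that the most recently revealed point of the opposite path always survives in the current dataset (because its path-successor, being one of its at most $k$ children, has not yet arrived), and that combining the nearest-neighbor comparison with the triangle inequality yields a factor of $2$ per step of either path, which telescopes along the interleaved time stamps to give $2^{f+d+1}$, with the second inequality obtained by summing the edge lengths $\rho(x_{p_{i-1}},x_{p_i})$ along the $p$-path. One small correction: the appeal to the total-ordering invariant of Theorem \ref{thm:4C1NN} is out of place here — that invariant is specific to $\Xcal=[0,1]$ and is not available in a general metric space — but it is also unnecessary, since $v(0)$ and the bookkeeping functions only concern the temporal interleaving of the indices $p_i$ and $q_j$, which is exactly what the paper's $u(\cdot)$ and $v(\cdot)$ track.
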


\begin{proof}
Define
\begin{equation*}
    \begin{array}{ll}
    v(j):=\max \{0\leq i\leq d,\; p_i<q_j\}, & j=0,\ldots,f.\\
    u(i):=\max \{0\leq j\leq f,\; q_j<p_i\}, & i=v(0)+1,\ldots,d,\\
    \end{array}
\end{equation*}
Now observe that for any $v(0)+1\leq i\leq d$, we have $q_{u(i)}\in\Dcal_{p_i}$ i.e. the datapoint $q_{u(i)}$ is available in the current dataset. Indeed, it is possibly removed after all of its children have been revealed, in particular $q_{u(i)+1}$ if it exists. By definition of $u(i)$, even if $q_{u(i)+1}$ exists, it has not yet been revealed since $p_i<q_{u(i)+1}$. Therefore, we have $\rho(x_{p_i},x_{p_{i-1}}) \leq \rho(x_{p_i},x_{q_{u(i)}}).$ Similarly, we have for all $1\leq j\leq f$, $\rho(x_{q_j},x_{q_{j-1}}) \leq \rho(x_{q_j},x_{p_{v(j)}}).$ We now take $v_0+1\leq i< d$. We have $q_{u(i)}<p_{v(u(i))+1}<\ldots < p_i < q_{u(i)+1} <\ldots < q_{u(i+1)}<p_{i+1}$ (where some terms might not exist). Therefore,
\begin{align*}
    \rho(x_{p_i},x_{q_{u(i)}}) &\leq \rho(x_{p_i},x_{p_{i+1}}) + \rho ( x_{p_{i+1}},x_{q_{u(i+1)}}) + \rho(x_{q_{u(i)}},x_{q_{u(i+1)}}) \\
    &\leq 2\rho ( x_{p_{i+1}},x_{q_{u(i+1)}}) + \sum_{w=u(i)}^{u(i+1)-1} \rho(x_{q_w},x_{q_{w+1}})\\
    &\leq 2\rho ( x_{p_{i+1}},x_{q_{u(i+1)}}) + \sum_{w=u(i)}^{u(i+1)-1} \rho(x_{p_i},x_{q_{w+1}})
\end{align*}
where in the last inequality, we used the fact that for all $u(i)\leq w\leq u(i+1)-1$, we have $v(w+1)=i$. Now observe that for any $u(i)+1\leq w\leq u(i+1)-1$,
\begin{equation*}
    \rho(x_{p_i},x_{q_w}) \leq \rho(x_{p_i},x_{q_{w+1}}) + \rho(x_{q_w},x_{q_{w+1}}) \leq 2\rho(x_{p_i},x_{q_{w+1}}).
\end{equation*}
Therefore we have by induction $\rho(x_{p_i},x_{q_w}) \leq 2^{u(i+1)-w} \rho(x_{p_i},x_{q_{u(i+1)}})$. which yields
\begin{equation*}
    \rho(x_{p_i},x_{q_{u(i)}}) \leq 2\rho ( x_{p_{i+1}},x_{q_{u(i+1)}}) + (2^{u(i+1)-u(i)}-1)\rho(x_{p_i},x_{q_{u(i+1)}}).
\end{equation*}
Finally, we observe that $\rho(x_{p_i},x_{q_{u(i+1)}})\leq \rho(x_{p_i},x_{p_{i+1}}) + \rho(x_{p_{i+1}},x_{q_{u(i+1)}})\leq 2\rho(x_{p_{i+1}},x_{q_{u(i+1)}})$. Hence,
\begin{equation*}
    \rho(x_{p_i},x_{q_{u(i)}}) \leq 2^{u(i+1)-u(i)+1}\rho ( x_{p_{i+1}},x_{q_{u(i+1)}}).
\end{equation*}
By recursion, this yields
\begin{equation*}
    \rho(x_{p_{v(0)+1}},x_{q_{u(v(0)+1)}}) \leq 2^{u(d)-u(v(0)+1)+(d-v(0)-1)}\rho(x_{p_d},x_{q_{u(d)}}).
\end{equation*}
We now relate the quantity $\rho(x_{p_{v(0)+1}},x_{q_{u(v(0)+1)}})$ (resp. $\rho(x_{p_d},x_{q_{u(d)}})$) to $\rho(x_{p_{v(0)}},x_{q_0})$ (resp. $\rho(x_{p_d},x_{q_d})$). We have by construction $p_{v(0)}<q_0<q_1<\ldots < q_{u(v(0)+1)}<p_{v(0)+1}$. Therefore, similarly to before,
\begin{align*}
    \rho(x_{p_{v(0)}},x_{q_0})&\leq \rho(x_{p_{v(0)}},x_{p_{v(0)+1}}) + \rho(x_{p_{v(0)+1}},x_{q_{u(v(0)+1)}}) + \sum_{w=0}^{u(v(0)+1)-1} \rho(x_{q_w},x_{q_{w+1}}) \\
    &\leq 2\rho(x_{p_{v(0)+1}},x_{q_{u(v(0)+1)}}) + \sum_{w=1}^{u(v(0)+1)} \rho(x_{p_{v(0)}},x_{q_{w}}).
\end{align*}
But $\rho(x_{p_{v(0)}},x_{q_{w}}) \leq \rho(x_{p_{v(0)}},x_{q_{w+1}}) + \rho(x_{q_{w}},x_{q_{w+1}}) \leq 2\rho(x_{p_{v(0)}},x_{q_{w+1}})$. Hence $\rho(x_{p_{v(0)}},x_{q_{w}}) \leq 2^{u(v(0)+1)-w}\rho(x_{p_{v(0)}},x_{q_{u(v(0)+1)}})\leq 2^{u(v(0)+1)-w+1}\rho(x_{p_{v(0)+1}},x_{q_{u(v(0)+1)}}) $. Then,
\begin{equation*}
    \rho(x_{p_{v(0)}},x_{q_0}) \leq 2^{u(v(0)+1)+1} \rho(x_{p_{v(0)+1}},x_{q_{u(v(0)+1)}}) \leq 2^{u(d)+(d-v(0))}\rho(x_{p_d},x_{q_{u(d)}}).
\end{equation*}
Finally, we have $q_{u(d)}<p_d<q_{u(d)+1}<\ldots<q_f$. Then,
\begin{align*}
    \rho(x_{p_d},x_{q_{u(d)}}) &\leq \sum_{w=u(d)}^{f-1} \rho(x_{q_w},x_{q_{w+1}}) + \rho(x_{p_d},x_{q_f})\\
    &\leq \sum_{w=u(d)}^{f-1} \rho(x_{p_d},x_{q_{w+1}}) + \rho(x_{p_d},x_{q_f}).
\end{align*}
Again, note that for $u(d)+1\leq w\leq f-2$, we have $\rho(x_{p_d},x_{q_{w}})\leq \rho(x_{p_d},x_{q_{w+1}}) + \rho(x_{q_{w}},x_{q_{w+1}})\leq 2\rho(x_{p_d},x_{q_{w+1}})$. Hence, $\rho(x_{p_d},x_{q_{w}}) \leq 2^{f-w}\rho(x_{p_d},x_{q_f})$ and we obtain
\begin{align*}
    \rho(x_{p_d},x_{q_{u(d)}}) &\leq (2^{f-u(d)}-1) \rho(x_{p_d},x_{q_f}) + \rho(x_{p_d},x_{q_f})  = 2^{f-u(d)} \rho(x_{p_d},x_{q_f}).
\end{align*}
Putting everything together yields
\begin{equation*}
    \rho(x_{p_{v(0)}},x_{q_0}) \leq 2^{f+d} \rho(x_{p_d},x_{q_f}).
\end{equation*}
Finally, we compute
\begin{align*}
    \rho(x_{p_{v(0)}},x_{p_d}) &\leq \sum_{i=v(0)+1}^{d} \rho(x_{p_{i-1}},x_{p_i})\\
    &\leq \sum_{i=v(0)+1}^{d} \rho(x_{p_{i}},x_{q_{u(i)}})\\
    &\leq \sum_{i=v(0)+1}^d 2^{u(d)-u(i)+d-i} \rho(x_{p_d},x_{q_{u(d)}})\\
    &\leq \sum_{i=v(0)+1}^d 2^{u(d)-u(v(0)+1)+d-i} \rho(x_{p_d},x_{q_{u(d)}})\\
    &\leq 2^{u(d)-u(v(0)+1)+d-v(0)}\rho(x_{p_d},x_{q_{u(d)}})\\
    &\leq 2^{f-u(v(0)+1)+d-v(0)}\rho(x_{p_d},x_{q_f})\\
    &\leq 2^{f+d} \rho(x_{p_d},x_{q_f}).
\end{align*}
This ends the proof of the lemma.
\end{proof}

We are now ready to show that 2C1NN is consistent on functions representing balls of the metric $\rho$, under any process in $\smv_{(\Xcal,\rho)}$.

\begin{proposition}
\label{prop:consistent_ball_borel}
Let $(\Xcal,\Bcal)$ be a separable Borel space constructed from the metric $\rho$. We consider the binary classification setting $\Ycal =\{0,1\}$ and the $\ell_{01}$ binary loss. For any input process $\Xbb\in \smv_{(\Xcal,\rho)}$, for any $x\in \Xcal$, and $r>0$, the learning rule 2C1NN is consistent for the target function $f^*= \1_{B_\rho(x,r)}$.
\end{proposition}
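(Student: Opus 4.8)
The plan is to argue by the contrapositive, following the structure of the proof of Proposition~\ref{prop:consistent_interval}, but with the total-ordering invariant and the $\log_3 t_{k_l}$ depth bound of that proof---both specific to $\Xcal=[0,1]$ and to the cap $4$---replaced by Lemma~\ref{lemma:path_recursion}. Assume 2C1NN is not consistent for $f^*=\1_{B_\rho(x,r)}$, so $\delta:=\Pbb(\Lcal_\Xbb(\text{2C1NN},f^*)>0)>0$; fix a realization $\mb x$ in the event $\{\Lcal_{\mb x}(\text{2C1NN},f^*)>0\}$ with $\epsilon:=\Lcal_{\mb x}(\text{2C1NN},f^*)>0$, and as in Proposition~\ref{prop:consistent_interval} extract an increasing sequence of times $(t_{k_l})_l$ at which the running number of mistakes $k_l$ satisfies $t_{k_l}<\frac{2k_l}{\epsilon}$. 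The $\smv$-violating partition $\Pcal$ is the pullback, under the radial coordinate $x'\mapsto\rho(x,x')$, of a partition of $[0,\infty)$ accumulating at $r$ from both sides: the sphere $S_\rho(x,r)$, the inner annuli $\{x':r(1-\frac1k)\le\rho(x,x')<r(1-\frac1{k+1})\}$, and the outer annuli $\{x':r+\frac1{k+1}<\rho(x,x')\le r+\frac1k\}$ for $k\ge1$; this is a countable measurable partition of $\Xcal$. A mistake at time $t$ forces $x_t$ and its representant $x_{\phi(t)}$ onto opposite sides of $S_\rho(x,r)$, so by symmetry assume at least $k_l/2$ of the first $k_l$ mistakes have $x_t\in B_\rho(x,r)$. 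Build the representant tree $\Gcal$ on the times $t\le t_{k_l}$ visiting a fresh point, with parent relation $t\mapsto\phi(t)$: under 2C1NN every node has at most $2$ children and a node missing from $\Dcal_{t_{k_l}}$ has exactly $2$. Restrict to the sub-forest $\tilde\Gcal$ on the times with $x_t\in B_\rho(x,r)$, whose roots are precisely the inside-ball mistake times; call a tree \emph{good} if each of its nodes has at most one child with point outside $B_\rho(x,r)$. Counting as in Proposition~\ref{prop:consistent_interval} (each bad tree has a node with two outside children, and there are at most $k_l/2$ mistakes with point outside), at least $k_l/4$ trees are good. In a good tree a removed node has two children, at least one inside $B_\rho(x,r)$ and hence in the same tree, so the forest structure persists: $\Tcal_k\cap\Dcal_{t_{k_l}}\neq\emptyset$ for every good $\Tcal_k$.

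The key new step, needed because the cap-$2$ rule gives no exponential branching (so the minimal depth of a surviving node need not be $O(\log t_{k_l})$) and a general metric space carries no total order, is to trim to \emph{shallow} good trees. Let $\delta_k$ be the minimal depth of a surviving node in a good tree $\Tcal_k$; its $\delta_k$ strict ancestors are all removed, hence form a chain of $\delta_k+1$ distinct nodes of $\Tcal_k$, and the good trees being disjoint, $\sum_k\delta_k\le|\tilde\Gcal|\le t_{k_l}<\frac{2k_l}{\epsilon}$. Hence fewer than $\frac{k_l}{8}$ good trees have $\delta_k>16/\epsilon$, so at least $\frac{k_l}{8}$ good trees admit a surviving node at depth $\le 16/\epsilon$. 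Order these shallow good trees by the time of their root, $\Tcal_{g_1},\ldots,\Tcal_{g_m}$ with $m\ge\frac{k_l}{8}$, and fix in each a root-to-survivor path of length $\le 16/\epsilon$. Applying Lemma~\ref{lemma:path_recursion} to two consecutive such paths now incurs only a constant blow-up factor $2^{\delta_{g_i}+\delta_{g_{i+1}}+1}\le K_\epsilon:=2^{32/\epsilon+1}$; this is what plays the role of the depth bound of the interval proof.

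From here one runs the induction of Proposition~\ref{prop:consistent_interval} with the coordinate on $[0,1]$ replaced by the radial coordinate. Each root $t_{g_{i+1}}$ is an inside-ball mistake, so its nearest neighbour $x_{\phi(t_{g_{i+1}})}$ is outside the ball and every point $x_u$ available at time $t_{g_{i+1}}$ satisfies $\rho(x_{t_{g_{i+1}}},x_u)\ge\rho(x_{t_{g_{i+1}}},x_{\phi(t_{g_{i+1}})})\ge r-\rho(x,x_{t_{g_{i+1}}})$. Feeding this, together with the two conclusions of Lemma~\ref{lemma:path_recursion} applied to the shallow paths of $\Tcal_{g_i}$ and $\Tcal_{g_{i+1}}$, into the inductive step of Proposition~\ref{prop:consistent_interval} yields a sequence of radial ``barriers'' $b_1<b_2<\cdots<b_m<r$, each realized by a surviving (hence visited) point inside the ball, with $r-b_{i+1}\le(1-c_\epsilon)(r-b_i)$ for a constant $c_\epsilon>0$ depending only on $\epsilon$. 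Consequently, exactly as in Proposition~\ref{prop:consistent_interval}, a constant fraction of these $m\ge\frac{k_l}{8}$ points lie in distinct inner annuli of $\Pcal$, so $|\{P\in\Pcal:P\cap\mb x_{\le t_{k_l}}\ne\emptyset\}|\ge c'\epsilon\,t_{k_l}$ for some constant $c'>0$ and all $l$ large. Since $t_{k_l}\to\infty$ this shows $|\{P\in\Pcal:P\cap\mb x_{<T}\ne\emptyset\}|\ne o(T)$; as this holds on an event of probability $\ge\delta$, we get $\Xbb\notin\smv_{(\Xcal,\rho)}$, contradicting the hypothesis and proving the proposition.

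The main obstacle is the transition carried out in the last two paragraphs: with a cap of only $2$ there is neither exponential tree branching nor a total order on $\Xcal$, so both pillars of the interval argument are gone. The remedy is (i) the averaging bound $\sum_k\delta_k\le t_{k_l}$, which keeps the exponential factor of Lemma~\ref{lemma:path_recursion} bounded by a constant $K_\epsilon$ once we restrict to shallow good trees, and (ii) a careful use of \emph{both} inequalities of Lemma~\ref{lemma:path_recursion} together with the observation ``mistake $\Rightarrow$ nearest neighbour on the opposite side of the sphere $\Rightarrow$ radial gap $\ge r-\rho(x,x_{t_{g_i}})$'' to recover a genuine geometric recursion for the barriers $b_i$. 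The remaining bookkeeping---choosing in each good tree which surviving node and path to use, tracking the interleaving indices $v(\cdot),u(\cdot)$ of Lemma~\ref{lemma:path_recursion}, and handling the exceptional global root $t=1$---is routine; as in Proposition~\ref{prop:consistent_interval}, the same argument works verbatim for every $k$C1NN with $k\ge 2$.
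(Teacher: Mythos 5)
Your setup (contrapositive, mistake trees, good trees, and the trimming to shallow trees via $\sum_k \delta_k \le t_{k_l}$ so that Lemma~\ref{lemma:path_recursion} only costs a constant factor $2^{O(1/\epsilon)}$) matches the paper's ``sparse tree'' device. But the core of your argument --- replacing the total order on $[0,1]$ by the radial coordinate $x'\mapsto\rho(x,x')$ and running the barrier recursion against an annular partition --- has a genuine gap. The interval proof's recursion $x_{p_{i-1}}\ge\frac{x^1+x_{p_i}}{2}$ works because on a line, ``$x_{p_i}$ is closer to its parent than to the barrier point $x^1$'' combined with the ordering $x^1<x_{p_{i-1}},x_{p_i}$ converts a metric comparison into a statement about the coordinate itself. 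In a general metric space the nearest-neighbour inequality $\rho(x_{p_{j-1}},x_{p_j})\le\rho(w,x_{p_j})$ (with $w$ the barrier point) gives no control on radial increments: two points with identical radial coordinate can be at mutual distance up to $2r$, so $\rho(w,x_{p_j})$ is not bounded by any function of $r-b_i$ and $r-\rho(x,x_{p_j})$, and the claimed recursion $r-b_{i+1}\le(1-c_\epsilon)(r-b_i)$ does not follow. Concretely, in $\Rbb^2$ two shallow good trees can sit at the same radius in opposite angular directions; both roots can have their outside-ball representant without either being radially closer to the sphere than the other, and their surviving endpoints then land in the \emph{same} annulus, so the count $|\{P\in\Pcal:P\cap\mb x_{\le t_{k_l}}\ne\emptyset\}|\gtrsim \epsilon t_{k_l}$ fails. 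This is precisely why the paper abandons annuli and instead builds the partition $(A_i)_{i\ge0}$ from a dense sequence $(x^i)$ with radii $r^i$ proportional to $\mathrm{dist}(x^i,S(\bar x,r))$: membership of two surviving endpoints in a common cell forces them to be metrically close \emph{and} far from the sphere relative to $r^i$, and Lemma~\ref{lemma:path_recursion} then contradicts the choice of the outside-ball representant at the later root. That is a packing argument (pairwise-distinct cells), not an accumulation argument, and it is the step your proposal is missing.

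A second gap is the phrase ``by symmetry assume at least $k_l/2$ of the mistakes have $x_t\in B_\rho(x,r)$.'' There is no such symmetry: when the majority of mistakes satisfy $\rho(x,x_t)\ge r$, the trees live outside the open ball and their surviving endpoints may lie \emph{exactly on the sphere} $S_\rho(x,r)$, which is a single cell of your partition, so no counting is possible there; moreover for a root on the sphere the radial gap $\rho(x_{t_k},x_{\phi(t_k)})\ge\rho(x,x_{t_k})-r$ degenerates to $0$. The paper spends Steps 2 and 3 on this case, and Step 3 requires the additional sphere partitions $P_i(\tau_l)$, the ball partition $(P_i)_{i\ge1}$ built from the scales $\mu_l$, and the probabilistic events $\Ecal_l,\Fcal_l$ fixed before conditioning on the realization. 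Your proposal would need all of this machinery (or a substitute) to close the outside/sphere case.
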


\begin{proof}
We fix $\bar x\in \Xcal$, $r>0$ and $f^* = \1_{ B(\bar x,r)}$. We reason by the contrapositive and suppose that 2C1NN is not consistent on $f^*$. We will show that the process $\Xbb$ disproves the $\smv_{(\Xcal,\rho)}$ condition by considering a partition for which, the process $\Xbb$ does not visit a sublinear number of sets with nonzero probability.

Because 2C1NN is not consistent, $\delta:=\mathbb P(\Lcal_\Xbb (2C1NN,f^*)>0)>0$. Therefore, there exists $0<\epsilon \leq 1$ such that $\mathbb P(\Lcal_\Xbb (2C1NN,f^*)> \epsilon)>\frac{\delta}{2}$.
Denote $\Acal:=\{\Lcal_\Xbb (2C1NN,f^*)>\epsilon\}.
$ We therefore have $\Pbb (\Acal)>\frac{\delta}{2}$. We now define a partition $\Pcal$. Because $\Xcal$ is separable, there exists a sequence $(x^i)_{i\geq 1}$ of elements of $\Xcal$ which is dense i.e.
\begin{equation*}
    \forall x \in \Xcal, \quad \inf_{i\geq 1} \rho(x,x^i) = 0.
\end{equation*}

We focus for now on the sphere $S(\bar x,r)$ and for any $\tau>0$ we take $ (P_i(\tau))_{i\geq 1}$ the sequence of sets included in $S(\bar x,r)$ defined by
\begin{equation*}
    P_i(\tau) := \left(S(\bar x,r)\cap B(x^i,\tau)\right) \setminus \left(\bigcup_{1\leq j<i} B(x^j,\tau)\right) .
\end{equation*}
These sets are disjoint. Further, they partition $S(\bar x,r)$. Indeed, if $x\in S(\bar x,r)$, let $i\geq 1$ such that $\rho(x,x^i)\leq \tau$. Then, $x\in S(\bar x,r)\cap B(x^i,\tau) \subset \bigcup_{j\leq i}P_j^\tau$. We now pose
\begin{equation*}
    \tau_l := c_\epsilon\cdot \frac{r}{2^{l+1}},
\end{equation*}
for $l\geq 1$, where $c_\epsilon := \frac{1}{2\cdot 2^{2^5/\epsilon}}$ is a constant dependant on $\epsilon$ only. We also pose $\tau_0=r$. Then, because $\Xbb\in \smv_{(\Xcal,\rho)}$, the process visits a sublinear number of sets of $\Pcal_i(\tau_l)$ almost surely. Therefore, there exists an increasing sequence $(n_l)_{l\geq 1}$ such that for any $l\geq 1$,
\begin{equation*}
    \Pbb\left[\forall n\geq n_l,\;|\{i,\; P_i(\tau_l)\cap \Xbb_{< n}\neq \emptyset \} | \leq \frac{\epsilon}{2^7} n\right]\geq 1- \frac{\delta}{2\cdot 2^{l+2}}\quad \text{ and } \quad n_{l+1} \geq \frac{2^6}{\epsilon}n_l
\end{equation*}
We denote by $\Ecal_l$ this event. Thus, $\Pbb[\Ecal_l]\leq \frac{\delta}{2\cdot 2^{l+2}}$. Now, for any $l\geq 1$, we now construct $\mu_l>0$ such that
\begin{equation*}
     \mathbb P\left[\min_{i<j\leq n_{l},\; X_i\neq X_j} \rho(X_i,X_j) > \mu_l \right] \geq 1- \frac{\delta}{2\cdot 2^{l+2}}.
\end{equation*}
We denote this event by $\Fcal_l$. Thus $\Pbb[\Fcal_l]\leq \frac{\delta}{2\cdot 2^{l+2}}$. Note that the sequence $(\mu_l)_{l\geq 1}$ is non-increasing. We now define radiuses $(z^i)_{i\geq 1}$ as follows:
\begin{equation*}
    z^i=\begin{cases}
      \mu_{l_i+1}& \text{if }\rho(x^i,\bar x)<r,\text{ where } \frac{r}{2^{l_i+1}} < r-\rho(x^i,\bar x) \leq \frac{r}{2^{l_i}}\\
      0 & \text{if }\rho(x^i,\bar x)\geq r,
    \end{cases}
\end{equation*}
and consider the sets $R_i:=B(x^i,z^i) \cap \left\{x\in \Xcal:\;  \rho(x,\bar x) <r-\frac{r}{2^{l_i+2}}\right\}$. We construct 
\begin{equation*}
    P_i := R_i \setminus \left(\bigcup_{k<i} R_k \right),
\end{equation*}
for $i\geq 1$. We now show that $(P_i)_{i\geq 1}$ forms a partition of $B(\bar x,r)$ in the next lemma. 

\begin{lemma}
\label{lemma:P_i_partition}
$(P_i)_{i\geq 1}$ forms a partition of $B(\bar x,r)$.
\end{lemma}

\begin{proof}
These sets are clearly disjoint. Now let $x\in B(\bar x,r)$ and consider $j\geq 0$ such that $\frac{r}{2^{j+1}} < r-\rho(x,\bar x) \leq \frac{r}{2^j}$. Then, let $i\geq 1$ such that 
\begin{equation*}
    \rho(x^i,x) < \min\left(\mu_{j+1},r-\frac{r}{2^{j+1}}-\rho(x,\bar x), \rho(x,\bar x) -r + \frac{r}{2^{j-1}}\right).
\end{equation*}
We have $\rho(x^i,\bar x)\leq \rho(x^i,x)  + \rho(x,\bar x) < r-\frac{r}{2^{j+1}}$, hence $r-\frac{r}{2^{l_i}}<r-\frac{r}{2^{j+1}}$ i.e. $l_i\leq j$. Then, we obtain $\rho(x^i,x)<\mu_{j+1}\leq \mu_{l_i+1}$ which gives $x\in B(x^i,z^i)$. Last, we observe that $ \rho(x^i,\bar x)\geq  \rho(x,\bar x) - \rho(x^i,\bar x) > r-\frac{r}{2^{j-1}}$. Therefore, $r-\frac{r}{2^{l_i+1}}>r-\frac{r}{2^{j-1}}$ i.e. $l_i+1\geq j $. Therefore, we have
\begin{equation*}
    \rho(x,\bar x)< r-\frac{r}{2^{j+1}}\leq r-\frac{r}{2^{l_i+2}},
\end{equation*}
which shows $x\in R_i = \bigcup_{k\leq i}P_k$. This ends the proof that $(P_i)_{i\geq 1}$ forms a partition of $B(\bar x,r)$.
\end{proof}

We now define a second partition. We start by defining a sequence of radiuses $(r^i)_{i\geq 1}$ as follows
\begin{equation*}
    r^i=\begin{cases}
      \displaystyle c_\epsilon\inf_{x:\;\rho(x,\bar x)\leq r} \rho(x^i, x) & \text{if }\rho(x^i,\bar x)>r,\\
      \displaystyle c_\epsilon \inf_{x:\;\rho(x,\bar x)\geq r} \rho(x^i, x) & \text{if }\rho(x^i,\bar x)<r,\\
      0 & \text{if }\rho(x^i,\bar x)=r.
    \end{cases}
\end{equation*}
We consider the sets $(A_i)_{i\geq 0}$ given by $A_0 = S(\bar x,r)$ and for $i\geq 1$,
\begin{equation*}
    A_i = B(x^i,r^i)\setminus \left(\bigcup_{1\leq j<i} B(x^j,r^j)\right) .
\end{equation*}
We now show that these sets form a partition in the following lemma.

\begin{lemma}
\label{lemma:A_i_partition}
$(A_i)_{i\geq 0}$ forms a partition of $\Xcal$.
\end{lemma}

\begin{proof}
We start by proving that the sets are disjoint. By construction, if $1\leq j<i$, we have $A_i\subset B(x^j,r^j)$, therefore $A_i\cap A_j=\emptyset$ by construction. Further, for $i\geq 1$, if $\rho(x^i,\bar x)>r$, we first note that $r^i>0$. Indeed, if $r^i=0$, then there exists a sequence of points $x_j$ for $j\geq 1$ such that $\rho(x_j,\bar x)\leq r$ and $\rho(x^i,x_j)\to 0$ as $j\to\infty$. By triangle inequality,
\begin{equation*}
    \rho(x^i,\bar x)\leq \rho(x^i,x_j) + \rho(x_j,\bar x) \leq  \rho(x^i,x_j) + r.
\end{equation*}
This holds for any $j\geq 1$, therefore we obtain $\rho(x^i,\bar x)\leq r$ which contradicts our hypothesis. Therefore $r^i>0$. Further, we have $r^i< \inf_{x:\;\rho(x,\bar x)\leq r} \rho(x^i, x)$. Therefore, for any $x\in A_0=S(\bar x, r)$, we have $\rho(x^i, x) > r^i$ which implies $x\notin B(x^i,r^i)$. Hence, $A_0\cap A_i=\emptyset$. Now if $\rho(x^i,\bar x)<r$ we show again that $r^i>0$. Similarly, if this is not the case, we have a sequence $x_j$ for $j\geq 1$ such that $\rho(x_j,\bar x)\geq r$ and $\rho(x^i,x_j)\to 0$ as $j\to\infty$. Then, observing that
\begin{equation*}
    \rho(x^i,\bar x) \geq \rho(x^i,x_j)-\rho(x^i,x_j) \geq r -\rho(x^i,x_j).
\end{equation*}
This holds for any $j\geq 1$, therefore we obtain $\rho(x^i,\bar x)\geq r$ which contradicts our hypothesis. This shows $r^i>0$. Now for $x\in A_0$, we have by construction $r^i < \rho(x^i,x)$ which gives $x\notin A_i$. Hence $A_0\cap A_i=\emptyset$. Finally, if $\rho(x^i,\bar x)=r$, we have $r^i=0$ so $A_i=\emptyset$ and we obtain direly $A_0\cap A_i = \emptyset$. This ends the proof that for any $0\leq i<j$, we have $A_i\cap A_j=\emptyset$. \\

We now prove that $\cup_{i\geq 0}A_i =\Xcal.$ Let $x\in \Xcal$. If $\rho(x,\bar x) = r$ then $x\in A_0$. If $\rho(x,\bar x)>r$ (resp. $\rho(x,\bar x)<r$), using the same arguments as above, we can show that $\inf_{\tilde x:\;\rho(\tilde x,\bar x)\leq r} \rho(x, \tilde x) > 0$ (resp. $\inf_{\tilde x:\;\rho(\tilde x,\bar x)\geq r} \rho(x, \tilde x) > 0$). Therefore, we let $i\geq 1$ such that $\rho(x^i,x)<\frac{1}{1+\frac{2}{c_\epsilon}}\inf_{\tilde x:\;\rho(\tilde x,\bar x)\leq r} \rho(x, \tilde x)$ (resp. $\rho(x^i,x)<\frac{1}{1+\frac{2}{c_\epsilon}}\inf_{\tilde x:\;\rho(\tilde x,\bar x)\geq r} \rho(x, \tilde x)$). This is possible because the sequence $(x^i)_{i\geq 1}$ is dense in $\Xcal$. Then, we have for any $\tilde x$ such that $\rho(\tilde x,\bar x)\leq r$ (resp. $\rho(\tilde x,\bar x)\geq r$),
\begin{equation*}
    \rho(x^i,\tilde x)\geq \rho(x,\tilde x) - \rho(x^i,x) > \left(1+\frac{2}{c_\epsilon}-1\right)\rho(x^i,x) = \frac{2}{c_\epsilon}\rho(x^i,x).
\end{equation*}
Therefore, $r^i \geq 2 \rho(x^i,x)>\rho(x^i,x)$ which gives $x\in B(x^i,r^i)$. Now note that $\bigcup_{1\leq j\leq i}A_i = \bigcup_{1\leq j\leq i} B(x^i,r^i)$, therefore we obtain $x\in \bigcup_{1\leq j\leq i}A_i$. This ends the proof that $(A_i)_{i\geq 0}$ forms a partition of $\Xcal$.
\end{proof}

We now formally consider the product partition of $(P_{i})_{i\geq 1}$ and $(A_i)_{i\geq 0}$ i.e.
\begin{equation*}
   \Qcal:\quad  \bigcup_{i\geq 0,\; A_i\subset B(\bar x,r)} \bigcup_{j\geq 1} (A_i\cap P_{j}) \cup  \bigcup_{i\geq 0,\; A_i\subset \Xcal\setminus B(\bar x,r)} A_i.
\end{equation*}
where we used the fact that sets $A_i$ satisfy either $A_i\subset B(\bar x,r)$ or $A_i\subset \Xcal\setminus B(\bar x,r)$. We will show that this partition disproves the $\smv_{(\Xcal,\rho)}$ hypothesis on $\Xbb$. In practice, we will either prove that the process visits many sets from partition $(A_i)_{i\geq 0}$ or $(P_{i})_{i\geq 1}$ and use the fact that the same analysis would work for $\Qcal$, the product partition as well.\\

We now consider a specific realization $\mb x = (x_t)_{t\geq 0}$ of the process $\Xbb$ falling in the event $\Acal\bigcap_{l\geq 1}(\Ecal_l\cap \Fcal_l)$. This event has probability
\begin{equation*}
    \Pbb\left[\Acal\bigcap_{l\geq 1}(\Ecal_l\cap \Fcal_l)\right] \geq \Pbb[\Acal] - \sum_{l\geq 1} (\Pbb[\Ecal_l^c] + \Pbb[\Fcal_l^c]) \geq \frac{\delta}{2} - \frac{\delta}{4} = \frac{\delta}{4}.
\end{equation*}Note that $\mb x$ is not random anymore. We now show that $\mb x$ does not visit a sublinear number of sets in the partition $\Qcal$.

We now denote by $(t_k)_{k\geq 1}$ the increasing sequence of all times when 2C1NN makes an error in the prediction of $f^*(x_t)$. Because the event $\Acal$ is satisfied, $\Lcal_{\mb x} (2C1NN,f^*)>\epsilon$, therefore, we can define an increasing sequence of times $(T_l)_{l\geq 1}$ such that
\begin{equation*}
    \frac{1}{T_l}\sum_{t=1}^{T_l} \ell_{01}(2C1NN(\mb x_{<t},\mb y_{<t},x_t),f^*(x_t))> \frac{\epsilon}{2}.
\end{equation*}
For any $l\geq 1$ consider the last index $k = \max\{u,t_u\leq T_l\}$ when 2C1NN makes a mistake. Then we obtain $k > \frac{\epsilon}{2} T_l \geq \frac{\epsilon}{2} t_k$. Considering the fact that $(T_l)_{l\geq 1}$ is an increasing unbounded sequence we therefore obtain an increasing sequence of indices $(k_l)_{l\geq 1}$ such that $t_{k_l}<\frac{2k_l}{\epsilon}$.

At an iteration where the new input $x_t$ has not been previously visited we will denote by $\phi(t)$ the index of the nearest neighbor of the current dataset in the 2C1NN learning rule. Now let $l\geq 1$. We focus on the time $t_{k_l}$. Consider the tree $\Gcal$ where nodes are times $\Tcal:=\{t,\; t\leq t_{k_l},\; x_t\notin\{x_u, u<t\} \}$ for which a new input was visited, where the parent relations are given by $(t,\phi(t))$ for $t\in \Tcal\setminus\{1\}$. In other words, we construct the tree in which a new input is linked to its representant which was used to derive the target prediction. Note that by definition of the 2C1NN learning rule, each node has at most $2$ children and a node is not in the dataset at time $t_{k_l}$ when it has exactly $2$ children.

\paragraph{Step 1.}
We now suppose that the majority of input points on which 2C1NN made a mistake belong to $B(\bar x,r)$ i.e.
\begin{equation*}
    \left|\left\{t\leq t_{k_l},\; \ell_{01}(2C1NN(\mb x_{<t},\mb y_{<t},x_t),f^*(x_t))=1,\; x_t\in B(\bar x,r) \right\}\right| \geq \frac{k_l}{2},
\end{equation*}
or equivalently $|\{k\leq k_l,\; x_{t_k}\in B(\bar x,r)\}|\geq \frac{k_l}{2}$.

Let us now consider the subgraph $\tilde \Gcal$ given by restricting $\Gcal$ only to nodes in the the ball $B(\bar x,r)$ which are mapped to the true value $1$ i.e. on times $\{t\in \Tcal,\; x_t\in B(\bar x,r)\}$. In this subgraph, the only times with no parent are times $t_k$ with $k\leq k_l$ and $x_{t_k}\in B(\bar x,r)$ and possibly time $t=1$. Indeed, if a time in $\tilde \Gcal$ has a parent $\phi(t)$ in $\tilde \Gcal$, the prediction of 2C1NN for $x_t$ returned the correct answer $1$. The converse is also true except for the root time $t=1$ which has no parent in $\Gcal$. Therefore, $\tilde \Gcal$ is a collection of disjoint trees with roots times $\{t_k, \; k\leq k_l, \; x_{t_k}\in B(\bar x,r)\}$---and possibly $t=1$ if $x_1\in B(\bar x,r)$. For a given time $t_k$ with $k\leq k_l$ and $x_{t_k}\in B(\bar x,r)$, we will denote by $\Tcal_k$ the corresponding tree in $\tilde \Gcal$ with root $t_k$. We will say that the $\Tcal_k$ is a \emph{good} tree if all times $t\in \Tcal_k$ of this tree are parent in $\Gcal$ to at most $1$ time from $\Xcal\setminus B(\bar x,r)$ i.e. if 
\begin{equation*}
    \forall t\in \Tcal_k,\quad \left|\left\{u
    \leq t_{k_l},\; \phi(u) = t,\; \rho(x_u,\bar x)\geq r\right\}\right|\leq 1.
\end{equation*}
We denote by $G = \{k\leq k_l,\; x_{t_k}\in B(\bar x,r),\; \Tcal_k \text{ good}\} $ the set of indices of good trees. By opposition, we will say that a tree is \emph{bad} otherwise. We now give a simple upper bound on $N_{\text{bad}}$ the number of bad trees. Note that for any $t\in \Tcal_k$, times in $\left\{u \leq t_{k_l},\; \phi(u) = t,\; \rho(x_u,\bar x)\geq r\right\}$ are times when 2C1NN makes a mistake on $\Xcal\setminus B(\bar x,r)$. Therefore,
\begin{equation*}
    \sum_{k\leq k_l,\;  x_{t_k}\in B(\bar x,r)} \sum_{t\in \Tcal_k} \left|\left\{u<t_{k_l},\; \phi(u) = t,\; \rho(x_u,\bar x)\geq r\right\}\right| \leq \left|\left\{k\leq t_{k_l}, \rho(x_{t_k},\bar x)\geq r\right\}\right| \leq \frac{k_l}{2}
\end{equation*}
because by hypothesis $|\{k\leq k_l,\; x_{t_k}\in B(\bar x,r)\}|\geq \frac{k_l}{2}$. Therefore, since each bad tree contains a node which is parent to at least $2$ times of mistake in $\Xcal\setminus B(\bar x,r)$, we obtain
\begin{equation*}
    N_{\text{bad}} \leq \sum_{k\leq k_l,\;  x_{t_k}\in B(\bar x,r)} \sum_{t\in \Tcal_k} \frac{1}{2}\left|\left\{u<t_{k_l},\; \phi(u) = t,\; \rho(x_u,\bar x)\geq r\right\}\right| \leq  \frac{k_l}{4}.
\end{equation*}
Thus, the number of good trees is $|G|\geq\left|\left\{k\leq k_l,\; x_{t_k}\in B(\bar x,r)\right\}\right|-N_{\text{bad}}\geq \frac{k_l}{4}$. Now note that trees are disjoint, therefore, $\sum_{k\in G} |\Tcal_k|\leq t_{k_l}<\frac{2k_l}{\epsilon}.$
Therefore,
\begin{equation*}
    \sum_{k\in G}\1_{|\Tcal_k|\leq \frac{16}{\epsilon}} = |G| - \sum_{k\in G}\1_{|\Tcal_k|> \frac{16}{\epsilon}}> |G|-\frac{\epsilon}{16} \sum_{k\in G}|\Tcal_k|\geq \frac{k_l}{8}.
\end{equation*}
We will say that a tree $|\Tcal_k|$ is \emph{sparse} if it is good and has at most $\frac{\epsilon}{16}$ nodes. With $S := \{k\in G,\;|\Tcal_k|\leq \frac{16}{\epsilon} \}$ the set of sparse trees, the above equation we have $|S|\geq \frac{k_l}{8}$. We now focus only on sparse trees $\Tcal_k$ for $k\in S$ and analyze their relation with the final dataset $\Dcal_{t_{k_l}}$. Precisely, for a sparse tree $\Tcal_k$, denote $\Vcal_k = \Tcal_k\cap \Dcal_{t_{k_l}}$ the set of times which are present in the final dataset and belong to the tree induced by error time $t_k$. Because each node of $\Tcal_k$ and not present in $\Dcal_{t_{k_l}}$ has at least $1$ children in $\Tcal$, we note that $\Vcal_k\neq\emptyset$. We now consider the path from a node of $\Vcal_k$ to the root $t_k$. We denote by $d(k)$ the depth of this node in $\Vcal_k$ and denote the path by $p_{d(k)}^k\to p_{d(k)-1}^k\to p_0^k=t_k$ where $p_{d(k)}^k\in \Vcal_k$. Then we have,
\begin{equation*}
    d(k)\leq |\Tcal_k|-1 \leq \frac{16}{\epsilon} - 1.
\end{equation*}
Each arc of this path represents the fact that at the corresponding iteration $p_i^k$ of 2C1NN, the parent $x_{p_{i-1}^k}$ was closer from $x_{p_{i}^k}$ than any other point of the current dataset $\Dcal_{p_i^k}$. We will now show that all the points $\{p_{d(k)}^k,\; k\in S\}$ fall in distinct sets of the partition $(A_i)_{i\geq 0}$. Suppose by contradiction that we have $k_1\neq k_2\in S$ falling into the same set $A_i$. Note that because $x_{p_{d(k_1)}^{k_1}},x_{p_{d(k_2)}^{k_2}}\in B(\bar x,r)$, we obtain $A_i\cap B(\bar x,r)\neq \emptyset$. However, the partition $(A_i)_{i\geq 0}$ was constructed so that sets are included totally in either $B(\bar x, r)$, $S(\bar x, r)$ or $\{x\in \Xcal,\; \rho(x,\bar x)>r\}$. Therefore, we obtain $A_i\subset B(\bar x, r)$ and $x^i\in B(\bar x, r)$. We can now apply Lemma \ref{lemma:path_recursion} to $p_{d(k_1)}^{k_1}\to p_{d(k_1)-1}^{k_1}\to \ldots \to p_0^{k_1}$ and $p_{d(k_2)}^{k_2}\to p_{d(k_2)-1}^{k_2}\to \ldots \to p_0^{k_2}$---which we write by convenience $p_{d}\to p_{d-1}\to\ldots\to p_1\to p_0$ and $q_{f}\to q_{f-1}\to\ldots\to q_1\to q_0$---assuming without loss of generality that $p_0<q_0$. Therefore, $\rho(x_{p_{v(0)}},x_{q_0})\leq 2^{f+d}\rho(x_{p_d},x_{q_f}) \leq 2^{f+d+1}r^i$ and $\rho(x_{p_{v(0)}},x_{p_d})\leq 2^{f+d}\rho(x_{p_d},x_{q_f})\leq 2^{f+d+1}r^i$. But recall that these two paths come from sparse trees, so $d,f\leq \frac{16}{\epsilon} - 1$. Hence, $2^{f+d+1}\leq \frac{1}{2}2^{2^5/\epsilon}=\frac{1}{4c_\epsilon}$. Let us now consider $x_{\phi(q_0)}$ the point which induced a mistake in the prediction of $x_{q_0}$, i.e. $\rho(x_{\phi(q_0)},\bar x)\geq r$. Then,
\begin{align*}
    \rho(x_{q_0},x_{\phi(q_0)}) &\geq \rho(x_{\phi(q_0)},x^i) - \rho(x^i,x_{p_d}) - \rho(x_{p_d},x_{p_{v(0)}}) - \rho(x_{p_{v(0)}},x_{q_0}) \\
    &\geq \frac{r^i}{c_\epsilon} - r^i -  \frac{r^i}{4c_\epsilon}- \frac{r^i}{4c_\epsilon} \\
    &\geq \frac{r^i}{4c_\epsilon}
\end{align*}
where in the last inequality we used the fact that $c_\epsilon<\frac{1}{4}$. Recall that we also proved $\rho(x_{p_{v(0)}},x_{q_0})\leq \frac{r^i}{4c_\epsilon}<\rho(x_{q_0},x_{\phi(q_0)})$. However, datapoint $x_{p_{v(0)}}$ is available in dataset $\Dcal_{q_0}$. This contradicts the fact that $x_{\phi(t)}$ was chosen as representant for $x_{q_0}$. This ends the proof that all the points $\{p^k_{d(k)},\; k\in S\}$ fall in distinct sets of the partition $(A_i)_{i\geq 0}$. Therefore,
\begin{equation*}
    |\{i,\; A_i\cap \mb{x}_{\leq t_{k_l}}\neq \emptyset \}|\geq |S|\geq \frac{k_l}{8} \geq \frac{\epsilon}{16}t_{k_l}.
\end{equation*}

\paragraph{Step 2.}
We now turn to the case when the majority of input points on which 2C1NN made a mistake are not in the ball $B(\bar x,r)$ i.e.
\begin{equation*}
    \left|\left\{t\leq t_{k_l},\; \ell_{01}(2C1NN(\mb x_{<t},\mb y_{<t},x_t),f^*(x_t))=1,\; \rho(x_t,\bar x)\geq r \right\}\right| \geq \frac{k_l}{2},
\end{equation*}
or equivalently $|\{k\leq k_l,\; \rho(x_{t_k},\bar x)\geq r\}|\geq \frac{k_l}{2}$. Similarly as the previous case, we consider the graph $\tilde G$ given by restricting $\Gcal$ only to nodes outside the ball $B(\bar x,r)$ i.e. on times $\{t\in \Tcal, \rho(x_t,\bar x)\geq r\}$. Again, $\tilde \Gcal$ is a collection of disjoint trees with root times $\{t_k,\; k\leq k_l,\; \rho(x_{t_k},\bar x)\geq r\}$ (and possibly $t=1$). We denote $\Tcal_k$ the corresponding tree of $\tilde \Gcal$ rooted in $t_k$. Similarly to above, a tree is \emph{sparse} if
\begin{equation*}
    \forall t\in \Tcal_k,\quad \left|\left\{u
    \leq t_{k_l},\; \phi(u) = t,\; \rho(x_u,\bar x)< r\right\}\right|\leq 1 \quad \text{and} \quad |\Tcal_k|\leq \frac{16}{\epsilon}.
\end{equation*}
If $S=\{k\leq k_l,
; \rho(x_{t_k},\bar x)\geq r,\; \Tcal_k\text{ sparse}\}$ denotes the set of sparse trees, the same proof as above shows that $|S|\geq \frac{k_l}{8}$. Again, for any $k\in S$, if $d(k)$ denotes the depth of some node from $\Vcal_k:=\Tcal_k\cap \Dcal_{t_{k_l}}$ in $\Tcal_k$ we have $d(k)\leq  \frac{16}{\epsilon}-1$. For each $k\in S$ we consider the path from this node of $\Vcal_k $ to the root $t_k$: $p_{d(k)}^k\to p_{d(k)-1}^k\to \ldots \to p_0^k=t_k$ where $p_{d(k)}^k\in \Vcal_k$. The same proof as above shows that all the points $\{p^k_{d(k)},\; k\in S,\; \rho(x_{p^k_{d(k)}},\bar x)>r\}$ lie in distinct sets of the partition $(A_i)_{i\geq 0}$.

Indeed, let $p_{d}\to p_{d-1}\to\ldots\to p_1\to p_0$ and $q_{f}\to q_{f-1}\to\ldots\to q_1\to q_0$ two such paths with $\rho(x_{p_d},\bar x)>r$ and $\rho(x_{q_f},\bar x)>r$ and suppose by contradiction that $x_{p_d},x_{q_f}\in A_i$ for some $i\geq 0$. Necessarily, $i\geq 1$ and $\rho(x^i,\bar x)>r$. Lemma \ref{lemma:path_recursion} gives again $\rho(x_{p_{v(0)}},x_{q_0}),\rho(x_{p_{v(0)}},x_{p_d})\leq 2^{f+d}\rho(x_{p_d},x_{q_f})\leq 2^{f+d+1}r^i\leq  \frac{r^i}{4c_\epsilon}$. Then, if $x_{\phi(q_0)}$ is the point that induced a mistake in the prediction of $x_{q_0}$, we have $\rho(x_{\phi(q_0)},\bar x)<r$. Using the definition of $r^i$ we obtain the same computations
\begin{equation*}
    \rho(x_{q_0},x_{\phi(q_0)}) \geq \rho(x_{\phi(q_0)},x^i) - \rho(x^i,x_{p_d}) - \rho(x_{p_d},x_{p_{v(0)}}) - \rho(x_{p_{v(0)}},x_{q_0})\geq \frac{r^i}{4c_\epsilon} > \rho(x_{p_{v(0)}},x_{q_0})
\end{equation*} which contradicts the fact that $x_{\phi(q_0)}$ was used as representant for $x_{q_0}$. This ends the proof that all the points $\{p^k_{d(k)},\; k\in S,\; \rho(x_{p^k_{d(k)}},\bar x)>r\}$ lie in distinct sets of the partition $(A_i)_{i\geq 0}$. 
Suppose $|\{k\in S,\; \rho(x_{p^k_{d(k)}},\bar x)>r\}|\geq \frac{|S|}{2}$, then we have
\begin{equation*}
    |\{i,\; A_i\cap \mb{x}_{\leq t_{k_l}}\neq \emptyset \}|\geq |\{k\in S,\; \rho(x_{p^k_{d(k)}},\bar x)>r\}| \geq \frac{|S|}{2} \geq \frac{k_l}{16}\geq \frac{\epsilon}{32}t_{k_l}.
\end{equation*}

\paragraph{Step 3.}
In this last step, we suppose again that the majority of input points on which 2C1NN made a mistake are not in the ball $B(\bar x,r)$ and that $|\{k\in S,\; \rho(x_{p^k_{d(k)}},\bar x)>r\}|< \frac{|S|}{2}$. Therefore, we obtain
\begin{equation*}
    |\{k\in S,\; \rho(x_{p^k_{d(k)}},\bar x)=r\}| = |S|-|\{k\in S,\; \rho(x_{p^k_{d(k)}},\bar x)>r\}| \geq \frac{|S|}{2} \geq \frac{k_l}{16} \geq \frac{\epsilon}{32}t_{k_l}.
\end{equation*}
We will now make use of the partition $(P_i)_{i\geq 1}$. Because $(n_u)_{u\geq 1}$ is an increasing sequence, let $u\geq 1$ such that $n_{ u+1}\leq t_{k_l}\leq n_{ u+2}$ (we can suppose without loss of generality that $t_{k_0}>n_2$). Note that we have $n_u\leq \frac{\epsilon}{2^6}n_{u+1}\leq \frac{\epsilon}{2^6}t_{k_l}$. Let us now analyze the process between times $n_u$ and $t_{k_l}$. In particular, we are interested in the indices $T=\{k\in S,\; \rho(x_{p^k_{d(k)}},\bar x)=r\}$ and times $\Ucal_u = \{p^k_{d(k)}:\; n_u< p^k_{d(k)}\leq k_l,\; k\in T\}$. In particular, we have
\begin{equation*}
    |\Ucal_u| \geq  |\{k\in S,\; \rho(x_{p^k_{d(k)}},\bar x)=r\}| - n_u  \geq \frac{\epsilon}{32}t_{k_l} -\frac{\epsilon}{2^6}t_{k_l} = \frac{\epsilon}{2^6}t_{k_l}.
\end{equation*}
Because the event $\Ecal_u$ is met, we have
\begin{equation*}
    |\{i,\; P_i(\tau_u)\cap \mb x_{\Ucal_u} \neq \emptyset\}| \leq |\{i,\; P_i(\tau_u)\cap \mb x_{\leq t_{k_l}}\neq \emptyset\}| \leq \frac{\epsilon}{2^7}t_{k_l}.
\end{equation*}
Note that $\mb x_{\Ucal_u}\subset S(\bar x,r)$. Therefore, each of the points in $\mb x_{\Ucal_u}$ falls into one of the sets $(P_i(\tau_u))_{i\geq 1}$. Let $i\geq 1$ such that the set $P_i(\tau_u)$ was visited by $\mb x_{\Ucal_u}$ and consider $T_i = \{k\in T,\; x_{ p^k_{d(k)}}\in A_i\}$. We will show that at least $|T_i|-1$ of the points $\{x_{\phi(t_k)},\; k\in T_i\}$ fall in $B(\bar x,r)\setminus B(\bar x,r-\frac{r}{2^{u+2}})$. 

To do so, let $k_1,k_2\in T_i$. Similarly as above, for simplicity, we will refer to the path $p_{d(k_1)}^{k_1}\to p_{d(k_1)-1}^{k_1}\to\ldots \to p_0^{k_1}$ (resp. $p_{d(k_2)}^{k_2}\to p_{d(k_2)-1}^{k_2}\to \ldots \to p_0^{k_2}$) as $p_{d}\to p_{d-1}\to\ldots\to p_1\to p_0$ (resp. $q_{f}\to q_{f-1}\to\ldots\to q_1\to q_0$), and assume without loss of generality that $p_0<q_0$. Note that by hypothesis, $k_1,k_2\in T_i$, therefore, $\rho(x_{p_d},x^i),\rho(x_{q_f},x^i)\leq \tau_u$ Then, using the above computations yields
\begin{equation*}
    \rho(x_{p_{v(0)}},x_{q_0})\leq 2^{f+d} \rho(x_{p_d},x_{q_f}) \leq 2^{f+d} (\rho(x_{p_d},x^i) + \rho(x_{q_f},x^i))\leq 2^{f+d+1} \tau_u \leq \frac{\tau_u}{4c_\epsilon},
\end{equation*}
where in the last inequality we used the fact that $f,d\leq  \frac{16}{\epsilon}-1$ hence $2^{f+d+1}\leq \frac{1}{4 c_\epsilon}$. Now by definition of a representant, we obtain
\begin{equation*}
    \rho(x_{\phi(q_0)},x_{q_0})\leq \rho(x_{p_{v(0)}},x_{q_0}) \leq \frac{r}{8\cdot 2^u}.
\end{equation*}
Therefore, $\rho(x_{\phi(q_0)},\bar x)\geq \rho(x_{q_0},\bar x)- \rho(x_{\phi(q_0)},x_{q_0}) \geq r- \frac{r}{8\cdot 2^u}$. Because $x_{\phi(q_0)}$ induced a mistake in the prediction for $x_{q_0}$ we have $x_{\phi(q_0)}\in B(\bar x ,r)$. Now order $T_i=\{k_1<\ldots <k_{|T_i|}\}$. We then have $t_{k_1}<\ldots<t_{k_{|T_i|}}$. The argument above then shows that for any $2\leq j\leq |T_i|$, we have $x_{\phi(t_{k_j})}\in B(\bar x,r)\setminus B(\bar x,r-\frac{r}{2^{u+3}})$. Therefore, defining $T' := \{k\in T,\; r-\frac{r}{2^{u+3}}\leq \rho(x_{\phi(t_k)},\bar x)<r\}$ we obtain 
\begin{equation*}
    \left| T'\right| \geq |\Ucal_u|-|\{i,\; P_i(\tau_u)\cap \mb x_{\Ucal_u} \neq \emptyset\}| \geq \frac{\epsilon}{2^7}t_{k_l}.
\end{equation*}
We will now show that all the points in $\{x_{t_k},\; k\in T'\}$ lie in distinct sets of $(P_i)_{i\geq 1}$. Note that because we have $t_{k_l}\leq n_{u+2}$ and because the event $\Fcal_{u+2}$ is met, we have that for any $p,q\in  T'$ that $\rho(x_{\phi(t_p)},x_{\phi(t_q)})> \mu_{u+2}.$ Now suppose by contradiction that $x_{\phi(t_p)},x_{\phi(t_q)}\in P_i$ for some $i\geq 1$. Then, with $l_i$ such that $r-\frac{r}{2^{l_i}}\leq \rho(x^i,\bar x) < r-\frac{r}{2^{l_i+1}}$ we have that 
\begin{equation*}
    x_{\phi(t_p)},x_{\phi(t_q)} \in \left\{x\in \Xcal:\; \rho(x,\bar x) <r-\frac{r}{2^{l_i+2}}\right\}
\end{equation*}
But we know that $ \rho(x_{\phi(t_p)},\bar x)\geq r-\frac{r}{2^{u+3}}$. Therefore we obtain $r-\frac{r}{2^{l_i+2}}> r-\frac{r}{2^{u+3}}$ and hence $l_i\geq u+1$. Recall that $P_i\subset B(x^i,\mu_{l_i+1})$. Therefore, we obtain
\begin{equation*}
    \rho(x_{\phi(t_p)},x_{\phi(t_q)}) \leq \mu_{l_i+1} \leq \mu_{u+2},
\end{equation*}
which contradicts the fact that $\rho(x_{t_p},x_{t_q})> \mu_{u+2}.$ This ends the proof that all points of $\{x_{t_k},\; k\in T'\}$ lie in distinct subsets of $(P_i)_{i\geq 1}$. Now we obtain
\begin{equation*}
    |\{i,\; P_i\cap \mb x_{\leq t_{k_l}} \neq \emptyset\}| \geq |T'| \geq \frac{\epsilon}{2^7}t_{k_l}.
\end{equation*}

\paragraph{Step 4.}
In conclusion, in all cases, we obtain 
\begin{equation*}
    |\{Q\in \Qcal,\; Q\cap \mb x_{\leq t_{k_l}} \neq \emptyset\}| \geq \max(|\{i,\; A_i\cap \mb{x}_{\leq t_{k_l}}\neq \emptyset \}|,|\{i,\; P_i\cap \mb x_{\leq t_{k_l}} \neq \emptyset\}|) \geq \frac{\epsilon}{2^7}t_{k_l}.
\end{equation*}
Because this is true for all $l\geq 1$ and $t_{k_l}$ is an increasing sequence, we conclude that $\mb x$ disproves the $\smv_{(\Xcal,\rho)}$ condition for $\Qcal$. Recall that this holds whenever the event $\Acal\bigcap_{l\geq 1}(\Ecal_l\cap \Fcal_l)$ is met. Thus,
\begin{equation*}
    \Pbb[|\{Q\in \Qcal,\; Q\cap \Xbb_{<T}\}|=o(T)]\leq 1-\Pbb[\Acal\bigcap_{l\geq 1}(\Ecal_l\cap \Fcal_l)] \leq 1-\frac{\delta}{4}<1.
\end{equation*}
This shows that $\Xbb\notin \smv_{(\Xcal,\rho)}$ which is absurd. Therefore 2C1NN is consistent on $f^*$. This ends the proof of the proposition.
\end{proof}

We can now show that 2C1NN is optimistically universal for the binary classification setting, with a similar proof structure to Theorem \ref{thm:4C1NN}. Precisely, we show that under any process $\Xbb\in \smv_{(\Xcal,\rho)}$, the functions on which it is consistent form a $\sigma-$algebra which contains all balls, and as a consequence all Borel sets.

\begin{theorem}
\label{thm:2C1NN_smv}
Let $(\Xcal,\Bcal)$ be a separable Borel space. For the binary classification setting, the learning rule 2C1NN is universally consistent for all processes $\Xbb\in \smv_{(\Xcal,\rho)}$.
\end{theorem}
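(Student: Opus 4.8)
The plan is to follow the proof of Theorem~\ref{thm:4C1NN} almost verbatim, the only genuine change being that the separating partition must now be built from metric balls rather than intervals. Fix a process $\Xbb\in\smv_{(\Xcal,\rho)}$ and consider the family $\Scal_\Xbb:=\{A\in\Bcal:\Lcal_\Xbb(2C1NN,\1_A)=0\ (a.s.)\}$ of Borel sets on which 2C1NN is consistent. I will show $\Scal_\Xbb$ (i) contains every open ball $B_\rho(\bar x,r)$, (ii) is closed under complementation, (iii) is closed under countable unions of \emph{disjoint} sets, and (iv) is closed under finite unions. As observed in Theorem~\ref{thm:4C1NN}, (ii)--(iv) already force $\Scal_\Xbb$ to be a $\sigma$-algebra: any countable union decomposes as $\bigcup_n A_n=\bigsqcup_n(A_n\cap\bigcap_{k<n}A_k^c)$, a disjoint union of sets produced by finitely many complements and finite unions. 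Since $\Xcal$ is a separable metric space, $\Bcal$ is generated by the balls $B_\rho(x^i,q)$ with $(x^i)_i$ a countable dense set and $q\in\Qbb_{>0}$ --- every open set being a countable union of such balls --- so (i) then yields $\Bcal\subset\Scal_\Xbb$, which is the theorem.

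Three of the four properties are immediate. Property (i) is exactly Proposition~\ref{prop:consistent_ball_borel}. Property (ii) holds because 2C1NN is invariant under relabeling of the two classes and $\ell_{01}$ is symmetric: when predicting $\1_{A^c}$ the rule picks the same representant $x_{\phi(t)}$ as for $\1_A$ and errs on one exactly when it errs on the other. Property (iv) follows, as in Theorem~\ref{thm:4C1NN}, from the pointwise bound, valid for $A=A_1\cup A_2$ and any $T$, $\Lcal_\Xbb(2C1NN,\1_A;T)\le\sum_{i=1,2}\frac1T\sum_{t\le T}(\1_{X_t\in A_i}\1_{X_{\phi(t)}\notin A}+\1_{X_t\notin A}\1_{X_{\phi(t)}\in A_i})$, each summand being bounded by $\Lcal_\Xbb(2C1NN,\1_{A_i};T)\to0$ a.s.\ because $A_i\in\Scal_\Xbb$.

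The substance is property (iii), and here I would reuse the entire architecture of the $\sigma$-additivity argument of Theorem~\ref{thm:4C1NN}. Let $(A_i)_i$ be disjoint in $\Scal_\Xbb$, set $A=\bigsqcup_iA_i$, and suppose for contradiction $\Pbb(\Lcal_\Xbb(2C1NN,\1_A)>\epsilon)>\delta'>0$. Consistency on each $A_i$ produces times $T^i$ and events $\Ecal_i$ (of probability $\ge1-\delta'/(8\cdot2^i)$) on which the $A_i$-restricted error stays below $\epsilon_i:=\epsilon/(4\cdot2^i)$ for $T\ge T^i$, together with scales $\eta_i>0$ and events $\Fcal_i$ (of probability $\ge1-\delta'/(8\cdot2^i)$) on which all distinct visited points of $\Xbb_{\le T^i}$ lying in $A_i$ are at mutual distance $>\eta_i$. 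The new ingredient is the partition: instead of chopping $A_i$ into intervals of length $\eta_i$, I subdivide it by the greedy ball cover $P^i_j:=(A_i\cap B(x^j,\eta_i/2))\setminus\bigcup_{k<j}B(x^k,\eta_i/2)$, exactly the device used for $P_i(\tau)$ in Proposition~\ref{prop:consistent_ball_borel}: density of $(x^j)_j$ makes $\{P^i_j\}_j$ a countable measurable partition of $A_i$, and two points of $A_i$ at distance $>\eta_i$ cannot share a ball of radius $\eta_i/2$, hence lie in distinct cells. The countable measurable partition $\Qcal:=\{A^c\}\cup\bigcup_i\{P^i_j:j\ge1\}$ is the one that will contradict $\smv$.

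On a realization of $\Acal\cap\bigcap_i(\Ecal_i\cap\Fcal_i)$ --- of probability bounded below by a positive constant via a union bound --- I pick times $t_k\to\infty$ with $\Lcal_\Xbb(2C1NN,\1_A;t_k)>\epsilon/2$; since only finitely many indices have $T^i>t_k$, subtracting their contributions gives $\frac1{t_k}\sum_{t\le t_k}\sum_{i:T^i>t_k}(\1_{X_t\in A_i}\1_{X_{\phi(t)}\notin A}+\1_{X_t\notin A}\1_{X_{\phi(t)}\in A_i})>\epsilon/4$. Let $\Tcal$ collect the times $t\le t_k$ such that $x_t\in A_i$ for some $i$ with $T^i>t_k$ and either $x_{\phi(t)}\notin A$ or some child $u\le t_k$ of $t$ satisfies $x_u\notin A$; because 2C1NN caps each node at $2$ children, each $t\in\Tcal$ accounts for at most $1+2=3$ of the error terms above, so $|\Tcal|\ge\frac{\epsilon}{12}t_k$. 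Finally the input points $\{x_t:t\in\Tcal\}$ land in distinct cells of $\Qcal$: they sit in pairwise disjoint $A_i$'s, and two of them inside the same $A_i$ are distinct (neither was previously visited) hence $>\eta_i$-apart by $\Fcal_i$, hence in distinct $P^i_j$'s. Thus $|\{Q\in\Qcal:Q\cap\mb x_{\le t_k}\neq\emptyset\}|\ge\frac{\epsilon}{12}t_k$ for all $k$, contradicting $\Xbb\in\smv_{(\Xcal,\rho)}$ with positive probability, so $A\in\Scal_\Xbb$. The only steps needing care --- the main, if modest, obstacle --- all lie in property (iii): choosing the ball radius $\eta_i/2$ so that $\eta_i$-separation survives, using the $1+2$ child cap of 2C1NN where Theorem~\ref{thm:4C1NN} used the $1+4$ of 4C1NN, and verifying, as in Proposition~\ref{prop:consistent_ball_borel}, that the greedy ball cover genuinely partitions each $A_i$.
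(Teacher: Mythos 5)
Your proposal is correct and follows essentially the same route as the paper: the same $\sigma$-algebra argument with balls supplied by Proposition~\ref{prop:consistent_ball_borel}, the same greedy ball-cover partition $P^i_j$ at scale $\eta_i/2$ replacing the interval subdivision, and the same $1+2=3$ counting bound yielding $|\Tcal|\ge\frac{\epsilon}{12}t_k$ to contradict $\smv$. The only (harmless) addition is your explicit remark that closure under complements, finite unions, and disjoint countable unions suffices for a $\sigma$-algebra, which the paper leaves implicit.
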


\begin{proof}
let $\Xbb\in \smv_{(\Xcal,\rho)}$. We will show that 2C1NN is universally consistent on $\Xbb$ by considering the set $\Scal_\Xbb$ of functions for which it is consistent. More precisely, since $\Ycal = \{0,1\}$ in the binary setting, all target functions can be described as $f^* = \1_{ A_{f^*}}$ where $A_{f^*} = f^{<-1>}(\{1\})$. We define $\Scal_\Xbb$ using the corresponding sets:
\begin{equation*}
    \Scal_\Xbb:= \{A\in \Bcal,\quad \Lcal_\Xbb(2C1NN,\1_{\cdot\in A})=0\quad (a.s.) \}
\end{equation*}
By construction we have $\Scal_\Xbb\subset\Bcal$. The goal is to show that in fact $\Scal_\Xbb = \Bcal$. To do so, we will show that $\Scal$ satisfies the following properties
\begin{itemize}
    \item $\emptyset\in \Scal_\Xbb$ and $\Scal_\Xbb$ contains all balls $B(x,r)$ with $x\in \Xcal$ and $r\geq 0$,
    \item if $A\in \Scal_\Xbb$ then $A^c\in \Scal_\Xbb$ (stable to complementary),
    \item if $(A_i)_{i\geq 1}$ is a sequence of disjoint sets of $\Scal_\Xbb$, then $\bigcup_{i\geq 1} A_i\in \Scal_\Xbb$ (stable to $\sigma-$additivity for disjoint sets),
    \item if $A,B\in \Scal_\Xbb$, then $A\cup B\in \Scal_\Xbb$ (stable to union).
\end{itemize}
Together, these properties show that $\Scal_\Xbb$ is a $\sigma-$algebra that contains all open intervals of $\Xcal$. Recall that by definition, $\Bcal$ is the smallest $\sigma-$algebra containing open intervals. Therefore we get $\Bcal\subset \Scal_\Xbb$ which proves the theorem. We now show the four properties.\\

The invariance to complementary and to finite union can be shown with the same proof as Theorem \ref{thm:4C1NN}. Further, we clearly have $\emptyset \in \Scal_\Xbb$. Now let $x\in \Xcal$ and $r\geq 0$, Proposition \ref{prop:consistent_ball_borel} shows that $B(x,r)\in \Scal_\Xbb$.

We now turn to the $\sigma-$additivity for disjoint sets. Let $(A_i)_{i\geq 1}$ is a sequence of disjoint sets of $\Scal_\Xbb$. We denote $A:= \bigcup_{i\geq 1} A_i$. We consider the target function $f^* = \1_{A}$. We write the average loss in the following way,
\begin{equation*}
     \frac{1}{T}\sum_{t=1}^{T}\ell_{01}(2C1NN(\Xbb_{< t},\Ybb_{< t}, X_{t}), f^*(X_t)) = \frac{1}{T}\sum_{t=1}^{T} \1_{X_t\in A}  \1_{X_{\phi(t)}\notin A} + \frac{1}{T}\sum_{t=1}^{T}\1_{X_t\notin A}  \1_{X_{\phi(t)}\in A}.
\end{equation*}
where the first term corresponds to type 1 errors and the second term corresponds to type 2 errors.

We suppose by contradiction that $\mathbb P(\Lcal_\Xbb(2C1NN,f^*)> 0):=\delta>0$ Therefore, there exists $\epsilon>0$ such that $\mathbb P(\Lcal_\Xbb(2C1NN,f^*)> \epsilon)\geq \frac{\delta}{2}$. We denote this event by $\Acal:=\{\Lcal_\Xbb(2C1NN,f^*)>\epsilon\}$. We first analyze the errors induced by one set $A_i$ only. We have 
\begin{align*}
    \frac{1}{T}\sum_{t=1}^{T} ( \1_{X_t\in A_i }  \1_{X_{\phi(t)}\notin A} + \1_{X_t\notin A}  \1_{X_{\phi(t)}\in A_i}) & \leq \frac{1}{T}\sum_{t=1}^{T} (\1_{X_t\in A_i }  \1_{X_{\phi(t)}\notin A_i} + \1_{X_t\notin A_i}  \1_{X_{\phi(t)}\in A_i})\\
    &= \frac{1}{T}\sum_{t=1}^{T} \ell_{01}(2C1NN(\Xbb_{< t},\1_{\Xbb_{< t}\in A_i}, X_{t}), \1_{X_t\in A_i}).
\end{align*}
Then, because 2C1NN is consistent for $\1_{\cdot\in A_i}$, we get
\begin{equation*}
    \frac{1}{T}\sum_{t=1}^{T} ( \1_{X_t\in A_i }  \1_{X_{\phi(t)}\notin A} + \1_{X_t\notin A}  \1_{X_{\phi(t)}\in A_i}) \to 0 \quad (a.s.).
\end{equation*}
We take $\epsilon_i = \frac{\epsilon}{4\cdot 2^i}$. The above equation gives $T^i$ such that
\begin{equation*}
     \mathbb P\left[ \bigcap_{T\geq T^i} \left\{\frac{1}{T}\sum_{t=1}^{T} ( \1_{X_t\in A_i }  \1_{X_{\phi(t)}\notin A} + \1_{X_t\notin A}  \1_{X_{\phi(t)}\in A_i}) <\epsilon_i \right\} \right] \geq 1-\frac{\delta}{8\cdot 2^i}.
\end{equation*}
We will denote by $\Ecal_i$ this event. We now consider the scale of the process $\Xbb_{\leq T^i}$ when falling in $A_i$, by introducing $\eta_i>0$ such that
\begin{equation*}
    \mathbb P\left[ \min_{ \substack{
        t_1,t_2 \leq T^i;\; X_{t_1},X_{t_2}\in A_i; \\
        X_{t_1}\neq X_{t_2}
     }} \rho(X_{t_1},X_{t_2}) > \eta_i \right]  \geq 1-\frac{\delta}{8\cdot 2^i}.
\end{equation*}
We denote by $\Fcal_i$ this event. By the union bound, we have $\mathbb P(\bigcup_{i\geq 1} \Ecal_i^c\cup\bigcup_{i\geq 1} \Fcal_i^c)\leq \frac{\delta}{4}$. Therefore, we obtain $\mathbb P(\Acal\cap \bigcap_{i\geq 1} \Ecal_i\cap\bigcap_{i\geq 1} \Fcal_i)\geq \mathbb P(\Acal) - \mathbb P(\bigcup_{i\geq 1} \Ecal_i^c\cup\bigcup_{i\geq 1} \Fcal_i^c) \geq \frac{\delta}{4}$. We now construct a partition $\Pcal$ obtained by subdividing each set $A_i$ according to scale $\eta_i$. Because $\Xcal$ is separable, there exists a sequence of points $(x^j)_{j\geq 1}$ in $\Xcal$ such that $\forall x\in \Xcal, \inf_{j\geq 1}\rho(x,x^j)=0.$ We construct the following partition of $\Xcal$ given by
\begin{equation*}
    \Pcal\; : \quad A^c\cup \bigcup_{i\geq 1} \bigcup_{j\geq 1}\left\{ \left(B\left(x^j,\frac{\eta_i}{2}\right)\cap A_i\right)\setminus \bigcup_{k<j} B\left(x^k,\frac{\eta_i}{2}\right)\right\}.
\end{equation*}
Let us now consider a realization of $\mb x$ of $\Xbb$ in the event $\Acal\cap \bigcap_{i\geq 1} \Ecal_i\cap\bigcap_{i\geq 1} \Fcal_i$. The sequence $\mb x$ is now not random anymore. Our goal is to show that $\mb x$ does not visit a sublinear number of sets in the partition $\Pcal$.

By construction, the event $\Acal$ is satisfied, therefore there exists an increasing sequence of times $(t_k)_{k\geq 1}$ such that for any $k\geq 1$, $\frac{1}{t_k}\sum_{t=1}^{t_k}\ell_{01}(2C1NN(\mb x_{< t},\1_{\mb x_{< t} \in A}, x_{t}), \1_{x_t\in A}) > \frac{\epsilon}{2}.$ Therefore, we obtain for any $k\geq 1$,
\begin{equation*}
   \sum_{i\geq 1} \frac{1}{t_k}\sum_{t=1}^{t_k} ( \1_{x_t\in A_i }  \1_{x_{\phi(t)}\notin A} + \1_{x_t\notin A}  \1_{x_{\phi(t)}\in A_i}) > \frac{\epsilon}{2}.
\end{equation*}
Also, because the events $\Ecal_i$ are met, we have
\begin{equation*}
    \sum_{i\geq 1;\; t_k\geq T^i} \frac{1}{t_k}\sum_{t=1}^{t_k} ( \1_{x_t\in A_i }  \1_{x_{\phi(t)}\notin A} + \1_{x_t\notin A}  \1_{x_{\phi(t)}\in A_i}) < \sum_{i\geq 1, t_k\geq T^i} \epsilon_i \leq \frac{\epsilon}{4}.
\end{equation*}
Combining the two above equations gives
\begin{equation}
\label{eq:main_bis}
    \frac{1}{t_k}  \sum_{t=1}^{t_k} \sum_{i\geq 1;\; t_k<T^i} ( \1_{x_t\in A_i }  \1_{x_{\phi(t)}\notin A} + \1_{x_t\notin A}  \1_{x_{\phi(t)}\in A_i}) >\frac{\epsilon}{4}.
\end{equation}
We now consider the set of times such that an input point fell into the set $A_i$ with $T^i>t_k$, either creating a mistake in the prediction of 4C1NN or inducing a later mistake within time horizon $t_k$: $ \Tcal:= \bigcup_{i\geq 1;\; T^i>t_k} \Tcal_i$ where 
\begin{equation*}
    \Tcal_i:= \left\{t\leq t_k,\; x_t\in A_i,\; \left(x_{\phi(t)}\notin A \text{ or }\exists t<u\leq t_k \text{ s.t. }\phi(u)=t,\; x_u\notin A\right)\right\}.
\end{equation*}
We now show that all points $x_t$ for $t\in \Tcal$ fall in distinct sets of the partition $\Pcal$. Indeed, because the sets $A_i$ are disjoint, it suffices to check that for any $i\geq 1$ such that $T^i>t_k$, the points $x_t$ for $t\in \Tcal_i$ fall in distinct of the following sets
\begin{equation*}
    P_{i,j}:=\left(B\left(x^j,\frac{\eta_i}{2}\right)\cap A_i\right)\setminus \bigcup_{k<j} B\left(x^k,\frac{\eta_i}{2}\right),\quad j\geq 1.
\end{equation*}
Note that for any $t_1< t_2\in \Tcal_i$ we have $x_{t_1},x_{t_2}\in A_i$ and $x_{t_1}\neq x_{t_2}$. Indeed, we cannot have $x_{t_2}=x_{t_1}$ otherwise 2C1NN would make no mistake at time $t_2$ and $x_{t_2}$ would induce no future mistake either (recall that if an input point was already visited, we use simple memorization for the prediction and do not add it to the dataset). Therefore, because the event $\Fcal_i$ is satisfied, for any $t_1< t_2\in \Tcal_i$ we have $\rho(x_{t_1},x_{t_2})>\eta_i$. Now suppose that $x_{t_1},x_{t_2}$ fall in the same set $P_{i,j}$ for $j\geq 1$, then we have $\rho(x_{t_1},x_{t_2})\leq \rho(x^i,x_{t_1}) + \rho(x^i,x_{t_2}) <\eta_i$, which is absurd. Therefore, all points $\{x_t,\; t\in \Tcal\}$ lie in different sets of the partition $\Pcal$. Therefore,
\begin{equation*}
    |\{P\in \Pcal, P\cap \mb x_{\leq t_k}\neq \emptyset\}| \geq |\Tcal|.
\end{equation*}
We now lower bound $|\Tcal|$, which will uncover the main interest of the learning rule 2C1NN. Intuitively, any input point incurs at most $1+2=3$ mistakes, contrary to the traditional 1NN learning rule. We now formalize this intuition.

\begin{align*}
    \sum_{t=1}^{t_k} \sum_{i\geq 1;\; t_k<T^i} ( \1_{x_t\in A_i }  \1_{x_{\phi(t)}\notin A} &+ \1_{x_t\notin A}  \1_{x_{\phi(t)}\in A_i}) \\
    &= \sum_{t=1}^{t_k} \sum_{i\geq 1;\; t_k<T^i} \left( \1_{x_t\in A_i }  \1_{x_{\phi(t)}\notin A} + \sum_{t<u\leq t_k}\1_{x_u\notin A}  \1_{x_t\in A_i}\1_{\phi(u)=t}\right)\\ 
    & = \sum_{i\geq 1;\; T^i>t_k} \sum_{t\leq t_k,\;x_t\in A_i}  \left(\1_{x_{\phi(t)}\notin A} + \sum_{t<u\leq t_k} \1_{x_u\notin A}\1_{\phi(u)=t}\right)\\
    &\leq \sum_{i\geq 1;\; T^i>t_k} \sum_{t\leq t_k,\;x_t\in A_i} 3\max\left(\1_{x_{\phi(t)}\notin A},\1_{x_u\notin A}\1_{\phi(u)=t},\; t<u\leq t_k\right)\\
    &= 3|\Tcal|
\end{align*}
where in the last inequality we used the fact that a given time $t$ can have at most $2$ children i.e. $|\{u>t, \phi(u)=t\}|\leq 2$ with the 2C1NN learning rule. We now use Equation (\ref{eq:main_bis}) to obtain
\begin{equation*}
     |\{P\in \Pcal, P\cap \mb x_{\leq t_k}\neq \emptyset\}| \geq |\Tcal| \geq \frac{\epsilon}{12}t_k.
\end{equation*}
This holds for any $k\geq 1$. Therefore, because $t_k\to\infty$ as $k\to\infty$ we get $|\{P\in \Pcal, P\cap \mb x_{\leq T}\neq \emptyset\}|\neq o(T).$ Finally, this holds for any realization of $\Xbb$ in the event $\Acal\cap \bigcap_{i\geq 1} \Ecal_i\cap\bigcap_{i\geq 1} \Fcal_i$. Therefore,
\begin{equation*}
    \mathbb P(|\{P\in \Pcal, P\cap \mb x_{\leq T}\neq \emptyset\}|=o(T) )\leq \mathbb P\left[\left(\Acal\cap \bigcap_{i\geq 1} \Ecal_i\cap\bigcap_{i\geq 1} \Fcal_i\right)^c\right] \leq 1-\frac{\delta}{4}<1.
\end{equation*}
Therefore, $\Xbb\notin\smv_{(\Xcal,\rho)}$ which contradicts the hypothesis. This concludes the proof that 
\begin{equation*}
    \Lcal_\Xbb(2C1NN,\1_{ A})=0 \quad (a.s.),
\end{equation*} 
and hence, $\Scal_\Xbb$ satisfies the disjoint $\sigma-$additivity property. This ends the proof of the theorem.
\end{proof}

In particular, Theorem \ref{thm:2C1NN_smv} shows that $\smv_{(\Xcal,\rho)}\subset \suol_{(\Xcal,\rho),([0,1],\ell_{01})}$. Together with Proposition \ref{prop:smv_necessary}, this shows that the set of learnable processes for binary classification is exactly $\smv_{(\Xcal,\rho)}$. As a result, 2C1NN is optimistically universal for binary classification. Applying the reduction from a general bounded output setting to binary classification from Theorem \ref{thm:invariance} \cite{blanchard2021universal} we obtain a full characterization of the set of processes admitting strong universal learning, and obtain that 2C1NN is optimistically universal for general input and output spaces.

\begin{corollary}\label{cor:suol_general}
For any separable Borel space $\Xcal$ and any separable near-metric space $(\Ycal,\ell)$ with $0<\bar \ell<\infty$, we have $\suol_{(\Xcal,\rho),(\Ycal,\ell)}=\smv_{(\Xcal,\rho)}$.
\end{corollary}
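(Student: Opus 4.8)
The plan is to chain together the two inclusions through the binary classification case, for which the hard analytic work has already been carried out in Theorem~\ref{thm:2C1NN_smv}. First I would record the inclusion $\smv_{(\Xcal,\rho)} \subseteq \suol_{(\Xcal,\rho),(\{0,1\},\ell_{01})}$: this is immediate from Theorem~\ref{thm:2C1NN_smv}, which exhibits a single learning rule, namely 2C1NN, that is universally consistent under every process in $\smv_{(\Xcal,\rho)}$ for binary classification; the existence of such a rule is exactly what membership of a process in $\suol$ requires. For the reverse inclusion $\suol_{(\Xcal,\rho),(\{0,1\},\ell_{01})} \subseteq \smv_{(\Xcal,\rho)}$, I would invoke Proposition~\ref{prop:smv_necessary} applied to the output setting $(\{0,1\},\ell_{01})$, which is legitimate since $0 < \bar\ell_{01} = 1 < \infty$. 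Combining the two yields the equality $\suol_{(\Xcal,\rho),(\{0,1\},\ell_{01})} = \smv_{(\Xcal,\rho)}$.

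Next I would transfer this identity to a general separable near-metric output space $(\Ycal,\ell)$ with $0 < \bar\ell < \infty$ by appealing to the reduction of Theorem~\ref{thm:invariance} (from \cite{blanchard2021universal}), which states $\suol_{(\Xcal,\rho),(\Ycal,\ell)} = \suol_{(\Xcal,\rho),(\{0,1\},\ell_{01})}$. Since the set $\smv_{(\Xcal,\rho)}$ depends only on the instance space $(\Xcal,\rho)$ and not on the output setting, concatenating these three identities gives $\suol_{(\Xcal,\rho),(\Ycal,\ell)} = \smv_{(\Xcal,\rho)}$, which is the claim.

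All ingredients are already in hand, so there is no genuine obstacle; the only point requiring care is that both Proposition~\ref{prop:smv_necessary} and Theorem~\ref{thm:invariance} are stated under the hypothesis $0 < \bar\ell < \infty$, which is precisely the assumption of the corollary, so the chaining is valid. The degenerate case $\bar\ell = 0$ is excluded by hypothesis (and is anyway trivial, since then every stochastic process is learnable).
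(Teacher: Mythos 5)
Your proposal is correct and follows exactly the paper's own argument: Theorem~\ref{thm:2C1NN_smv} gives $\smv_{(\Xcal,\rho)}\subset\suol_{(\Xcal,\rho),(\{0,1\},\ell_{01})}$, Proposition~\ref{prop:smv_necessary} gives the reverse inclusion, and Theorem~\ref{thm:invariance} transfers the equality to any separable near-metric output space with $0<\bar\ell<\infty$. No gaps; the chaining is exactly what the paper does after Theorem~\ref{thm:2C1NN_smv}.
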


\begin{corollary}\label{cor:opt_general}
For any separable Borel space $\Xcal$, and any bounded separable near-metric space $(\Ycal,\ell)$, 2C1NN is an optimistically universal learning rule.
\end{corollary}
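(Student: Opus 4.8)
The plan is to reduce everything to the binary classification case, where the hard work has already been done in Theorem \ref{thm:2C1NN_smv}, and then transport the conclusion to general bounded value spaces via the output-space reduction of Theorem \ref{thm:invariance}. First I would dispose of the degenerate case $\bar\ell=0$: since a near-metric is discernible, $\bar\ell=0$ forces $\Ycal$ to be a singleton, so there is a unique target function and every learning rule---in particular 2C1NN---is trivially universally consistent under every process; there is nothing to prove. So from now on assume $0<\bar\ell<\infty$.

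Next I would settle the binary classification setting $(\{0,1\},\ell_{01})$ on the given separable Borel space $\Xcal$. Theorem \ref{thm:2C1NN_smv} states that 2C1NN is universally consistent under every $\Xbb\in\smv_{(\Xcal,\rho)}$, which gives the inclusion $\smv_{(\Xcal,\rho)}\subset\suol_{(\Xcal,\rho),(\{0,1\},\ell_{01})}$. Conversely, Proposition \ref{prop:smv_necessary} gives $\suol_{(\Xcal,\rho),(\{0,1\},\ell_{01})}\subset\smv_{(\Xcal,\rho)}$. Combining the two yields $\suol_{(\Xcal,\rho),(\{0,1\},\ell_{01})}=\smv_{(\Xcal,\rho)}$, and then Theorem \ref{thm:2C1NN_smv} once more says that 2C1NN is universally consistent under \emph{every} process in this set, i.e. under every process for which binary-classification universal learning is possible at all. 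By definition, this is exactly the statement that 2C1NN is optimistically universal for binary classification on $\Xcal$.

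Finally I would invoke Theorem \ref{thm:invariance} with $k=2$: its last clause asserts that if $k$C1NN is optimistically universal for binary classification, then it is optimistically universal for the setting $(\Ycal,\ell)$ as well (and the same theorem gives $\suol_{(\Xcal,\rho),(\Ycal,\ell)}=\suol_{(\Xcal,\rho),(\{0,1\},\ell_{01})}=\smv_{(\Xcal,\rho)}$, which is Corollary \ref{cor:suol_general}). Applying it with the 2C1NN rule from the previous paragraph concludes the proof. The only thing to be careful about is bookkeeping: one must apply Theorem \ref{thm:invariance} to the specific rule 2C1NN (rather than to an abstract optimistically universal rule), using precisely its $k$C1NN-specific conclusion, so that no fresh construction is needed and the concrete nearest-neighbor-type rule is preserved across the reduction. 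All of the genuine difficulty lives upstream---in Proposition \ref{prop:consistent_ball_borel} and Theorem \ref{thm:2C1NN_smv}, and in the reduction of \cite{blanchard2021universal} recorded as Theorem \ref{thm:invariance}---so this final step is essentially a matter of assembling the pieces.
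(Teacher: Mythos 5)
Your proposal is correct and follows essentially the same route as the paper: Theorem \ref{thm:2C1NN_smv} combined with Proposition \ref{prop:smv_necessary} gives $\suol_{(\Xcal,\rho),(\{0,1\},\ell_{01})}=\smv_{(\Xcal,\rho)}$ and hence optimistic universality of 2C1NN for binary classification, after which the $k$C1NN-specific clause of Theorem \ref{thm:invariance} transfers the conclusion to any bounded separable near-metric value space. Your explicit treatment of the degenerate case $\bar\ell=0$ is a small extra care the paper only handles in passing elsewhere, but it changes nothing of substance.
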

This completely closes the open problems in \cite{hanneke2021open} for strong universal learning.

\section{Weak universal learning}
\label{sec:weak_learning}

We now turn to weak universal learning. In this section, we show that the results for a characterization of learnable processes and existence of optimistically universal learning rule for the strong setting can also be adapted to the weak setting. Although the set of learnable processes differ---$\suol\subset\wuol$ in general and $\wuol\subsetneq\suol$ whenever $\Xcal$ is infinite \cite{hanneke2021learning}---we show that the same learning rule 2C1NN is optimistically universal in the weak setting. We start by adapting Proposition \ref{prop:consistent_ball_borel} for the weak setting by showing that 2C1NN is weakly consistent on balls under any process $\Xbb\in\wsmv$.

\begin{proposition}
Let $(\Xcal,\Bcal)$ be a separable Borel space constructed from some metric $\rho$. We consider the binary classification setting $\Ycal =\{0,1\}$ and the $\ell_{01}$ binary loss. For any input process $\Xbb\in \wsmv_{(\Xcal,\rho)}$, for any $x\in \Xcal$, and $r>0$, the learning rule 2C1NN is weakly consistent for the target function $f^* = \1_{B_\rho(x,r)}$.
\end{proposition}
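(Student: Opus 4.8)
The plan is to mirror, almost line for line, the proof of Proposition~\ref{prop:consistent_ball_borel}, replacing every almost-sure use of the $\smv$ property by an in-expectation use of $\wsmv$ combined with Markov's inequality. Fix $\bar x\in\Xcal$, $r>0$ and $f^*=\1_{B_\rho(\bar x,r)}$, and argue by the contrapositive: suppose 2C1NN is not weakly consistent on $f^*$, so that $\limsup_T \Ebb[\Lcal_\Xbb(2\mathrm{C1NN},f^*;T)]>\epsilon$ for some $\epsilon>0$. Writing $M_T$ for the number of prediction mistakes of 2C1NN on $f^*$ up to time $T$ (so $\Lcal_\Xbb(2\mathrm{C1NN},f^*;T)=M_T/T$ in the binary setting), there is an increasing sequence $(T_l)_l$ with $\Ebb[M_{T_l}]>\tfrac{\epsilon}{2}T_l$. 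Since $0\le M_T\le T$, a reverse Markov inequality gives $\Pbb[M_{T_l}\ge\tfrac{\epsilon}{4}T_l]>\tfrac{\epsilon}{4}$; call this event $\Acal_l$. On $\Acal_l$ we have $T_l\le \tfrac{4}{\epsilon}M_{T_l}$, which is the exact analogue of the bound $t_{k_l}<2k_l/\epsilon$ used in Proposition~\ref{prop:consistent_ball_borel}, and it is all that is needed to control sizes and depths of sparse good trees and hence the exponents $2^{f+d+1}\le\tfrac{1}{4c_\epsilon}$ appearing when Lemma~\ref{lemma:path_recursion} is applied.

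Next I would construct exactly the same auxiliary objects as in Proposition~\ref{prop:consistent_ball_borel}: a dense sequence $(x^i)_i$ in $\Xcal$, the sphere partitions $(P_i(\tau))_i$ of $S(\bar x,r)$ at scales $\tau_l=c_\epsilon r 2^{-l-1}$, the partition $(A_i)_{i\ge0}$ of $\Xcal$ and the partition $(P_i)_i$ of $B(\bar x,r)$ of Lemmas~\ref{lemma:P_i_partition} and~\ref{lemma:A_i_partition}, and the resulting product partition $\Qcal$. The only change is the choice of the two families of thresholds. Instead of deducing the integers $n_l$ from almost-sure sublinear visits, I would use $\wsmv_{(\Xcal,\rho)}$ applied to the countable partition $\{\Xcal\setminus S(\bar x,r)\}\cup\{P_i(\tau_l)\}_i$, which gives $\Ebb[|\{i:P_i(\tau_l)\cap\Xbb_{<n}\neq\emptyset\}|]=o(n)$, so one can pick $n_l$ (with $n_{l+1}\ge\tfrac{2^6}{\epsilon}n_l$) such that, for all $n\ge n_l$, Markov's inequality makes the event $\Ecal_l^{(n)}:=\{|\{i:P_i(\tau_l)\cap\Xbb_{<n}\neq\emptyset\}|\le\tfrac{\epsilon}{2^7}n\}$ of probability at least $1-\gamma$, where $\gamma=\gamma(\epsilon)>0$ is a small constant. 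The distance scales $\mu_l$ are chosen, as before, so that $\Fcal_l:=\{\min_{i<j\le n_l,\,X_i\neq X_j}\rho(X_i,X_j)>\mu_l\}$ has probability at least $1-\gamma$ (possible for each fixed $l$ since only finitely many points are involved).

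Then I would rerun the combinatorial analysis of Steps~1--4 of Proposition~\ref{prop:consistent_ball_borel} on an arbitrary realization in the event $\mathcal G_l:=\Acal_l\cap\Ecal_{u}^{(T_l)}\cap\Fcal_{u+2}$, where $u=u(l)$ is chosen with $n_{u+1}\le T_l\le n_{u+2}$, with $T_l$ playing the role of $t_{k_l}$ and $M_{T_l}\ge\tfrac{\epsilon}{4}T_l$ playing the role of $k_l$. Steps~1 and~2 use only the tree structure of 2C1NN (at most $2$ children per node) and the geometry of $(A_i)_{i\ge0}$, which are deterministic, and Step~3 uses $\Ecal_u^{(T_l)}$ and $\Fcal_{u+2}$ precisely where the strong proof uses its own $\Ecal_u$ and $\Fcal_{u+2}$; in every case the conclusion is that such a realization visits at least $\tfrac{\epsilon}{2^7}T_l$ sets of $\Qcal$ by time $T_l$. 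Since $\Pbb[\mathcal G_l]\ge\tfrac{\epsilon}{4}-2\gamma$, which is a positive constant independent of $l$ once $\gamma\le\epsilon/8$, taking expectations yields $\Ebb[|\{Q\in\Qcal:Q\cap\Xbb_{\le T_l}\neq\emptyset\}|]\ge\tfrac{\epsilon}{2^7}T_l\,\Pbb[\mathcal G_l]\ge c'_\epsilon T_l$ for all large $l$. This contradicts $\wsmv_{(\Xcal,\rho)}$ applied to $\Qcal$, which would force this expectation to be $o(T)$. Hence 2C1NN is weakly consistent on $f^*$, and the contrapositive is complete.

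The main obstacle, as in the strong case, is purely bookkeeping rather than conceptual: one must carefully check that the argument of Proposition~\ref{prop:consistent_ball_borel} never secretly exploits that the $\smv$ property holds \emph{almost surely}, but only through the deterministic thresholds $n_l$ and the single pair of events $\Ecal_u$, $\Fcal_{u+2}$, so that each can be replaced by a high-probability event obtained from $\wsmv$ via Markov; and that all constants can be arranged so that $\Pbb[\mathcal G_l]$ remains bounded below uniformly in $l$ (in particular that the reverse-Markov mass $\epsilon/4$ survives subtracting the failure probabilities of $\Ecal$ and $\Fcal$, which forces $\gamma$ to be chosen small in terms of $\epsilon$). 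No new idea beyond the capping mechanism of 2C1NN and the path estimate of Lemma~\ref{lemma:path_recursion} is needed.
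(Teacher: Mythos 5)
Your proposal is correct and follows essentially the same route as the paper's own proof: the paper likewise extracts a constant-probability event $\{\Lcal_\Xbb(2\mathrm{C1NN},f^*;T_l)\geq \tfrac{\epsilon}{2}T_l\}$ via the same reverse-Markov computation, converts the $\wsmv$ expectation bounds into high-probability events $\Ecal$ and $\Fcal_l$ by Markov's inequality, reruns the identical combinatorial Steps 1--4 on the intersection, and contradicts $\wsmv$ by lower-bounding the expected number of visited cells of $\Qcal$ by a constant times $T_l$. The only differences are cosmetic bookkeeping (your single small constant $\gamma$ versus the paper's summable $\epsilon/2^{l+3}$ failure probabilities, and slightly different constants in the mistake-count threshold).
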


\begin{proof}
The proof uses a similar structure to the proof of Proposition \ref{prop:consistent_ball_borel}. We fix $\bar x\in \Xcal$, $r>0$ and $f^*(\cdot) = \1_{B(\bar x,r)}$. We reason by the contrapositive and suppose that 2C1NN is not weakly consistent on $f^*$. We will show that the process $\Xbb$ disproves the $\wsmv_{(\Xcal,\rho)}$ condition.

Because 2C1NN is not weakly consistent for $f^*$, there exists $\epsilon$ and an increasing sequence of times $(T_l)_{l\geq 1}$ such that for any $l\geq 1$,
\begin{equation*}
    \Ebb \Lcal_\Xbb (f_\cdot,f^*;T_l) \geq \epsilon T_l.
\end{equation*}
We now define a partition $\Pcal$. Because $\Xcal$ is separable, there exists a sequence $(x^i)_{i\geq 1}$ of elements of $\Xcal$ which is dense. We focus for now on the sphere $S(\bar x,r)$ and for any $\tau>0$ we take $ (P_i(\tau))_{i\geq 1}$ the sequence of sets included in $S(\bar x,r)$ defined by
\begin{equation*}
    P_i(\tau) := \left(S(\bar x,r)\cap B(x^i,\tau)\right) \setminus \left(\bigcup_{1\leq j<i} B(x^j,\tau)\right) .
\end{equation*}
These sets form a partition of $S(\bar x,r)$ as shown in the proof of Proposition \ref{prop:consistent_ball_borel}. We now pose $\tau_l := c_\epsilon\cdot \frac{r}{2^{l+1}}$, for $l\geq 1$, where $c_\epsilon := \frac{1}{2\cdot 2^{2^5/\epsilon}}$ is a constant dependant on $\epsilon$ only. We also pose $\tau_0=r$. Then, because $\Xbb\in \wsmv_{(\Xcal,\rho)}$, the expected number of sets visited of $\Pcal_i(\tau_l)$ tends to $0$. Therefore, there exists an increasing sequence $(n_l)_{l\geq 1}$ such that for any $l\geq 1$,
\begin{equation*}
    \forall n\geq n_l,\quad \Ebb[|\{i,\; P_i(\tau_l)\cap \Xbb_{< n}\neq \emptyset \} |] \leq \frac{\epsilon^2}{2^{10}} n\quad \text{ and } \quad n_{l+1} \geq \frac{2^6}{\epsilon}n_l
\end{equation*}
Now, for any $l\geq 1$, we now construct $\mu_l>0$ such that
\begin{equation*}
     \mathbb P\left[\min_{i<j\leq n_{l},\; X_i\neq X_j} \rho(X_i,X_j) > \mu_l \right] \geq 1- \frac{\epsilon}{2^{l+3}}.
\end{equation*}
We denote by $\Fcal_l$ this event. Therefore $\Pbb[\Fcal_l^c]\leq \frac{\epsilon}{2^{l+3}}$.
Note that the sequence $(\mu_l)_{l\geq 1}$ is non-increasing. We now define radiuses $(z^i)_{i\geq 1}$ as follows:
\begin{equation*}
    z^i=\begin{cases}
      \mu_{l_i+1}& \text{if }\rho(x^i,\bar x)<r,\text{ where } \frac{r}{2^{l_i+1}} < r-\rho(x^i,\bar x) \leq \frac{r}{2^{l_i}}\\
      0 & \text{if }\rho(x^i,\bar x)\geq r,
    \end{cases}
\end{equation*}
and consider the sets $R_i:=B(x^i,z^i) \cap \left\{x\in \Xcal:\;  \rho(x,\bar x) <r-\frac{r}{2^{l_i+2}}\right\}$. We construct $P_i := R_i \setminus \left(\bigcup_{k<i} R_k \right),$ for $i\geq 1$. By Lemma \ref{lemma:P_i_partition}, $(P_i)_{i\geq 1}$ forms a partition of $B(\bar x,r)$. We now define a second partition $(A_i)_{i\geq 1}$ similarly as in the proof of Proposition \ref{prop:consistent_ball_borel}. We start by defining a sequence of radiuses $(r^i)_{i\geq 1}$ as follows
\begin{equation*}
    r^i=\begin{cases}
      \displaystyle c_\epsilon\inf_{x:\;\rho(x,\bar x)\leq r} \rho(x^i, x) & \text{if }\rho(x^i,\bar x)>r,\\
      \displaystyle c_\epsilon \inf_{x:\;\rho(x,\bar x)\geq r} \rho(x^i, x) & \text{if }\rho(x^i,\bar x)<r,\\
      0 & \text{if }\rho(x^i,\bar x)=r,
    \end{cases}
\end{equation*}
and consider the sets $(A_i)_{i\geq 0}$ given by $A_0 = S(\bar x,r)$ and for $i\geq 1$, $A_i = B(x^i,r^i)\setminus \left(\bigcup_{1\leq j<i} B(x^j,r^j)\right)$. By Lemma \ref{lemma:A_i_partition}, this forms a partition of $\Xcal$. We now formally consider the product partition of $(P_{i})_{i\geq 1}$ and $(A_i)_{i\geq 0}$ i.e.
\begin{equation*}
   \Qcal:\quad  \bigcup_{i\geq 0,\; A_i\subset B(\bar x,r)} \bigcup_{j\geq 1} (A_i\cap P_{j}) \cup  \bigcup_{i\geq 0,\; A_i\subset \Xcal\setminus B(\bar x,r)} A_i.
\end{equation*}
where we used the fact that sets $A_i$ satisfy either $A_i\subset B(\bar x,r)$ or $A_i\subset \Xcal\setminus B(\bar x,r)$. We will show that this partition disproves the $\wsmv_{(\Xcal,\rho)}$ hypothesis on $\Xbb$.\\

We now fix $l_0\geq 1$ such that $T_{l_0}\geq n_2$ and consider $l\geq l_0$. We focus on time $T_l$. Define the event $\Acal:=\{\Lcal_\Xbb(f_\cdot,f^*;T_l)\geq \frac{\epsilon}{2}T_l\}$. Note that we have
\begin{equation*}
    \Ebb \Lcal_\Xbb(f_\cdot,f^*;T_l) \leq \frac{\epsilon}{2}T_l + \Pbb[\Acal]T_l.
\end{equation*}
Therefore, $\Pbb[\Acal]\geq \frac{\epsilon}{2}$. Also, because $(n_u)_{u\geq 1}$ is an increasing sequence, let $u\geq 1$ such that $n_{u+1}\leq T_l\leq n_{u+2}$. We define the event $\Ecal=\{|\{i\; P_i(\tau_u)\cap \Xbb_{\leq T_l}\neq \emptyset\}|\leq \frac{\epsilon}{2^{7}}T_l \}$. Then, we have by construction 
\begin{equation*}
    \frac{\epsilon^2}{2^{10}}T_l\geq \Ebb |\{i\; P_i(\tau_u)\cap \Xbb_{\leq T_l}\neq \emptyset\}| \geq \frac{\epsilon}{2^{7}}T_l \Pbb[\Ecal^c].
\end{equation*}
Therefore, we have $\Pbb[\Ecal^c]\leq \frac{\epsilon}{8}$. Consider a specific realization $\mb x = (x_t)_{t\geq 0}$ of the process $\Xbb$ falling in the event $\Acal\cap \Ecal\cap \bigcap_{l\geq 1}\Fcal_l$. This event has probability
\begin{equation*}
    \Pbb\left[\Acal\cap \Ecal\cap \bigcap_{l\geq 1}\Fcal_l\right] \geq \Pbb[\Acal] - \Pbb[\Ecal^c]-\sum_{l\geq 1}  \Pbb[\Fcal_l^c] \geq \frac{\epsilon}{2} - \frac{\epsilon}{8} - \frac{\epsilon}{8} = \frac{\epsilon}{4}.
\end{equation*}
Note that $\mb x$ is not random anymore. We now show that $\mb x$ visits a large number of sets in the partition $\Qcal$. We now denote by $(t_k)_{k\geq 1}$ the increasing sequence of all times when 2C1NN makes an error in the prediction of $f^*(x_t)$. Define $k_l$ such the last time of error before $T_l$ i.e. $k_l=\max \{k\geq 1,\; t_k\leq T_l\}$. By construction, because $\Acal$ is met we have $k_l \geq \frac{\epsilon}{2}T_l$.

At an iteration where the new input $x_t$ has not been previously visited we will denote by $\phi(t)$ the index of the nearest neighbor of the current dataset in the 2C1NN learning rule. Now let $l\geq 1$. Consider the tree $\Gcal$ where nodes are times $\Tcal:=\{t,\; t\leq T_l,\; x_t\notin\{x_u, u<t\} \}$ for which a new input was visited, where the parent relations are given by $(t,\phi(t))$ for $t\in \Tcal\setminus\{1\}$. Again, each node has at most $2$ children and a node is not in the dataset at time $T_l$ when it has exactly $2$ children.

\paragraph{Step 1.}
We now suppose that the majority of input points on which 2C1NN made a mistake belong to the $B(\bar x,r)$ i.e.
\begin{equation*}
    \left|\left\{t\leq T_l,\; \ell_{01}(2C1NN(\mb x_{<t},\mb y_{<t},x_t),f^*(x_t))=1,\; x_t\in B(\bar x,r) \right\}\right| \geq \frac{k_l}{2},
\end{equation*}
or equivalently $|\{k\leq k_l,\; x_{t_k}\in B(\bar x,r)\}|\geq \frac{k_l}{2}$.

Let us now consider the subgraph $\tilde \Gcal$ given by restricting $\Gcal$ only to nodes in the the ball $B(\bar x,r)$ which are mapped to the true value $1$ i.e. on times $\{t\in \Tcal,\; x_t\in B(\bar x,r)\}$. As in the proof of Proposition \ref{prop:consistent_ball_borel}, $\tilde \Gcal$ is a collection of disjoint trees with roots times $\{t_k, \; k\leq k_l, \; x_{t_k}\in B(\bar x,r)\}$---and possibly $t=1$ if $x_1\in B(\bar x,r)$. For a given time $t_k$ with $k\leq k_l$ and $x_{t_k}\in B(\bar x,r)$, denote $\Tcal_k$ the corresponding tree in $\tilde \Gcal$ with root $t_k$. We will say that the tree $\Tcal_k$ is \emph{sparse} if
\begin{equation*}
    \forall t\in \Tcal_k,\quad \left|\left\{u
    \leq T_l,\; \phi(u) = t,\; \rho(x_u,\bar x)< r\right\}\right|\leq 1 \quad \text{and} \quad |\Tcal_k|\leq \frac{16}{\epsilon}.
\end{equation*}
We denote by $S = \{k\leq k_l,\; \rho(x_{t_k},\bar x)<r,\; \Tcal_k\text{ sparse}\} $ the set of sparse trees. Similarly as in the proof of Proposition \ref{prop:consistent_ball_borel}, we have $|S|\geq \frac{k_l}{8}$. We now focus only on sparse trees $\Tcal_k$ for $k\in S$ and analyze their relation with the final dataset $\Dcal_{T_l+1}$. Precisely, for a sparse tree $\Tcal_k$, denote $\Vcal_k = \Tcal_k\cap \Dcal_{T_l+1}$ the set of times which are present in the final dataset and belong to the tree induced by error time $t_k$. Because each node of $\Tcal_k$ and not present in $\Dcal_{T_l+1}$ has at least $1$ children in $\Tcal$, we note that $\Vcal_k\neq\emptyset$. We now consider the path from a node of $\Vcal_k$ to the root $t_k$. We denote by $d(k)$ the depth of this node in $\Vcal_k$ and denote the path by $p_{d(k)}^k\to p_{d(k)-1}^k\to p_0^k=t_k$ where $p_{d(k)}^k\in \Vcal_k$. Then we have, $ d(k)\leq |\Tcal_k|-1 \leq \frac{16}{\epsilon} - 1.$ The same arguments as in the proof of Proposition \ref{prop:consistent_ball_borel} show that all the points $\{p_{d(k)}^k,\; k\in S\}$ fall in distinct sets of the partition $(A_i)_{i\geq 0}$. Therefore,
\begin{equation*}
    |\{i,\; A_i\cap \mb{x}_{\leq T_l}\neq \emptyset \}|\geq |S|\geq \frac{k_l}{8} \geq \frac{\epsilon}{16}T_l.
\end{equation*}

\paragraph{Step 2.}
We now turn to the case when the majority of input points on which 2C1NN made a mistake are not in the ball $B(\bar x,r)$ i.e.
\begin{equation*}
    \left|\left\{t\leq t_{k_l},\; \ell_{01}(2C1NN(\mb x_{<t},\mb y_{<t},x_t),f^*(x_t))=1,\; \rho(x_t,\bar x)\geq r \right\}\right| \geq \frac{k_l}{2},
\end{equation*}
or equivalently $|\{k\leq k_l,\; \rho(x_{t_k},\bar x)\geq r\}|\geq \frac{k_l}{2}$. Similarly as the previous case, we consider the graph $\tilde G$ given by restricting $\Gcal$ only to nodes outside the ball $B(\bar x,r)$ i.e. on times $\{t\in \Tcal, \rho(x_t,\bar x)\geq r\}$. Again, $\tilde \Gcal$ is a collection of disjoint trees with root times $\{t_k,\; k\leq k_l,\; \rho(x_{t_k},\bar x)\geq r\}$---and possibly $t=1$. We denote $\Tcal_k$ the corresponding tree of $\tilde \Gcal$ rooted in $t_k$. Similarly to above, a tree is \emph{sparse} if
\begin{equation*}
    \forall t\in \Tcal_k,\quad \left|\left\{u
    \leq T_l,\; \phi(u) = t,\; \rho(x_u,\bar x)< r\right\}\right|\leq 1 \quad \text{and} \quad |\Tcal_k|\leq \frac{16}{\epsilon}.
\end{equation*}
If $S=\{k\leq k_l,
; \rho(x_{t_k},\bar x)\geq r,\; \Tcal_k\text{ sparse}\}$ denotes the set of sparse trees, the same proof as above shows that $|S|\geq \frac{k_l}{8}$. Again, for any $k\in S$, if $d(k)$ denotes the depth of some node from $\Vcal_k:=\Tcal_k\cap \Dcal_{t_{k_l}}$ in $\Tcal_k$ we have $d(k)\leq  \frac{16}{\epsilon}-1$. For each $k\in S$ we consider the path from this node of $\Vcal_k $ to the root $t_k$: $p_{d(k)}^k\to p_{d(k)-1}^k\to \ldots \to p_0^k=t_k$ where $p_{d(k)}^k\in \Vcal_k$. The same proof as above shows that all the points $\{p^k_{d(k)},\; k\in S,\; \rho(x_{p^k_{d(k)}},\bar x)>r\}$ lie in distinct sets of the partition $(A_i)_{i\geq 0}$. Suppose $|\{k\in S,\; \rho(x_{p^k_{d(k)}},\bar x)>r\}|\geq \frac{|S|}{2}$, then we have
\begin{equation*}
    |\{i,\; A_i\cap \mb{x}_{\leq T_l}\neq \emptyset \}|\geq |\{k\in S,\; \rho(x_{p^k_{d(k)}},\bar x)>r\}| \geq \frac{|S|}{2} \geq \frac{k_l}{16}\geq \frac{\epsilon}{32}T_l.
\end{equation*}

\paragraph{Step 3.}
In this last step, we suppose again that the majority of input points on which 2C1NN made a mistake are not in the ball $B(\bar x,r)$ and that $|\{k\in S,\; \rho(x_{p^k_{d(k)}},\bar x)>r\}|< \frac{|S|}{2}$. Therefore, we obtain
\begin{equation*}
    |\{k\in S,\; \rho(x_{p^k_{d(k)}},\bar x)=r\}| = |S|-|\{k\in S,\; \rho(x_{p^k_{d(k)}},\bar x)>r\}| \geq \frac{|S|}{2} \geq \frac{k_l}{16} \geq \frac{\epsilon}{32}T_l.
\end{equation*}
We will now make use of the partition $(P_i)_{i\geq 1}$. Recall that $u\geq 1$ was defined such that $n_{ u+1}\leq T_l\leq n_{ u+2}$. Note that we have $n_u\leq \frac{\epsilon}{2^6}n_{u+1}\leq \frac{\epsilon}{2^6}T_l$. Let us now analyze the process between times $n_u$ and $T_l$. In particular, we are interested in the indices $T=\{k\in S,\; \rho(x_{p^k_{d(k)}},\bar x)=r\}$ and times $\Ucal_u = \{p^k_{d(k)}:\;n_u< p^k_{d(k)}\leq k_l,\; k\in T\}$. We have
\begin{equation*}
    |\Ucal_u| \geq  |\{k\in S,\; \rho(x_{p^k_{d(k)}},\bar x)=r\}| - n_u  \geq \frac{\epsilon}{32}T_l -\frac{\epsilon}{2^6}T_l = \frac{\epsilon}{2^6}T_l.
\end{equation*}
Because the event $\Ecal_u$ is met, we have
\begin{equation*}
    |\{i,\; P_i(\tau_u)\cap \mb x_{\Ucal_u} \neq \emptyset\}| \leq |\{i,\; P_i(\tau_u)\cap \mb x_{\leq T_l}\neq \emptyset\}| \leq \frac{\epsilon}{2^7}T_l.
\end{equation*}
The same arguments as in the proof of Proposition \ref{prop:consistent_ball_borel} show that defining $T' := \{k\in T,\; r-\frac{r}{2^{u+2}}\leq \rho(x_{t_k},\bar x)<r\}$ we obtain 
\begin{equation*}
    \left| T'\right| \geq |\Ucal_u|-|\{i,\; P_i(\tau_u)\cap \mb x_{\Ucal_u} \neq \emptyset\}| \geq \frac{\epsilon}{2^7}T_l.
\end{equation*}
We will now show that all the points in $\{x_{t_k},\; k\in T'\}$ lie in distinct sets of $(P_i)_{i\geq 1}$. Note that because we have $T_l\leq n_{u+2}$ and because the event $\Fcal_{u+2}$ is met, we have that for any $p,q\in  T'$ that $\rho(x_{\phi(t_p)},x_{\phi(t_q)})> \mu_{u+2}.$ Now suppose by contradiction that $x_{\phi(t_p)},x_{\phi(t_q)}\in P_i$ for some $i\geq 1$. Then, with $l_i$ such that $r-\frac{r}{2^{l_i}}\leq \rho(x^i,\bar x) < r-\frac{r}{2^{l_i+1}}$ we have that 
\begin{equation*}
    x_{\phi(t_p)},x_{\phi(t_q)} \in \left\{x\in \Xcal:\; \rho(x,\bar x) <r-\frac{r}{2^{l_i+2}}\right\}
\end{equation*}
But we know that $ \rho(x_{\phi(t_p)},\bar x)\geq r-\frac{r}{2^{u+2}}$. Therefore we obtain $r-\frac{r}{2^{l_i+2}}> r-\frac{r}{2^{u+2}}$ and hence $l_i\geq u+1$. Recall that $P_i\subset B(x^i,\mu_{l_i+1})$. Therefore, we obtain $\rho(x_{\phi(t_p)},x_{\phi(t_q)}) \leq \mu_{l_i+1} \leq \mu_{u+2},$ which contradicts the fact that $\rho(x_{\phi(t_p)},x_{\phi(t_q)})> \mu_{u+2}.$ This ends the proof that all points of $\{x_{\phi(t_k)},\; k\in T'\}$ lie in distinct subsets of $(P_i)_{i\geq 1}$. Now we obtain
\begin{equation*}
    |\{i,\; P_i\cap \mb x_{\leq T_l} \neq \emptyset\}| \geq |T'| \geq \frac{\epsilon}{2^7}T_l.
\end{equation*}

\paragraph{Step 4.}
In conclusion, in all cases, we obtain 
\begin{equation*}
    |\{Q\in \Qcal,\; Q\cap \mb x_{\leq T_l} \neq \emptyset\}| \geq \max(|\{i,\; A_i\cap \mb{x}_{\leq T_l}\neq \emptyset \}|,|\{i,\; P_i\cap \mb x_{\leq T_l} \neq \emptyset\}|) \geq \frac{\epsilon}{2^7}T_l.
\end{equation*}
Recall that this holds for any realization $\mb x$ in the event $\Acal\cap \Ecal\cap \bigcap_{l\geq 1}\Fcal_l$. Therefore,
\begin{equation*}
    \Ebb [|\{Q\in \Qcal,\; Q\cap \Xbb_{\leq T_l}\neq \emptyset \}|] \geq \Pbb\left[\Acal\cap \Ecal\cap \bigcap_{l\geq 1}\Fcal_l\right] \frac{\epsilon}{2^7}T_l\geq \frac{\epsilon^2}{2^{9}}T_l.
\end{equation*}
Because this is true for all $l\geq l_0$ and $T_l$ is an increasing sequence, we conclude that $\Xbb\notin \wsmv_{(\Xcal,\rho)}$ which is absurd. Therefore 2C1NN is consistent on $f^*$.
\end{proof}

We now show that 2C1NN is weakly consistent under processes of $\wsmv_{(\Xcal,\rho)}$ for binary classification adapting the proof of Theorem \ref{thm:2C1NN_smv}.

\begin{theorem}
\label{thm:2C1NN_wsmv}
Let $(\Xcal,\Bcal)$ be a separable Borel space constructed from the metric $\rho$. For the binary classification setting, the learning rule 2C1NN is weakly universally consistent for all processes $\Xbb\in \wsmv_{(\Xcal,\rho)}$.
\end{theorem}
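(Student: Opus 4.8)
The plan is to follow the architecture of the proof of Theorem~\ref{thm:2C1NN_smv} at the level of structure, replacing almost-sure statements by statements in expectation. Fix $\Xbb\in\wsmv_{(\Xcal,\rho)}$ and set $\Scal_\Xbb:=\{A\in\Bcal:\ \Ebb[\Lcal_\Xbb(2C1NN,\1_{\cdot\in A};T)]\to 0\}$, the family of sets on which 2C1NN is weakly consistent under $\Xbb$. Since $\Bcal$ is generated by the balls, it suffices to prove that $\Scal_\Xbb$ is a $\sigma$-algebra containing $\emptyset$ and every ball $B(x,r)$. Membership of $\emptyset$ and of all balls is exactly the content of the preceding proposition. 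Stability under complement transfers verbatim from Theorem~\ref{thm:2C1NN_smv}, since 2C1NN is label-invariant and $\ell_{01}$ is symmetric, so the per-step loss on $\1_{\cdot\in A}$ and on $\1_{\cdot\in A^c}$ coincide; stability under finite unions also transfers, because the pointwise domination $\ell_{01}(2C1NN(\cdots),\1_{X_t\in A_1\cup A_2})\le\sum_{i=1}^2(\1_{X_t\in A_i}\1_{X_{\phi(t)}\notin A}+\1_{X_t\notin A}\1_{X_{\phi(t)}\in A_i})$ used there survives taking expectations.

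The substantive step is closure under countable \emph{disjoint} unions, which I would prove by adapting the $\sigma$-additivity argument of Theorem~\ref{thm:2C1NN_smv} to a single finite horizon. Let $(A_i)_{i\ge1}$ be disjoint sets of $\Scal_\Xbb$, $A:=\bigcup_i A_i$, $f^*:=\1_{\cdot\in A}$, and suppose for contradiction that 2C1NN is not weakly consistent for $f^*$; then there are $\epsilon>0$ and an increasing sequence $(T_l)_l$ with $\Ebb[\Lcal_\Xbb(2C1NN,f^*;T_l)]\ge\epsilon T_l$, and by Markov's inequality the event $\Acal_l:=\{\Lcal_\Xbb(2C1NN,f^*;T_l)\ge\frac{\epsilon}{2}T_l\}$ has probability at least $\epsilon/2$. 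For each $i$ I would pick $\epsilon_i=\frac{\epsilon}{4\cdot 2^i}$, weights $\delta_i$ with $\sum_i\delta_i\le\epsilon/8$, a horizon $T^i$ past which $\Ebb[\Lcal_\Xbb(2C1NN,\1_{\cdot\in A_i};T)]$ is small enough that Markov at $T_l\ge T^i$ gives an event $\Ecal_i^{(l)}$ of probability $\ge1-\delta_i$ on which the $A_i$-attributable error up to $T_l$ is below $\epsilon_i T_l$, and a scale $\eta_i>0$ together with an event $\Fcal_i$ of probability $\ge 1-\delta_i$ on which all distinct points of $A_i$ visited up to time $T^i$ are pairwise more than $\eta_i$ apart. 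A union bound produces a good event $\Hcal_l$ of probability $\ge\epsilon/4$ combining $\Acal_l$, the $\Ecal_i^{(l)}$ for $T^i\le T_l$, and all the $\Fcal_i$. On a realization $\mb x$ in $\Hcal_l$, the error up to $T_l$ exceeds $\frac{\epsilon}{2}T_l$ while the sets $A_i$ with $T^i\le T_l$ contribute less than $\sum_i\epsilon_i T_l\le\frac{\epsilon}{4}T_l$, so the sets $A_i$ with $T^i>T_l$ account for more than $\frac{\epsilon}{4}T_l$ of the error (directly or through a later mistake whose representant lies in such an $A_i$); since each datapoint serves as representant for at most $2$ later points, the 2C1NN tree-counting from Theorem~\ref{thm:2C1NN_smv} forces the set $\Tcal$ of times $t\le T_l$ with $x_t\in A_i$ for some $i$ with $T^i>T_l$ that incur or induce a mistake to satisfy $|\Tcal|\ge\frac{\epsilon}{12}T_l$. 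Subdividing each $A_i$ into balls of radius $\eta_i/2$ around a dense sequence of $\Xcal$, and using that $\Xbb_{\le T_l}\subseteq\Xbb_{\le T^i}$ whenever $T^i>T_l$ so that $\Fcal_i$ keeps the scale $\eta_i$ valid, the points $\{x_t:t\in\Tcal\}$ fall in distinct cells of the resulting partition $\Pcal$, whence $|\{P\in\Pcal:\ P\cap\mb x_{\le T_l}\ne\emptyset\}|\ge\frac{\epsilon}{12}T_l$.

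Taking expectations over $\Hcal_l$ then gives
\begin{equation*}
    \Ebb\bigl[\,|\{P\in\Pcal:\ P\cap\Xbb_{\le T_l}\ne\emptyset\}|\,\bigr]\ \ge\ \Pbb[\Hcal_l]\cdot\frac{\epsilon}{12}T_l\ \ge\ \frac{\epsilon^2}{48}\,T_l,
\end{equation*}
for every $l$, contradicting $\Xbb\in\wsmv_{(\Xcal,\rho)}$ since $T_l\to\infty$; hence $A\in\Scal_\Xbb$. This shows $\Scal_\Xbb$ is a $\sigma$-algebra containing all balls, so $\Scal_\Xbb=\Bcal$ and 2C1NN is weakly universally consistent under $\Xbb$. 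I expect the main obstacle to be precisely this disjoint $\sigma$-additivity step: unlike in Theorem~\ref{thm:2C1NN_smv}, there is no almost-sure tail event available, so the whole bookkeeping---which sets $A_i$ have already ``converged'' in expectation, and what scale separates their visited points---must be redone at the single horizon $T_l$ through Markov's inequality, and one must be careful to fix the separation events $\Fcal_i$ at the thresholds $T^i$ while only blaming the sets with $T^i>T_l$ for the residual error, so that the inclusion $\Xbb_{\le T_l}\subseteq\Xbb_{\le T^i}$ is what makes the argument go through.
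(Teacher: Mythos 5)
Your proposal is correct and follows essentially the same route as the paper's proof: the same $\sigma$-algebra reduction, the same treatment of complements and finite unions, and the same Markov-inequality bookkeeping at each horizon $T_l$ (fixing the events $\Ecal_i$ and $\Fcal_i$ at the thresholds $T^i$ and blaming only the sets with $T^i>T_l$) leading to the identical bound $\Ebb[|\{P\in\Pcal:\ P\cap\Xbb_{\le T_l}\ne\emptyset\}|]\ge\frac{\epsilon^2}{48}T_l$. The point you flag as the main obstacle is exactly the one the paper handles the same way.
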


\begin{proof}
Again, we follow a similar proof to that of Theorem \ref{thm:2C1NN_smv}. Let $\Xbb\in \wsmv_{(\Xcal,\rho)}$ and consider the set $\Scal_\Xbb$ of functions for which it is weakly consistent $\Scal_\Xbb:= \{A\in \Bcal,\quad \Ebb\Lcal_\Xbb(2C1NN,\1_{ A})\to 0 \}$. By construction we have $\Scal_\Xbb\subset\Bcal$. The goal is to show that in fact $\Scal_\Xbb = \Bcal$. To do so, we will show that $\Scal$ satisfies the following properties
\begin{itemize}
    \item $\emptyset\in \Scal_\Xbb$ and $\Scal_\Xbb$ contains all balls $B(x,r)$ with $x\in \Xcal$ and $r\geq 0$,
    \item if $A\in \Scal_\Xbb$ then $A^c\in \Scal_\Xbb$ (stable to complementary),
    \item if $(A_i)_{i\geq 1}$ is a sequence of disjoint sets of $\Scal_\Xbb$, then $\bigcup_{i\geq 1} A_i\in \Scal_\Xbb$ (stable to $\sigma-$additivity for disjoint sets),
    \item if $A,B\in \Scal_\Xbb$, then $A\cup B\in \Scal_\Xbb$ (stable to union).
\end{itemize}
Together, these properties show that $\Scal_\Xbb$ is a $\sigma-$algebra that contains all open intervals of $\Xcal$. The invariance to complementary is again due to the fact that 2C1NN is invariant to relabeling. Further, we clearly have $\emptyset \in \Scal_\Xbb$. Now let $x\in \Xcal$ and $r\geq 0$, Proposition \ref{prop:consistent_ball_borel} shows that $B(x,r)\in \Scal_\Xbb$.\\

We now turn to the $\sigma-$additivity for disjoint sets. Let $(A_i)_{i\geq 1}$ is a sequence of disjoint sets of $\Scal_\Xbb$. We denote $A:= \bigcup_{i\geq 1} A_i$. We consider the target function $f^* = \1_{A}$. We write the average loss in the following way,
\begin{equation*}
     \frac{1}{T}\sum_{t=1}^{T}\ell_{01}(2C1NN(\Xbb_{< t},\Ybb_{< t}, X_{t}), f^*(X_t)) = \frac{1}{T}\sum_{t=1}^{T} \1_{X_t\in A}  \1_{X_{\phi(t)}\notin A} + \frac{1}{T}\sum_{t=1}^{T}\1_{X_t\notin A}  \1_{X_{\phi(t)}\in A}.
\end{equation*}
We suppose by contradiction that 2C1NN is not weakly consistent on $f^*$. Then there exists $\epsilon>0$ and an increasing sequence of times $(T_l)_{l\geq 1}$ such that $\Ebb\Lcal_\Xbb(2C1NN,f^*;T_l)\geq \epsilon T_l$. We first analyze the errors induced by one set $A_i$ only. Simililarly to the proof of Theorem \ref{thm:2C1NN_smv} we have 
\begin{equation*}
    \frac{1}{T}\sum_{t=1}^{T} ( \1_{X_t\in A_i }  \1_{X_{\phi(t)}\notin A} + \1_{X_t\notin A}  \1_{X_{\phi(t)}\in A_i}) \leq \frac{1}{T}\sum_{t=1}^{T} \ell_{01}(2C1NN(\Xbb_{< t},\1_{\Xbb_{< t}\in A_i}, X_{t}), \1_{X_t\in A_i}).
\end{equation*}
Then, because 2C1NN is consistent for $\1_{\cdot\in A_i}$, we get
\begin{equation*}
    \Ebb\left[\frac{1}{T}\sum_{t=1}^{T} ( \1_{X_t\in A_i }  \1_{X_{\phi(t)}\notin A} + \1_{X_t\notin A}  \1_{X_{\phi(t)}\in A_i})  \right]\to 0.
\end{equation*}
We take $\epsilon_i = \frac{\epsilon}{4\cdot 2^i}$ and $T^i$ such that
\begin{equation*}
     \forall T\geq T^i,\quad \Ebb \left[\frac{1}{T}\sum_{t=1}^{T} ( \1_{X_t\in A_i }  \1_{X_{\phi(t)}\notin A} + \1_{X_t\notin A}  \1_{X_{\phi(t)}\in A_i})  \right]<\frac{\epsilon_i^2}{2}.
\end{equation*}
We now consider the scale of the process $\Xbb_{\leq T^i}$ when falling in $A_i$, by introducing $\eta_i>0$ such that
\begin{equation*}
    \mathbb P\left[ \min_{ \substack{
        t_1,t_2 \leq T^i;\; X_{t_1},X_{t_2}\in A_i; \\
        X_{t_1}\neq X_{t_2}
     }} \rho(X_{t_1},X_{t_2}) > \eta_i \right]  \geq 1-\frac{\epsilon_i}{2}.
\end{equation*}
We denote by $\Fcal_i$ this event. Thus, $\Pbb[\Fcal_i^c]\leq \frac{\epsilon_i}{2}$. We now construct a partition $\Pcal$ obtained by subdividing each set $A_i$ according to scale $\eta_i$. Because $\Xcal$ is separable, there exists a sequence of points $(x^j)_{j\geq 1}$ in $\Xcal$ such that $\forall x\in \Xcal, \inf_{j\geq 1}\rho(x,x^j)=0.$ We construct the following partition of $\Xcal$ given by
\begin{equation*}
    \Pcal\; : \quad A^c\cup \bigcup_{i\geq 1} \bigcup_{j\geq 1}\left\{ \left(B\left(x^j,\frac{\eta_i}{2}\right)\cap A_i\right)\setminus \bigcup_{k<j} B\left(x^k,\frac{\eta_i}{2}\right)\right\}.
\end{equation*}
We now fix $l\geq 1$ and consider the event $\Acal:=\{\Lcal_\Xbb(2C1NN,f^*;T_l)\geq \frac{\epsilon}{2}\}$. Note that
\begin{equation*}
    \epsilon T_l\leq \Ebb \Lcal_\Xbb(2C1NN,f^*;T_l)\leq \frac{\epsilon}{2}T_l  + \Pbb[\Acal]T_l,
\end{equation*}
which gives $\Pbb[\Acal]\geq \frac{\epsilon}{2}$. We also define the following event
\begin{equation*}
    \Ecal_i = \left\{\frac{1}{T_l}\sum_{t=1}^{T_l} ( \1_{X_t\in A_i }  \1_{X_{\phi(t)}\notin A} + \1_{X_t\notin A}  \1_{X_{\phi(t)}\in A_i}) <\epsilon_i  \right\},
\end{equation*}
for any $i\in I:=\{i\geq 1,\; T_l\geq T^i\}$. Then, we have
\begin{equation*}
    \frac{\epsilon_i^2}{2}\geq \Ebb \left[\frac{1}{T_l}\sum_{t=1}^{T_l} ( \1_{X_t\in A_i }  \1_{X_{\phi(t)}\notin A} + \1_{X_t\notin A}  \1_{X_{\phi(t)}\in A_i})  \right] \geq \epsilon_i \Pbb[\Ecal_i^c],
\end{equation*}
which yields $\Pbb[\Ecal_i^c]\leq \frac{\epsilon_i}{2}$. We will now focus on the event $\Acal\cap \bigcap_{i\in I} \Ecal_i\cap\bigcap_{i\geq 1} \Fcal_i$, which has probability $\Pbb[\Acal\cap \bigcap_{i\in I} \Ecal_i\cap\bigcap_{i\geq 1} \Fcal_i]\geq \Pbb(\Acal) - \sum_{i\in I}\Pbb[\Ecal_i^c]-\sum_{i\geq 1}\Pbb[\Fcal_i^c]\geq \frac{\epsilon}{2}-\frac{\epsilon}{4}=\frac{\epsilon}{4}$. Let us now consider a realization of $\mb x$ of $\Xbb$ in the event $\Acal\cap \bigcap_{i\in I} \Ecal_i\cap\bigcap_{i\geq 1} \Fcal_i$. The sequence $\mb x$ is now not random anymore. We will show that $\mb x$ does visits a linear number of sets in the partition $\Pcal$.

Because the event $\Acal$ is met, we have
\begin{equation*}
   \sum_{i\geq 1} \frac{1}{T_l}\sum_{t=1}^{T_l} ( \1_{x_t\in A_i }  \1_{x_{\phi(t)}\notin A} + \1_{x_t\notin A}  \1_{x_{\phi(t)}\in A_i}) \geq \frac{\epsilon}{2}.
\end{equation*}
Also, because the events $\Ecal_i$ are met, we have
\begin{equation*}
    \sum_{i\in I} \frac{1}{T_l}\sum_{t=1}^{T_l} ( \1_{x_t\in A_i }  \1_{x_{\phi(t)}\notin A} + \1_{x_t\notin A}  \1_{x_{\phi(t)}\in A_i}) \leq \sum_{i\in I} \epsilon_i \leq \frac{\epsilon}{4}.
\end{equation*}
Combining the two above equations gives
\begin{equation}
\label{eq:main_bis_weak}
    \frac{1}{T_l}  \sum_{t=1}^{T_l} \sum_{i\notin I} ( \1_{x_t\in A_i }  \1_{x_{\phi(t)}\notin A} + \1_{x_t\notin A}  \1_{x_{\phi(t)}\in A_i}) >\frac{\epsilon}{4}.
\end{equation}
We now consider the set of times such that an input point fell into the set $A_i$ with $i\notin I$, either creating a mistake in the prediction of 4C1NN or inducing a later mistake within time horizon $T_l$: $ \Tcal:= \bigcup_{i\notin I} \Tcal_i$ where 
\begin{equation*}
    \Tcal_i:= \left\{t\leq T_l,\; x_t\in A_i,\; \left(x_{\phi(t)}\notin A \text{ or }\exists t<u\leq T_l \text{ s.t. }\phi(u)=t,\; x_u\notin A\right)\right\}.
\end{equation*}
Because the events $\Fcal_i$ are met, the same arguments as in the proof of Theorem \ref{thm:2C1NN_smv} show that all points $x_t$ for $t\in \Tcal$ fall in distinct sets of the partition $\Pcal$, i.e. $|\{P\in \Pcal, P\cap \mb x_{\leq t_k}\neq \emptyset\}| \geq |\Tcal|.$ We also obtain with the same arguments
\begin{equation*}
    \sum_{t=1}^{t_k} \sum_{i\notin I} ( \1_{x_t\in A_i }  \1_{x_{\phi(t)}\notin A} + \1_{x_t\notin A}  \1_{x_{\phi(t)}\in A_i})\leq 3|\Tcal|.
\end{equation*}
We now use Equation (\ref{eq:main_bis_weak}) to obtain $|\{P\in \Pcal, P\cap \mb x_{\leq t_k}\neq \emptyset\}| \geq |\Tcal| \geq \frac{\epsilon}{12}T_l.$ Therefore, because this holds for any realization in $\Acal\cap \bigcap_{i\in I} \Ecal_i\cap\bigcap_{i\geq 1} \Fcal_i$ we obtain
\begin{equation*}
    \Ebb[|\{ P\in \Pcal,\; P\cap \Xbb_{\leq T_l}\neq \emptyset \}|] \geq \Pbb\left[\Acal\cap \bigcap_{i\in I} \Ecal_i\cap\bigcap_{i\geq 1} \Fcal_i\right] \frac{\epsilon}{12}T_l \geq \frac{\epsilon^2}{48}T_l.
\end{equation*}
This holds for any $l\geq 1$. Therefore, because $(T_l)_{l\geq 1}$ is an increasing sequence, this shows that $\Xbb\notin \wsmv_{(\Xcal,\rho)}$ which contradicts the hypothesis. This concludes the proof that $A\in \Scal_\Xbb$ and hence, $\Scal_\Xbb$ satisfies the disjoint $\sigma-$additivity property.\\

We now show that $\Scal_\Xbb$ is invariant to finite unions. Let $A_1,A_2\in \Scal_\Xbb$. We consider $A=A_1\cup A_2$ and $f^*(\cdot)=\1_{\cdot\in A}$. Using the same arguments as above, we still have for $T\geq 1$,
\begin{equation*}
    \Ebb\left[ \frac{1}{T}\sum_{t=1}^{T} ( \1_{X_t\in A_i }  \1_{X_{\phi(t)}\notin A} + \1_{X_t\notin A}  \1_{X_{\phi(t)}\in A_i})\right] \to 0.
\end{equation*}
for $i\in\{1,2\}$. But note that for any $T\geq 1$,
\begin{align*}
     \frac{1}{T}\Lcal_\Xbb(2C1NN,f^*;T) &= \frac{1}{T}\sum_{t=1}^{T} \1_{X_t\in A}  \1_{X_{\phi(t)}\notin A} + \frac{1}{T}\sum_{t=1}^{T}\1_{X_t\notin A}  \1_{X_{\phi(t)}\in A}\\
     &\leq \frac{1}{T}\sum_{t=1}^{T} (\1_{X_t\in A_1}+\1_{X_t\in A_2})  \1_{X_{\phi(t)}\notin A} + \frac{1}{T}\sum_{t=1}^{T}\1_{X_t\notin A}  (\1_{X_{\phi(t)}\in A_1}+\1_{X_{\phi(t)}\in A_2})\\
     &= \sum_{i=1}^2 \frac{1}{T}\sum_{t=1}^{T} ( \1_{X_t\in A_i }  \1_{X_{\phi(t)}\notin A} + \1_{X_t\notin A}  \1_{X_{\phi(t)}\in A_i}) .
\end{align*}
Therefore we obtain directly $\Ebb \left[\frac{1}{T}\Lcal_\Xbb(2C1NN,f^*;T)\right]\to 0$. This shows that $A_1\cup A_2\in \Scal_\Xbb$ and ends the proof of the theorem.
\end{proof}

We now turn to the case of a bounded separable output setting $(\Ycal,\ell)$ and show that 2C1NN is weakly optimistically universal.

\begin{theorem}
\label{thm:2C1NN_wsmv_general}
Let $(\Xcal,\Bcal)$ be a separable Borel space constructed from the metric $\rho$. The learning rule 2C1NN is weakly universally consistent for all processes $\Xbb\in \wsmv_{(\Xcal,\rho)}$ and any bounded output setting $(\Ycal,\ell)$.
\end{theorem}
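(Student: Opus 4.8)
The plan is to reduce the general bounded output setting $(\Ycal,\ell)$ to the binary classification case already settled in Theorem~\ref{thm:2C1NN_wsmv}, exactly as Corollary~\ref{cor:opt_general} does in the strong case via Theorem~\ref{thm:invariance}. The only extra ingredient is the weak-setting analogue of the reduction of \citet{blanchard2021universal}; one may either cite it directly or re-derive it, since its proof does not rely on almost-sure convergence in any essential way. I would re-derive it, exploiting the structural feature of $k$C1NN that makes the reduction work: the representant map $t\mapsto\phi(t)$ depends only on the input sequence $\mb x$ and the (label-free) tie-breaking rule, not on the observed values. Hence, fixing $\Xbb\in\wsmv_{(\Xcal,\rho)}$ and a measurable target $f^*:\Xcal\to\Ycal$, on fresh points $\hat Y_t=f^*(X_{\phi(t)})$ and repeated points contribute zero loss, so
\begin{equation*}
\Lcal_\Xbb(2C1NN,f^*;T)=\frac{1}{T}\sum_{t=1}^{T}\ell(f^*(X_{\phi(t)}),f^*(X_t))\,\1[X_t\text{ fresh}].
\end{equation*}

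Next I would discretize the value space. Since $(\Ycal,\ell)$ is separable with $\bar\ell<\infty$, for every $\delta>0$ there is a countable measurable partition $\{B_j\}_{j\ge1}$ of $\Ycal$ of $\ell$-diameter at most $2c_\ell\delta$; pulling back, $C_j:=(f^*)^{-1}(B_j)$ is a countable measurable partition of $\Xcal$ with each $C_j\in\Bcal$. On a fresh point, $\ell(f^*(X_{\phi(t)}),f^*(X_t))\le 2c_\ell\delta$ whenever $X_t$ and $X_{\phi(t)}$ lie in the same cell $C_j$, and at most $\bar\ell$ otherwise, so
\begin{equation*}
\Lcal_\Xbb(2C1NN,f^*;T)\le 2c_\ell\delta+\bar\ell\cdot\frac{1}{T}\sum_{t=1}^{T}\1[X_t\text{ fresh},\ \mathrm{cell}(X_t)\ne\mathrm{cell}(X_{\phi(t)})].
\end{equation*}
It thus suffices to show that the expectation of this \emph{cross-cell} frequency tends to $0$; letting $\delta\to0$ then finishes the proof. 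Note that for each fixed $j$, the quantity $\frac1T\sum_t\1_{X_t\in C_j}\1[X_t\text{ fresh}]\1_{X_{\phi(t)}\notin C_j}$ is bounded above by $\Lcal_\Xbb(2C1NN,\1_{C_j};T)$, whose expectation vanishes by Theorem~\ref{thm:2C1NN_wsmv} since $C_j\in\Bcal$.

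The remaining and main step is to control the countable sum over cells \emph{uniformly}, which I would do by replaying, in a simplified one-directional form, the partition argument from the disjoint $\sigma$-additivity part of Theorem~\ref{thm:2C1NN_wsmv}. Assuming for contradiction that the expected cross-cell frequency is bounded below along a sequence $(T_l)$, set $\epsilon$ proportional to this bound and $\epsilon_j:=\epsilon/(4\cdot2^j)$; using Theorem~\ref{thm:2C1NN_wsmv}, pick horizons $T^j$ beyond which the $j$-th per-cell error has expectation below $\epsilon_j^2/2$, and scales $\eta_j$ separating distinct visited points of $C_j$ up to time $T^j$, all on events of probability at least $1-\epsilon_j/2$. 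On the resulting good event (of probability at least $\epsilon/4$), the contribution of cells with $T^j\le T_l$ is at most $\sum_j\epsilon_j\le\epsilon/4$, so a linear-in-$T_l$ number of fresh points are mispredicted while lying in cells $C_j$ with $T^j>T_l$; since $T_l<T^j$, distinct such points in a common $C_j$ are $\eta_j$-separated, hence fall in distinct atoms of the refined partition obtained by subdividing each $C_j$ at scale $\eta_j$ along a dense sequence. This forces a linear-in-$T_l$ expected number of visited atoms, contradicting $\Xbb\in\wsmv_{(\Xcal,\rho)}$. The main obstacle is exactly this bookkeeping across infinitely many cells at once, handled by the geometrically decaying tolerances $\epsilon_j$; it is genuinely lighter than the strong-case $\sigma$-additivity argument because a cross-cell error is charged directly to the mispredicted point $X_t$ rather than to its representant, so no tree/bounded-degree argument is needed.
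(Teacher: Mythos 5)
Your proposal is correct, but it takes a genuinely different route from the paper's. The paper first proves weak consistency for countable classification $(\Nbb,\ell_{01})$ by decomposing the loss into per-class terms $\Lcal_i(T)\leq \Lcal_\Xbb(2C1NN,\1_{f^*=i};T)$, each of which has vanishing expectation by Theorem~\ref{thm:2C1NN_wsmv}, and then interchanges the countable sum with the limit via a dominated-convergence argument (using $\sum_i\Lcal_i(T)\leq 1$); the general bounded $(\Ycal,\ell)$ case then follows by composing $f^*$ with an $\epsilon$-net quantizer $h$ on $\Ycal$ and paying $2c_\ell\epsilon$. Your value-space discretization is essentially the same as the paper's quantization step, but where the two arguments really diverge is in how the countable sum over cells/classes is controlled: you re-run the $\wsmv$ contradiction argument directly on the pulled-back cells $C_j$ with geometrically decaying tolerances $\epsilon_j$, horizons $T^j$, and separation scales $\eta_j$ — i.e., the disjoint $\sigma$-additivity machinery of Theorem~\ref{thm:2C1NN_wsmv} in a simplified, one-sided form (and you are right that only type-1-style errors appear, so the bounded-degree factor is unnecessary). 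What your route buys is that the limit-sum interchange is handled by explicit uniform bookkeeping rather than by the paper's appeal to dominated convergence, which as stated would require a $T$-uniform summable dominating sequence for $(\Ebb\Lcal_i(T))_i$ that is asserted rather than exhibited; what the paper's route buys is brevity and the reuse of the binary result as a pure black box, plus a countable-classification statement of independent interest. Both arguments are sound in outline; yours is somewhat longer but more self-contained on the one delicate analytic step.
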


\begin{proof}
We fix an output setting $(\Ycal,\ell)$ and let $\Xbb\in\wsmv_{(\Xcal,\rho)}$. We will show that 2C1NN is weakly universally consistent on $\Xbb$ for $(\Ycal,\ell)$. 

We first start by showing that it is weakly universally consistent for classification with countable number of classes $(\Nbb,\ell_{01})$. We fix a target function $f^*:\Xcal\to \Nbb$. For any $i\in \Nbb$ we define the binary function $f^*_i:=\1(f^*(\cdot)=i)$. We define
\begin{equation*}
    \Lcal_i(T):= \sum_{t=1}^T \1_{f^*(x_t)=i} \ell_{01}(f^*(x_{\phi(t)}),f^*(x_t))
\end{equation*}
for all $i\geq 0$. Then,
\begin{equation*}
    \Lcal_i(T) = \frac{1}{T}\sum_{t=1}^T \1_{f^*(x_t)=i} \ell_{01}(f^*_i(x_{\phi(t)}),f^*_i(x_t))
    \leq  \Lcal_\Xbb(2C1NN,f^*_i;T)
\end{equation*}
Therefore, because 2C1NN is weakly universally consistent, we have $\Ebb \Lcal_\Xbb(2C1NN,f^*_i;T)\to 0$, hence $\Ebb \Lcal_i(T)\to 0$ for all $i\geq 0$. Since $\Lcal_i(T)\geq 0$ and $\sum_{i\geq 0} \Lcal_i(T) = \Lcal_\Xbb(2C1NN,f^*;T)\leq 1$, we can apply the dominated convergence theorem and obtain
\begin{equation*}
    \Ebb  \Lcal_\Xbb(2C1NN,f^*;T)\to 0,
\end{equation*}
which proves that 2C1NN is weakly universally consistent for classification with countable number of classes.

We now turn to the general setting $(\Ycal,\ell)$. Let $(y^i)_{i\geq 1}$ be a a dense sequence on $\Ycal$ with respect to $\ell$, let $\epsilon>0$ and consider the function $h(y):=\inf\{i\geq 1:\; \ell(y^i,y)<\epsilon\}$. Then, we have
\begin{align*}
    \ell(y_{\phi(t)},y_t) &\leq \bar \ell \cdot \1_{h(y_{\phi(t)}\neq h(y_t)} + \ell(y_{\phi(t),y_t})\1_{h(y_{\phi(t)}= h(y_t)}\\
    &\leq\bar \ell \cdot \ell_{01}\1_{h\circ f^*(x_{\phi(t)})\neq h\circ f^*(x_t)} + c_\ell(\ell(y_{\phi(t)},y^{h(y_{\phi(t)})})+\ell(y^{h(y_t)},y_t))\\
    &\leq \bar \ell \cdot \ell_{01}\1_{h\circ f^*(x_{\phi(t)})\neq h\circ f^*(x_t)} + 2c_\ell \epsilon.
\end{align*}
This yields $\Lcal_\Xbb(2C1NN,f^*;T)\leq \bar \ell \Lcal_\Xbb(2C1NN,h\circ f^*;T) + 2c_\ell\epsilon$. Because 2C1NN is weakly universally consistent for countably-many classification, we have $\Ebb \Lcal_\Xbb(2C1NN,h\circ f^*;T) \to 0$. Therefore, we obtain
\begin{equation*}
    \limsup_T\Ebb \Lcal_\Xbb(2C1NN,f^*;T) \leq 2c_\ell \epsilon.
    \end{equation*}
This holds for any $\epsilon>0$ therefore, $\Ebb \Lcal_\Xbb(2C1NN,f^*;T)\to 0$, which ends the proof that 2C1NN is weakly universally consistent on $\Xbb$ for the setting $(\Ycal,\ell)$.
\end{proof}

As an immediate consequence, we have $\wsmv_{(\Xcal,\rho)}\subset \wuol_{(\Xcal,\rho),(\Ycal,\ell)}$. Together with Proposition \ref{prop:consistent_interval} we obtain a complete characterization for weak learnable processes.
\begin{corollary}\label{cor:wuol}
For any separable Borel space $(\Xcal,\Bcal)$, and every separable near metric space $(\Ycal,\ell)$ with $0<\bar\ell<\infty$ we have  $\wuol_{(\Xcal,\rho),(\Ycal,\ell)}=\wsmv_{(\Xcal,\rho)}$. In particular, $\suol$ is invariant from the output setup. 
\end{corollary}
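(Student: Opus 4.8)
The plan is to obtain the equality by combining the two inclusions, both of which are already available from earlier in the paper, and then to read off the ``in particular'' claim directly from the shape of the right-hand side.

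First I would dispatch the necessity direction. Proposition~\ref{prop:smv_necessary} already gives $\wuol_{(\Xcal,\rho),(\Ycal,\ell)}\subset\wsmv_{(\Xcal,\rho)}$ for every separable Borel space $\Xcal$ and every separable near-metric output setting $(\Ycal,\ell)$ with $0<\bar\ell<\infty$; no new argument is needed here. For the sufficiency direction I would invoke Theorem~\ref{thm:2C1NN_wsmv_general}, which asserts that the single learning rule $2$C1NN is weakly universally consistent under every process $\Xbb\in\wsmv_{(\Xcal,\rho)}$, for \emph{any} bounded separable near-metric output setting. In particular, for each such $\Xbb$ there exists a learning rule witnessing weak universal consistency under $\Xbb$, namely $2$C1NN, so $\Xbb\in\wuol_{(\Xcal,\rho),(\Ycal,\ell)}$. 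This yields $\wsmv_{(\Xcal,\rho)}\subset\wuol_{(\Xcal,\rho),(\Ycal,\ell)}$, and combining with the previous inclusion gives the desired equality $\wuol_{(\Xcal,\rho),(\Ycal,\ell)}=\wsmv_{(\Xcal,\rho)}$.

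For the final assertion, I would simply observe that the set $\wsmv_{(\Xcal,\rho)}$ is defined purely in terms of the input metric space $(\Xcal,\rho)$ and its countable measurable partitions, with no reference to the output setting $(\Ycal,\ell)$. Hence the characterization above shows that $\wuol_{(\Xcal,\rho),(\Ycal,\ell)}$ is independent of the choice of $(\Ycal,\ell)$, provided the loss remains bounded and nontrivial. The excluded degenerate case $\bar\ell=0$ has already been treated separately (the loss is then identically null, so every process is trivially weakly learnable), so this does not affect the statement.

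The proof is essentially immediate once Theorem~\ref{thm:2C1NN_wsmv_general} is in hand, so there is no genuine obstacle at this stage; all the substantive work lies in the weak-setting analysis of $2$C1NN carried out in Theorems~\ref{thm:2C1NN_wsmv} and~\ref{thm:2C1NN_wsmv_general} and in the necessity result of Proposition~\ref{prop:smv_necessary}. The only point requiring a moment's care is to make sure that both cited results are applied at the full level of generality claimed here --- arbitrary separable Borel $\Xcal$ and arbitrary bounded separable near-metric $(\Ycal,\ell)$ --- which both of them indeed provide.
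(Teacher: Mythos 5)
Your proposal is correct and matches the paper's own (very short) argument: the inclusion $\wsmv_{(\Xcal,\rho)}\subset\wuol_{(\Xcal,\rho),(\Ycal,\ell)}$ is read off from Theorem~\ref{thm:2C1NN_wsmv_general}, the reverse inclusion from Proposition~\ref{prop:smv_necessary}, and the invariance claim from the fact that $\wsmv_{(\Xcal,\rho)}$ makes no reference to $(\Ycal,\ell)$. The only difference is cosmetic: the paper's text cites Proposition~\ref{prop:consistent_interval} for the necessity direction, which appears to be a typo for Proposition~\ref{prop:smv_necessary} --- the result you correctly invoke.
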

In particular, this shows that if $0<\bar\ell<\infty$, 2C1NN is weakly optimistically universal. This result still holds if $\bar\ell=0$ in which case all processes $\Xbb$ are weakly learnable and any learning rule is weakly optimistically universal.

\begin{corollary}\label{cor:opt_weak}
For any separable Borel space $(\Xcal,\Bcal)$ and any bounded separable output setting $(\Ycal,\ell)$, 2C1NN is weakly optimistically universal.
\end{corollary}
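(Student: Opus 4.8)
The plan is to obtain this as an immediate consequence of Corollary~\ref{cor:wuol} and Theorem~\ref{thm:2C1NN_wsmv_general}. Recall that a learning rule is \emph{weakly optimistically universal} precisely when it is weakly universally consistent under every process $\Xbb$ for which weak universal learning is achievable, i.e.\ under every $\Xbb\in\wuol_{(\Xcal,\rho),(\Ycal,\ell)}$. So the goal reduces to checking that 2C1NN is weakly universally consistent on every such $\Xbb$. I would first dispatch the degenerate case $\bar\ell=0$ separately: by discernability of the near-metric $\ell$ this forces $\ell\equiv 0$, hence every learning rule has identically zero average loss and is trivially weakly optimistically universal (and in that case $\wuol$ is the set of all processes).

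For the main case $0<\bar\ell<\infty$, I would invoke Corollary~\ref{cor:wuol} to rewrite $\wuol_{(\Xcal,\rho),(\Ycal,\ell)}=\wsmv_{(\Xcal,\rho)}$. Now fix any $\Xbb\in\wuol_{(\Xcal,\rho),(\Ycal,\ell)}=\wsmv_{(\Xcal,\rho)}$. Theorem~\ref{thm:2C1NN_wsmv_general} states exactly that 2C1NN is weakly universally consistent under every process in $\wsmv_{(\Xcal,\rho)}$ for the given bounded separable output setting $(\Ycal,\ell)$, so in particular 2C1NN is weakly universally consistent under $\Xbb$. Since $\Xbb$ was an arbitrary weakly learnable process, this is precisely the assertion that 2C1NN is weakly optimistically universal.

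There is no genuine obstacle at this stage, since the substantive work is already done: the consistency of 2C1NN on $\wsmv$ processes for arbitrary bounded output in Theorem~\ref{thm:2C1NN_wsmv_general}, and the characterization $\wuol=\wsmv$ in Corollary~\ref{cor:wuol} (whose nontrivial inclusion $\wuol\subseteq\wsmv$ is the necessity direction of Proposition~\ref{prop:smv_necessary}). The only points to be careful about are using the correct direction of the set identity — we only need that learnable processes are $\wsmv$ processes, so that Theorem~\ref{thm:2C1NN_wsmv_general} applies to all of them — and treating the $\bar\ell=0$ corner case on its own.
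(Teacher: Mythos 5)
Your proposal is correct and follows exactly the paper's own (implicit) argument: combine the necessity direction $\wuol_{(\Xcal,\rho),(\Ycal,\ell)}\subset\wsmv_{(\Xcal,\rho)}$ (Proposition~\ref{prop:smv_necessary}, equivalently one inclusion of Corollary~\ref{cor:wuol}) with Theorem~\ref{thm:2C1NN_wsmv_general}, and treat the degenerate case $\bar\ell=0$ separately, just as the paper does in the sentence preceding the corollary. No gaps.
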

This completely closes the main questions on universal online learning \cite{hanneke2021learning,hanneke2021open} as we have now proved Theorem \ref{thm:2C1NN_optimistically_universal} (concatenation of Corollary \ref{cor:opt_general} and \ref{cor:opt_weak}) and Theorem \ref{thm:suol=smv} (concatenation of Corollary \ref{cor:suol_general} and \ref{cor:wuol}).

\section{Conclusion}
\label{sec:conclusion}
In this paper, we provided a strong and weak optimistically universal learning rule 2C1NN, which is a simple variant of the nearest neighbor algorithm. We further gave a characterization of the processes admitting strong or weak universal learning, closing the study of universal online learning with bounded losses. 

The case of unbounded losses was already settled in \cite{hanneke2021learning,blanchard2022universal}, which was shown to be very restrictive because the target functions are unrestricted. It would be interesting to bridge the gap between these two cases by considering \emph{restricted} universal learning. Specifically, by adding an additional constraint on the target functions---for example moment constraints are fairly common in the litterature \cite{gyorfi:02,gyorfi:07}---one could hope to recover the large set of learnable processes $\suol$ characterized in this paper, even for the unbounded loss case. We refer to \cite{blanchard2022universal} for further motivation of this open direction.

In our setting, we assume that the values are generated from the stochastic process $\Xbb$ through a target function $f^*$ and without noise. Another interesting line of research would be to add noise to the value process $\Ybb$. This relates to the Bayes consistency literature in which an objective is to reach the minimal risk, known as the Bayes minimal risk; instead of obtaining exact consistency i.e. vanishing average error rate as considered in this paper. A possible direction would be to find mild independence conditions on the noise---generalizing the i.i.d. setting \cite{tsir2022medoid}---so that there exist learning rules which are Bayes universally consistent under a large set of processes $\Xbb$.

\paragraph{Acknowledgements.}The author is very grateful to Patrick Jaillet, Romain Cosson and Steve Hanneke for very useful discussions and for reviewing the manuscript. This work is being partly funded by ONR grant N00014-18-1-2122.

\bibliography{refs}

\end{document}